\definecolor{linkColor}{HTML}{E74C3C}
\definecolor{pearcomp}{HTML}{B97E29}
\definecolor{citeColor}{HTML}{2980B9}
\definecolor{urlColor}{HTML}{1D2DEC}
\definecolor{conjColor}{HTML}{9ab569}
\newtheoremstyle{break}
  {\topsep}{\topsep}%
  {\itshape}{}%
  {\bfseries}{}%
  {\newline}{}%
\tikzset{
  invisible/.style={opacity=0},
  visible on/.style={alt={#1{}{invisible}}},
  alt/.code args={<#1>#2#3}{%
    \alt<#1>{\pgfkeysalso{#2}}{\pgfkeysalso{#3}}
  },
}
\newcommand{\iid}{i.i.d.\xspace}
\newtheorem{lemma}{\textbf{Lemma}}%
\newtheorem{theorem}{\textbf{Theorem}}%
\newtheorem{proposition}{\textbf{Proposition}}[section]
\newtheorem*{lemmai*}{\textbf{Lemma (informal)}}
 \DeclareMathOperator{\trace}{Tr}
 \DeclareMathOperator{\diag}{diag}
\newcommand{\dataset}{\mathcal{D}}
\newcommand{\loss}{\mathcal{L}}
\newcommand{\rev}{\texttt{rev}}
\newcommand{\vx}{\boldsymbol{x}}
\newcommand{\vy}{\boldsymbol{y}}
\newcommand{\vu}{\boldsymbol{u}}
\newcommand{\ve}{\boldsymbol{e}}
\newcommand{\vb}{\boldsymbol{b}}
\newcommand{\vv}{\boldsymbol{v}}
\newcommand{\valpha}{\boldsymbol{\alpha}}
\newcommand{\vxi}{\boldsymbol{\xi}}
\newcommand{\idx}{\mathcal{I}}
\newcommand{\train}{\text{train}}
\newcommand{\test}{\text{test}}
\newcommand{\total}{\text{total}}
\newcommand{\zero}{\boldsymbol{0}}
\newcommand{\normal}{\mathcal{N}}
\newcommand{\dto}{\to}
\newcommand{\ito}{\leadsto}
\newcommand{\ai}{\texttt{A}_i}
\newcommand{\bi}{\texttt{B}_i}
\newcommand{\ci}{\texttt{C}_i}
\newcommand{\tokenA}{\texttt{A}}
\newcommand{\tokenB}{\texttt{B}}
\newcommand{\tokenC}{\texttt{C}}
\newcommand{\tokeni}{\texttt{i}}
\newcommand{\rone}{\texttt{R}_1}
\newcommand{\rtwo}{\texttt{R}_2}
\crefname{assumption}{assumption}{assumptions}
\crefname{table}{table}{tables}
\crefname{claim}{claim}{claims}
\renewcommand{\cite}[1]{\citep{#1}}
\newcommand{\E}{\mathbb{E}}
\newcommand{\V}{\mathcal{V}}
\newcommand{\R}{\mathbb{R}}
\definecolor{cm}{RGB}{0,0,200}
\definecolor{purple}{RGB}{200,0,200}
\definecolor{b2}{RGB}{51,153,255}
\definecolor{mygreen}{RGB}{80,180,0}
\newcommand{\vast}{\bBigg@{2.5}}
\newcommand{\Vast}{\bBigg@{5}}
\def\poly{\operatorname{poly}}
\title{Towards a Theoretical Understanding of the `Reversal Curse' via Training Dynamics}
\author{Hanlin Zhu\footnote{equal contributions.}$^{* 1, 2}$ \qquad   Baihe Huang$^{* 1}$ \qquad Shaolun Zhang$^{1}$ \qquad Michael Jordan$^1$   \\  Jiantao Jiao$^1$ \qquad Yuandong Tian$^2$ \qquad Stuart Russell$^1$  \\
\\
$^1$UC Berkeley \qquad  $^2$Meta AI  \\
\\
\texttt{\{hanlinzhu,baihe\_huang,shaolun\_zhang,jiantao\}@berkeley.edu} \\
\texttt{\{jordan,russell\}@cs.berkeley.edu}
\\
\texttt{yuandong@meta.com}
}
\date{\today}
\pgfplotsset{compat=1.18} 
\begin{document}

\maketitle

\begin{abstract}
Auto-regressive large language models (LLMs) show impressive capacities to solve many complex reasoning tasks while struggling with some simple logical reasoning tasks such as inverse search: when trained on ``A is B'', LLM fails to directly conclude ``B is A'' during inference even if the two sentences are semantically identical, which is known as the ``reversal curse''~\citep{berglund2023reversal}. In this paper, we theoretically analyze the reversal curse via the training dynamics of (stochastic) gradient descent for two auto-regressive models: (1) a bilinear model that can be viewed as a simplification of a one-layer transformer; (2) one-layer transformers using the framework of \citet{tian2023scan}. Our analysis reveals a core reason why the reversal curse happens: the (effective) weights of both auto-regressive models show \emph{asymmetry}, i.e., the increase of weights from a token $A$ to token $B$ during training does not necessarily cause the increase of the weights from $B$ to $A$, which is a natural and expected behavior of most auto-regressive models but unavoidably causes the reversal curse. Moreover, our analysis can be naturally applied to other logical reasoning tasks such as chain-of-thought (COT)~\citep{wei2022chain}. We show the necessity of COT, i.e., a model trained on ``$A\to B$'' and ``$B \to C$'' fails to directly conclude ``$A \to C$'' without COT (also empirically observed by \citet{allen2023physics}), for one-layer transformers via training dynamics, which provides a new perspective different from previous work \citep{feng2024towards} that focuses on expressivity. Finally, we also conduct experiments to validate our theory on multi-layer transformers under different settings.

\end{abstract}

\section{Introductions}
\label{sec:intro}

Large language models (LLMs) have shown great performance in solving complex reasoning tasks that require multiple reasoning steps through in-context learning (ICL), such as zero-shot learning~\citep{reynolds2021prompt,kojima2022large}, few-shot learning~\citep{brown2020language,wei2022chain,wang2022self}, or via further fine-tuning~\citep{nye2021show,cobbe2021training,zelikman2022star}. However, without the above inference-time techniques or model fine-tuning (probably combined with data manipulations), an auto-regressive LLM might struggle with simple logical reasoning tasks that require multiple reasoning steps learned during training separately~\citep{allen2023physics}, where the reversal curse~\citep{berglund2023reversal} serves as a well-known example. 

The reversal curse refers to the phenomenon that an auto-regressive LLM that learns ``$A \to B$'' (e.g., \emph{Tom is the parent of John}) during training fails to generalize to the reverse direction ``$B \gets A$'' (e.g., \emph{John is the child of Tom}) even if the pair of relationship ``$\to$'' and ``$\gets$'' are reverse to each other and the two sentences are semantically identical. Although some previous works propose different methods to mitigate the reversal curse, including reversing the training dataset~\citep{guo2024mitigating,golovneva2024reverse} and training on different objectives such as autoregressive blank infilling~\citep{lv2023we}, these methods might negatively affect the model performance on other tasks since they either alter the dataset or the model architecture. Without dataset manipulation or changing the auto-regressive nature (causal structure) of the model,
there are two other candidate solutions to mitigate the reversal curse. 

First, one might constrain the model parameters to satisfy a higher-level regularity for specific relationships. For example, a reversal-type regularity can be viewed as a pair of relationships ($\to$, $\gets$) and two sets $\mathcal{A}, \mathcal{B}$ such that a model trained on ``$A\to B$'' will also increase its probability of ``$B\gets A$'' for all $A \in \mathcal{A}, B \in \mathcal{B}$, which induces a subspace of model parameters that satisfy this regularity. If one can train the model within this subspace, then training on ``$A\to B$'' can, by definition, help to learn ``$B\gets A$''. However, for a general LLM, it is extremely challenging to find the subspace and manually hard-code the constraint during optimization even for one pair of relationships, not to mention there are numerous relationships. Since it is intractable to manually hard-code the constraints to the model parameter, one can alternatively expect the model to learn the higher-level regularity by training samples under unconstrained optimization. However, this is also hard, according to our analysis, through the popular cross-entropy (CE) loss that aims to maximize the next token prediction probability for the models studied in our paper.

Second, one can use a different loss function which is ``symmetric'', rather than the popular CE loss. However, the ``symmetric'' loss might drive the model to learn meaningless sentences. For example, when trained on the sentence ``John is tall'', a ``symmetric'' loss function might drive the model to learn ``tall is John'', which is not what we expect. To prevent the model from the above undesired behavior, in practice, CE loss is still widely-used.

Therefore, in this paper, we analyze the reversal curse via training dynamics of the widely-used unconstrained optimization for the CE loss. We summarize our main contributions as follows:

{
\setlength{\leftmargini}{0.3cm}
\begin{itemize}
    \item We theoretically analyze the reversal curse where training or test sequences have the form ``$A \to B$'' or ``$B \gets A$'' via training dynamics of (stochastic) gradient descent under two auto-regressive models: a bilinear model (\Cref{sec:training_dynamics_bilinear}) and one-layer transformers under certain assumptions similar to \cite{tian2023scan} (\Cref{sec:one_layer_transformer}). The analysis of both models reveals that the widely-used unconstrained optimization for CE loss leads to model weights \emph{asymmetry}, i.e., the increase of (effective) weights (after reparameterization) from the token $A$ to token $B$\footnote{The weights from $A$ to $B$ can be viewed as the logits of token $B$ when the input is $A$.} during training does not necessarily cause the increase of the weights from $B$ to $A$, which further causes the reversal curse. 
    Although the (effective) weights from $A$ to $B$ and from $B$ to $A$ might be related to some extent due to reparameterization, their correlation is weak and thus show asymmetry as empirically verified in \Cref{sec:exp}.

    \item The techniques we used to analyze the reversal curse can be applied to other logical reasoning tasks. In particular, we use the above framework to analyze chain-of-thought (COT)~\citep{wei2022chain}, and we show that a model trained on ``$A \dto B$'' and ``$B \dto C$'' separately struggles to directly conclude ``$A \ito C$'' without COT even if it is logically true (\Cref{subsec:cot_and_ac}). 
    Different from the previous work \cite{feng2024towards} that theoretically studies COT through the expressivity of transformers, our work provides a new perspective through training dynamics.

    \item We also empirically validate our theoretical results on multi-layer transformers (\Cref{sec:exp}).
\end{itemize}
}

The \emph{asymmetry} of auto-regressive model weights caused by widely-used unconstrained optimization for CE loss indicates that auto-regressive LLMs might not automatically deduce certain types of conclusions using separate knowledge learned during training under current popular training paradigms: to make a model predicting token $B$ where the input token is $A$, the model might need to see $B$ following $A$ in the same sequence during the training set. This also highlights the importance of ICL, data augmentation, or planning for LLMs with the current popular causal transformer-based structures to solve complex reasoning tasks.

\subsection{Related works}
\label{subsec:related_work}

\paragraph{LLM Reasoning.} The strong performance of LLMs on reasoning tasks~\citep{brown2020language,cobbe2021training,huang2022language,kojima2022large,jung2022maieutic,wei2022chain,han2024inductive,wang2024understanding} has prompted many studies on the reasoning capabilities of LLMs. 
\cite{xie2021explanation} argues that transformers perform implicit Bayesian inference in ICL.
\cite{olsson2022context} shows that transformers implement a specific type of circuits called ``induction heads'' that are key to the ICL abilities of LLMs. 
\cite{nichani2024transformers} proves that causal structures are encoded in transformer layers during the training dynamics. 
\cite{brinkmann2024mechanistic} identifies a backward chaining mechanism of transformers in deductive reasoning. 
Apart from in-context reasoning, LLMs still demonstrate limitations in other types of reasoning tasks~\citep{valmeekam2022large,chang2023survey,zhang2024transformer}.

\paragraph{Reversal Curse. } 
\cite{berglund2023reversal} identifies the phenomenon of reversal curse. This drawback of LLMs is also demonstrated in \cite{qi2023investigation}. 
\cite{allen2023physics} studies a similar phenomenonin which LLMs face difficulty in manipulating already learned knowledge. Several paper studies eliminating the reversal curse by extending causal attention to bidirectional attention~\citep{lv2023we}, training on reversed samples~\citep{golovneva2024reverse}, permuting semantic units~\citep{guo2024mitigating}, or introducing reverse logic data~\citep{luo2024chain}. 
Given all the empirical works, theoretical analysis of the reversal curse phenomenon remains scarce.

\paragraph{Expressivity of LLMs.} 
There is a long line of works~\citep{yun2019transformers,bhattamishra2020ability,bhattamishra2020computational,dehghani2018universal,perez2021attention,edelman2022inductive,elhage2021mathematical,likhosherstov2021expressive,akyurek2022learning,zhao2023transformers,yao2021self,anil2022exploring,barak2022hidden,garg2022can,von2022transformers,bai2023transformers,olsson2022context,li2023closeness} studying the behavior of LLMs through the expressivity of transformers. 
It has been shown that transformers can implement simple functions such as sparse linear functions, two-layer neural networks, and decision trees~\citep{garg2022can}, gradient descent~\citep{akyurek2022learning,bai2023transformers,von2023transformers}, automata~\citep{liu2022transformers}, Turing machines~\citep{wei2022statistically}, variational inference~\citep{mei2023deep}, and bandit algorithms~\citep{lin2023transformers}. Different from \cite{feng2024towards} that study COT via expressivity, we analyze reversal curse and COT via training dynamics.

\paragraph{Training dynamics of LLMs.} 
There are rich literatures in the optimization of attention layers~\citep{zhang2020why,hron2020infinite,yang2022tensor,boix2023transformers,bietti2023birth,jelassi2022vision,snell2021approximating}. 
\cite{mahankali2023one,zhang2023trained} study the dynamics of a single linear attention layer in in-context linear regression. 
\cite{fu2024can} proves convergence of one-layer transformers in random feature regime. 
\cite{huang2023context} shows the convergence of gradient descent on one-layer transformers in in-context linear regression with orthogonal data. 
\cite{tian2023scan} studies the convergence of one-layer transformers in a class of next-token prediction tasks. 
\cite{tian2023joma} studies training dynamics of multi-layer transformers. 
\cite{nichani2024transformers} studies gradient descent on a class of two-layer transformers in in-context learning tasks with latent causal structures. 
Our paper studies the reversal curse via training dynamics under both bilinear settings and one-layer transformers. For one-layer transformers, we use the same framework as \cite{tian2023scan} without the need for certain technical assumptions such as long input sequences, different learning rates for different parameters (except for \Cref{app:subsec_reversal_four_token}), or weak correlations that are required for \cite{tian2023scan}. Besides, we focus on the generalization ability of models for logical reasoning tasks while \cite{tian2023scan} mainly focus on optimization, and we identify the asymmetry and intransitivity properties of model weights, which are the core reasons for the failure of LLM for certain types of logical reasoning. Moreover, our analysis of the bilinear model only requires the embedding to be almost orthonormal, while \cite{tian2023scan} essentially assumed the embedding vectors to be fixed and one-hot.

\section{Preliminaries}
\label{sec:prelim}

\paragraph{Basic notations.} For any integer $N > 0$, we use $[N]$ to denote the set $\{1, 2, \ldots, N\}$. Let $\mathbb{R}$, $\mathbb{N}$ denote the set of real numbers and natural numbers, respectively. For real variables $x_1, \ldots, x_n$, we use $\poly(x_1, \ldots, x_n)$ to denote the polynomial of $x_1, \ldots, x_n$. We use $f(n) \lesssim g(n)$ if there exists a constant $C > 0$ s.t. $f(n) \leq C g(n), \forall n$; we say $g(n) \gtrsim f(n)$ if $f(n) \lesssim g(n)$. 

 We use $\ve_i$ to denote one-hot vectors where only the $i$-th entry of $\ve_i$ equals one and all other entries are zero. We use $\boldsymbol{1}$ to denote all-one vectors, $\zero$ to denote zero vectors or zero matrices, and $I$ to denote the identity matrix. We will also add subscripts when we want to explicitly show the dimension, such as $\zero_d$, $I_d$ for $d$-dimensional zero vector and $d \times d$ identity matrix. We use $\otimes$ to denote tensor product of vectors or matrices and use $\vx^{\otimes 2}$ and $A^{\otimes 2}$ to denote $\vx \otimes \vx$ and $A \otimes A$ for vector $\vx$ and matrix $A$.

We use $\mathcal{N}(\boldsymbol{\mu}, \Sigma)$ (or adding subscripts such as $\mathcal{N}_d(\cdot, \cdot)$ if we want to show dimensions explicitly) to denote the (multi-variate) Gaussian distribution with mean $\boldsymbol{\mu}$ and covariance $\Sigma$.  Also, we use $\Delta(\mathcal{X})$ to denote the set of distributions over a set $\mathcal{X}$ and use $\mathbb{E}[\cdot]$ to denote expectation. For any dataset $\mathcal{D} = \{x_1, x_2, \ldots, x_n\}$ where $x_i \in \mathcal{X}$ and a function $f: \mathcal{X} \to \mathbb{R}$, we define the empirical expectation over the dataset as $\mathbb{E}_{\dataset}[f] = \frac{1}{n}\sum_{i=1}^n f(x_i)$. See additional notations in \Cref{app:sec_add_notation}.

\begin{table}[htbp]
    \centering
    \begin{tabular}{cccccc}
        \toprule
           Entities & Forward & Backward & Direct & Indirect & Others \\
        \midrule
          $\tokenA$, $\tokenB$, $\tokenC$, $\ai$, $\bi$, $\ci$ & $\to$ & $\gets$ & $\dto$ & $\ito$ & $\rone$, $\rtwo$ \\
        \bottomrule
        \\
    \end{tabular}
    \caption{Notations for tokens in \Cref{sec:one_layer_transformer}. ``$\to$'' and ``$\gets$'' denote forward and backward relationships for the reversal curse. ``$\dto$'' and ``$\ito$'' denote direct and indirect implication for COT. $\rone$ and $\rtwo$ are relationship tokens in \Cref{subsec:role_of_attention}. $\tokenA$, $\tokenB$, $\tokenC$, $\ai$, $\bi$, $\ci$ denote tokens representing entities.}
    \vspace{-15pt}
    \label{tab:notation_tokens}
\end{table}

\paragraph{Auto-regressive models.} Define the vocabulary $\mathcal{V} = [M]$ for a positive integer $M > 0$ which is the size of the vocabulary. Let $x = (x_1, x_2, \ldots, x_T)$ be a sequence of tokens of length $T$ where each token $x_t \in \mathcal{V}, \ \forall t \in [T]$. See \Cref{tab:notation_tokens} for notations of different tokens used in \Cref{sec:one_layer_transformer}. We study auto-regressive models $p_\theta(\cdot | x) \in \Delta(\mathcal{V})$ parameterized by $\theta$ that take the sequence $x$ as input and predict the distribution of the next token $x_{T+1} \in \mathcal{V}$. For both models that we study in this paper, the next token probability is modeled as the softmax applied to the logits $l_\theta(\cdot|x) \in \mathbb{R}^M$ of each token in the vocabulary, i.e., $p_\theta(y|x) = \frac{\exp(l_\theta(y|x))}{\sum_{v \in \V} \exp(l_\theta(v|x))}, \ \ \forall y \in \mathcal{V}$.
Also, each token $v \in \mathcal{V}$ has a corresponding (fixed or learnable) embedding vector $\vu_v \in \mathbb{R}^d$.

\section{Bilinear Models}
\label{sec:training_dynamics_bilinear}

We start analyzing the reversal curse under bilinear models, which can be viewed as simplified one-layer transformers with input length one and decoder layer only. Also, in this section, we assume the embeddings of each token are fixed, so we directly use the embedding vector to represent a token.

\paragraph{Datasets.} Assume the vocabulary has size $m$ where each token $v_1, v_2, \ldots, v_{m}$ $\overset{\iid}{\sim} \normal_d (0_d, \frac{1}{d}I_d)$. Let $\mathcal{V} = \{v_1, \ldots, v_{m}\}$ and let $\mathcal{X} = \{ x_1, \ldots, x_n\}$ and $\mathcal{Y} = \{ y_1, \ldots, y_n\}$ be disjoint random subsets of $\mathcal{V}$.  Assume all training and test sequences have a length of two. For any $2 \leq i \leq n$, the training dataset contains both sequence $(x_i, y_i)$ and $(y_i, x_i)$. In addition, the training set contains $(x_1, y_1)$ while the test set only contain one example $(y_1, x_1)$. During training, the model learns both $(x_i, y_i)$ and $(y_i, x_i)$ for $i \geq 2$ to conclude that $(x_i, y_i)$ is equivalent to $(y_i, x_i)$. For example, $\mathcal{X}$ is a set of names and $\mathcal{Y}$ is a set of books. The sequence $(x_i,y_i)$ means ``$x_i$ is the author of $y_i$'', and the sentence $(y_i,x_i)$ means ``$y_i$ is written by $x_i$''. 
We test whether the model is able to infer an unseen sequence $(y_1, x_1)$ given the training data which includes the other direction $(x_1, y_1)$.

\paragraph{Bilinear model.} We consider a bilinear model parameterized by $\Theta \in \mathbb{R}^{d\times d}$ of which the input contains single token $x \in \mathcal{V}$. The logits of the next token $y \in \mathcal{V}$ is defined as $l_\Theta(y|x) = x^\top \Theta y$ which is bilinear in $x$ and $y$, and thus the next token probability is $p_\Theta(y|x) = \frac{\exp(l_\Theta(y|x))}{\sum_{v \in \V} \exp(l_\Theta(v|x))}$. The training loss for the bilinear model is the cross-entropy loss 
$\loss(\Theta) = \frac{1}{2n-1}\left(\sum_{i=1}^n -\log p_\Theta(y_i|x_i) + \sum_{i=2}^n -\log p_\Theta(x_i|y_i)\right)$
and the test loss (reversal loss) is
$\loss^{\rev}(\Theta) =  -\log p_\Theta(x_1|y_1)$.
We study the training dynamics of gradient flow
    $\frac{d \Theta_t}{dt} = -\nabla \loss(\Theta_t)$
with the initialization $\Theta_0$ that can be either randomly sampled from $\normal(\mathbf{0}^{\otimes 2},\sigma^2 I^{\otimes 2})$ or set as a pretrained parameter satisfying $\frac{1}{2m} < p_{\Theta_0}(y_i|x_i), p_{\Theta_0}(x_i|y_i) < \frac{2}{m}$ for all $i \in [n]$. The following theorem shows a separation during training dynamics.

\begin{theorem}[Separation of training dynamics (informal statement of \Cref{thm:dynamics-bilinear})]\label{thm:dynamics-bilinear-informal}
Fix any $\delta,\epsilon \in (0,1)$. For small $\sigma$ and $d \geq \poly(n,m,1/\epsilon,\log(1/\delta))$, with probability at least $1-\delta$, we have
\begin{align*}
    \loss^{\rev}(\Theta_t) / \loss^{\rev}(\Theta_0) \geq (\loss(\Theta_t)/\loss(\Theta_0))^{\epsilon}, ~\forall t \geq 0.
\end{align*}
\end{theorem}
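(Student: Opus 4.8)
The goal is to show that the reversal loss $\loss^{\rev}(\Theta_t)$ decreases much more slowly than the training loss $\loss(\Theta_t)$ — specifically that $\loss^{\rev}(\Theta_t)/\loss^{\rev}(\Theta_0) \geq (\loss(\Theta_t)/\loss(\Theta_0))^\epsilon$ for all $t$. The natural approach is to track the evolution of the relevant logits under gradient flow and argue that the logit $l_{\Theta_t}(x_1 | y_1)$ (controlling the reversal loss) barely moves, while the logits $l_{\Theta_t}(y_i | x_i)$ and $l_{\Theta_t}(x_i | y_i)$ for training pairs grow enough to drive $\loss$ down. First I would compute $\nabla \loss(\Theta)$ explicitly: since $l_\Theta(y|x) = x^\top \Theta y$, the gradient is a sum over training examples of rank-one terms of the form $x (y - \bar{v})^\top$ where $\bar v = \sum_v p_\Theta(v|x) v$ is the predicted-token mean embedding. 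The key structural fact to exploit is the near-orthonormality of the random embeddings: with $d \gtrsim \poly(n,m)$, all the $v_j$'s have norm $\approx 1$ and pairwise inner products $\approx 0$ with high probability (a standard Johnson–Lindenstrauss / concentration argument), so $\Theta$ essentially decomposes into nearly independent "channels" indexed by pairs of embedding directions.

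The main steps I would carry out: (1) Establish the high-probability near-orthonormality event and condition on it throughout. (2) Show that under gradient flow, $\Theta_t$ stays (approximately) in the span of $\{x_i y_j^\top, y_i x_j^\top : \text{relevant } i,j\}$, and project the dynamics onto the scalar coordinates $a_{ij} := x_i^\top \Theta_t y_j$ and $b_{ij} := y_i^\top \Theta_t x_j$. Because embeddings are almost orthogonal, these coordinates evolve almost autonomously, with cross-talk terms of size $O(1/\sqrt d)$ or smaller. (3) Identify that $\loss^{\rev}$ depends on the logit $l_{\Theta_t}(x_1|y_1) = y_1^\top \Theta_t x_1$, and the gradient contribution to $y_1^\top \Theta_t x_1$ comes only from the example $(x_1, y_1)$ — which contributes a term proportional to $x_1^\top x_1 \cdot (\text{softmax residual})$, and crucially $x_1^\top x_1 \approx 1$ but $x_1 \neq y_1$, so the "forward" example $(x_1,y_1)$ pushes up $x_1^\top \Theta y_1$, not $y_1^\top \Theta x_1$ — plus lower-order terms. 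So the reversal logit only moves through the almost-orthogonality error terms, hence stays $\Theta_0$-close up to $O(\poly(n,m)/\sqrt d)$ corrections, which for large $d$ are negligible relative to $\loss$'s progress. (4) Lower bound how much $\loss$ can possibly decrease (it is bounded below by the reversal-independent part but more importantly we want an inequality comparing ratios), and combine: since $\loss^{\rev}(\Theta_t) \geq \loss^{\rev}(\Theta_0)(1 - o(1))$ while $\loss(\Theta_t)/\loss(\Theta_0) \leq 1$, taking $d$ large enough that the $o(1)$ slack is controlled by $\epsilon$ gives the claimed power-law comparison. The precise bookkeeping of constants — ensuring the near-orthonormality error, integrated over the (a priori unbounded) time horizon, does not accumulate — is where I would route everything through a Grönwall-type or monotonicity argument showing the relevant coordinates stay bounded.

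The main obstacle I anticipate is controlling the dynamics uniformly for \emph{all} $t \geq 0$, including $t \to \infty$. The cross-talk error terms are small per unit time but gradient flow runs forever, so naively they could accumulate without bound. The resolution must be that the loss $\loss(\Theta_t)$ is decreasing and its gradient is integrable along the flow (standard for gradient flow: $\int_0^\infty \|\nabla\loss(\Theta_t)\|^2 dt = \loss(\Theta_0) - \loss(\Theta_\infty) < \infty$), so the total "movement" of $\Theta_t$ in the relevant directions is finite and bounded in terms of $\loss(\Theta_0)$ alone; the almost-orthogonality then converts this into a $\poly(n,m)/\sqrt d$ bound on the spurious change in the reversal logit, uniformly in $t$. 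Making this rigorous — especially arguing that $\Theta_t$ does not drift in irrelevant directions and that the softmax denominators stay well-conditioned so the logits-to-loss translation is Lipschitz in both directions — is the technical heart. A secondary subtlety is handling the two allowed initializations (Gaussian with small $\sigma$ vs. a pretrained $\Theta_0$ with balanced predicted probabilities) uniformly; the pretrained case is actually cleaner since the softmax residuals start out $O(1/m)$ and well-controlled, whereas the small-$\sigma$ Gaussian case needs an initial-phase argument showing the logits quickly reach the same regime.
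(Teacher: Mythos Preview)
Your high-level plan---near-orthonormality of the Gaussian embeddings, explicit rank-one gradient computation, and tracking how the ``reversal'' logit $y_1^\top \Theta_t x_1$ is pushed only by cross-talk terms of size $O(1/\sqrt d)$---matches the paper. The gap is in your resolution of the infinite-time control. You propose to use $\int_0^\infty \|\nabla\loss(\Theta_t)\|^2\,dt = \loss(\Theta_0)-\loss(\Theta_\infty) < \infty$ to conclude that the total spurious change in the reversal logit is bounded by $\poly(n,m)/\sqrt d$ \emph{uniformly in $t$}, and hence that $\loss^{\rev}(\Theta_t) \geq \loss^{\rev}(\Theta_0)(1-o(1))$. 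This does not follow: $L^2$-integrability of $\nabla\loss$ does not give $L^1$-integrability, and in fact the paper shows $\loss(\Theta_t) \asymp 1/t$, so $\|\nabla\loss(\Theta_t)\| \asymp 1/t$ and $\int_0^t \|\nabla\loss\|\,ds \asymp \log t \to \infty$. The cross-talk pushing the reversal logit is of order $\epsilon' \|\nabla\loss\|$ per unit time, so its integral diverges logarithmically, and a uniform-in-$t$ additive bound on the reversal loss cannot be obtained this way.

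What the paper actually does is the Gr\"onwall route you allude to but do not carry out. First, it proves a quantitative decay rate for the training loss: from the near-orthonormality one gets $\frac{d\loss}{dt} \leq -c\,\loss(\Theta_t)^2$, hence $\loss(\Theta_t) \leq \loss(\Theta_0)/(1+Kt)$ for an explicit $K$. Second, it shows the differential inequality $\frac{d\loss^{\rev}}{dt} \geq -C\epsilon'\,\loss^{\rev}(\Theta_t)\,\loss(\Theta_t)$; the factor $\loss^{\rev}$ on the right (coming from $\|\nabla\loss^{\rev}\| \lesssim \loss^{\rev}$) is essential, because it makes the bound multiplicative rather than additive. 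Integrating against the $1/(1+Kt)$ rate gives $\loss^{\rev}(\Theta_t) \geq \loss^{\rev}(\Theta_0)(1+Kt)^{-C\epsilon'}$. Since $\loss(\Theta_t)/\loss(\Theta_0) \leq (1+Kt)^{-1}$ with the \emph{same} $K$, one immediately gets $\loss^{\rev}(\Theta_t)/\loss^{\rev}(\Theta_0) \geq (\loss(\Theta_t)/\loss(\Theta_0))^{C\epsilon'}$, and choosing $d$ large makes $C\epsilon' \leq \epsilon$. The two missing ingredients in your plan are thus (i) the explicit $1/t$ rate for $\loss$, and (ii) the multiplicative coupling of $\loss^{\rev}$ to $\loss$; without them the comparison of ratios cannot be closed.
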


\Cref{thm:dynamics-bilinear-informal} shows that the reversal loss is lower bounded by the training loss. Note that for large $d$ and small $\epsilon$ close to 0, $(\loss(\Theta_t)/\loss(\Theta_0))^{\epsilon}$ is close to 1 and thus $\loss^{\rev}(\Theta_t) \gtrsim \loss^{\rev}(\Theta_0)$ which implies that $p_\Theta(x_1|y_1)$ remains small during training. We summarize the above argument in \Cref{coro:dynamics-bilinear-informal}.

\begin{theorem}[Lower bound of reversal loss (informal statement of \Cref{coro:dynamics-bilinear}]
\label{coro:dynamics-bilinear-informal}
Fix arbitrary $c > 0$ and $C \leq \log(m/2)$.  Suppose $\sigma$ is small and $d \geq \poly(n,m,\log(1/\delta),\log c, 1/\log C)$. With probability at least $1-\delta$, it holds that $\loss^{\rev}(\Theta_\tau) \geq C$, 
where $\tau$ denotes the first time such that $\loss(\Theta_t) \leq c$.
\end{theorem}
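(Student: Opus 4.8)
The plan is to obtain this corollary as a short reduction from \Cref{thm:dynamics-bilinear-informal} (formally \Cref{thm:dynamics-bilinear}), by instantiating its free exponent $\epsilon$ and evaluating the separation inequality at the stopping time $\tau$. The starting observation is that along the gradient flow $\frac{d}{dt}\loss(\Theta_t) = -\|\nabla\loss(\Theta_t)\|^2 \le 0$, so $t \mapsto \loss(\Theta_t)$ is continuous and non-increasing (and, the softmax cross-entropy gradient being bounded for fixed bounded-norm embeddings, the flow exists for all $t \ge 0$). Hence there is a clean dichotomy: if $\loss(\Theta_0) \le c$ then $\tau = 0$ and the conclusion reduces to the initialization bound $\loss^{\rev}(\Theta_0) \ge C$ proved below; otherwise $\loss(\Theta_0) > c$, and since $\tau$ is then the first hitting time of a continuous non-increasing function we have $\loss(\Theta_\tau) = c$ exactly. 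We may therefore work in the second case, where in particular (using the initialization bound $\loss(\Theta_0)\le \log(2m)$ below) we automatically get $c < \log(2m)$.

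Next I would pin down the two losses at initialization. For the pretrained initialization, the standing assumption $\tfrac1{2m} < p_{\Theta_0}(y_i|x_i), p_{\Theta_0}(x_i|y_i) < \tfrac2m$ for all $i \in [n]$ gives immediately $\loss(\Theta_0) \le \log(2m)$ (each of the $2n-1$ averaged terms equals $-\log p_{\Theta_0}(\cdot|\cdot) < \log(2m)$) and $\loss^{\rev}(\Theta_0) = -\log p_{\Theta_0}(x_1|y_1) > \log(m/2) \ge C$. For the Gaussian initialization $\Theta_0 \sim \normal(\0^{\otimes 2}, \sigma^2 I^{\otimes 2})$, I would argue that for small $\sigma$ and $d \ge \poly(m, \log(1/\delta))$ all logits $x^\top \Theta_0 y$ over the $O(m^2)$ relevant token pairs are simultaneously $o(1)$ with probability $\ge 1 - \delta/2$ (concentration of the quadratic form $x^\top\Theta_0 y$, then a union bound over pairs), so that $p_{\Theta_0}(\cdot|\cdot) = \tfrac1m(1+o(1))$ uniformly and the same two bounds $\loss(\Theta_0) \le \log(2m)$, $\loss^{\rev}(\Theta_0) \ge \log(m/2) \ge C$ hold.

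Finally I would choose the exponent $\epsilon := \log\!\big(\log(m/2)/C\big)\big/\log\!\big(\log(2m)/c\big) \in (0,1)$, which is well defined and positive precisely because $c < \log(2m)$ and $C < \log(m/2)$; one checks that $1/\epsilon$ is bounded polynomially in $n, m, \log(1/\delta), \log c, 1/\log C$, so the hypothesis of \Cref{thm:dynamics-bilinear-informal} is met for the prescribed $d$ and its conclusion holds on an event of probability $\ge 1 - \delta/2$. On that event, evaluating the separation bound at $t = \tau$ and plugging in $\loss(\Theta_\tau) = c$, $\loss(\Theta_0) \le \log(2m)$, $\loss^{\rev}(\Theta_0) \ge \log(m/2)$, together with the defining property of $\epsilon$, yields
\[
\loss^{\rev}(\Theta_\tau) \;\ge\; \loss^{\rev}(\Theta_0)\Big(\tfrac{\loss(\Theta_\tau)}{\loss(\Theta_0)}\Big)^{\epsilon} \;\ge\; \log(m/2)\Big(\tfrac{c}{\log(2m)}\Big)^{\epsilon} \;\ge\; \log(m/2)\cdot\tfrac{C}{\log(m/2)} \;=\; C,
\]
and a union bound over the two $\delta/2$ failure events finishes the proof. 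The substantive content is entirely inside \Cref{thm:dynamics-bilinear}, which I take as given; within this corollary the only real obstacles are the stopping-time bookkeeping, the quadratic-form concentration for the random initialization, and correctly translating the free $\epsilon$ into the claimed polynomial growth of $d$. The boundary case $C = \log(m/2)$ is degenerate for the displayed $\epsilon$, but it is absorbed by the strict inequality $\loss^{\rev}(\Theta_0) > \log(m/2)$ actually available at initialization, which leaves a sliver of slack.
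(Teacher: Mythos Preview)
Your proposal is correct and follows essentially the same route as the paper: invoke \Cref{thm:dynamics-bilinear} at $t=\tau$ with $\loss(\Theta_\tau)=c$ by continuity, use the initialization bounds $\loss(\Theta_0)\le\log(2m)$ and $\loss^{\rev}(\Theta_0)\ge\log(m/2)$ from \Cref{lem:initial_uniform}, and then set $\epsilon=\log\!\big(\tfrac{C}{\log(m/2)}\big)\big/\log\!\big(\tfrac{c}{\log(2m)}\big)$, which is exactly (up to sign cancellation) the paper's choice. If anything, your version is a bit more careful than the paper's about edge cases (the dichotomy on $\loss(\Theta_0)\le c$ and the $C=\log(m/2)$ boundary), but the substance is identical.
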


The proofs of \Cref{thm:dynamics-bilinear-informal,coro:dynamics-bilinear-informal} are deferred to \Cref{sec:proof}. \Cref{coro:dynamics-bilinear-informal} implies that for large $d$\footnote{Empirically, $d$ only needs to be of the order of logarithm of the vocabulary size. See \Cref{sub:add_reversal_small_embedding,sub:add_near_orthogonal} for additional results.}, while the training loss can be trained to be arbitrarily small, the reversal loss remains large. In other words, the model fails to infer an unseen sequence $(y_1, x_1)$ given the training data which includes the other direction $(x_1, y_1)$. Furthermore, even if the model is fine-tuned on new data from a pre-trained parameter $\Theta_0$ that initially grasps the concept of reversal and satisfies $\frac{1}{2m} < p_{\Theta_0}(y_i|x_i), p_{\Theta_0}(x_i|y_i) < \frac{2}{m}$ for new data, it fails to extend this understanding to new, unseen data.

A core reason that the reversal curse happens on the above bilinear model is that the parameter matrix $\Theta_t$ is asymmetric. Consequently, the logits $l_{\Theta_t}(y|x) = x^\top \Theta_t y$ and $l_{\Theta_t}(x|y) = y^\top \Theta_t x$ generally differ. Consider a special case where each $v_i$ is a one-hot vector. Then $l_\Theta(y|x)=x^\top \Theta y = \Theta_{ij}$ and $l_\Theta(x|y)=y^\top \Theta x = \Theta_{ji}$ for $x = \ve_i, y = \ve_j$. Training on $(x, y)$ can increase $\Theta_{ij}$ but not $\Theta_{ji}$, which means the model does not automatically learn the reversal direction $(y,x)$ from $(x,y)$. 
In \Cref{sec:one_layer_transformer}, we will show that for one-layer transformers, the reversal curse is mainly caused by the same reason, i.e., the asymmetry of the model weights.

\section{One-Layer Transformers}
\label{sec:one_layer_transformer}

In \Cref{sec:training_dynamics_bilinear}, we analyzed the reversal curse under a bilinear model. In this section, we analyze the reversal curse for one-layer transformers in a similar setting to \cite{tian2023scan} via training dynamics. We also extend our analysis to chain-of-thought in \Cref{subsec:cot_and_ac}.

\paragraph{Basic notations.} Let $\mathcal{V} = [M]$ be the vocabulary. For any token $x \in [M]$, we also use the corresponding one-hot vector $\vx = \ve_{x} \in \mathbb{R}^M$ to represent it. Let $U = [\vu_1, \vu_2, \ldots, \vu_M]^{\top} \in \mathbb{R}^{M \times d}$ be the embedding matrix, where $\vu_x \in \mathbb{R}^d$ is the embedding of token $x \in [M]$. Note that $U^\top \vx = \vu_x$. Consider the $i$-th training sample in the dataset, $x[i] = (x_1[i], \ldots, x_{T[i]}[i], x_{T[i]+1}[i])$, a sequence of tokens of length $T[i]+1$. Here, $x_{T[i]+1}[i]$ is the next token (or equivalently the label) to be predicted, $x_{T[i]}[i]$ is the query token, and $(x_1[i], \ldots, x_{T[i]-1}[i])$ are contextual tokens. For token $x_t[i]$, we also use its one-hot vector $\vx_t[i] = \ve_{x_t[i]} \in \mathbb{R}^M$ to represent it. Define the contextual token matrix $X[i] = [\vx_1[i], \ldots, \vx_{T[i]-1}[i]]^\top \in \mathbb{R}^{(T[i]-1) \times M}$. We omit all $i$ in notations when the context is clear.

\paragraph{One-layer transformer.} For a training sample $x = (x_1, \ldots, x_T, x_{T+1})$, its contextual token matrix $X = [\vx_1, \ldots, \vx_{T-1}]^\top$ and thus $XU = [\vu_{x_1}, \ldots, \vu_{x_{T-1}}]^\top$ contains the contextual token embeddings. We study one-layer transformers in the same setting as \cite{tian2023scan}. In particular, for an input token sequence $(x_1, \ldots, x_T)$, after the one-layer self-attention, we can obtain
$\Tilde{\vu}_T = U^\top \text{LN}(X^\top \vb_T)$ where $b_{tT} = \frac{\exp(\vu_{x_T}^\top W_Q W_K^\top \vu_{x_t} / \sqrt{d})}{\sum_{t'=1}^{T-1} \exp(\vu_{x_T}^\top W_Q W_K^\top \vu_{x_{t'}}  / \sqrt{d})}$, $\vb_T = [b_{1T}, \ldots, b_{T-1,T}]^\top$ contains attention scores (after softmax) that query token $x_T$ attend to each contextual token\footnote{Note that we assume the query token will not attend to itself as in \cite{tian2023scan}.}, $\text{LN}(\vx) = \vx / \| \vx \|_2$ is the $\ell_2$-normalization operator, and $W_Q, W_K \in \mathbb{R}^{d \times d_k}$ are trainable query and key matrices respectively. The logit of $x \in [M]$ is then calculated by a decoder layer, i.e., 
$l_\theta(x | x_1,\ldots, x_T) = \vu_x^\top W_V \Tilde{\vu}_T$, 
where $\theta$ encodes all parameters in the transformer, and $W_V \in \mathbb{R}^{d\times d}$ can be viewed as a reparameterization of value and output matrices. Finally, the next token prediction probability is obtained by
\[
p_\theta(x | x_1,  \ldots, x_T) = \frac{ \exp(l_\theta(x | x_1,  \ldots, x_T))}{\sum_{x' \in [M]} \exp(l_\theta(x | x_1,  \ldots, x_T))} = \frac{ \exp(\vu_x^\top W_V \Tilde{\vu}_T)}{\sum_{x' \in [M]} \exp( \vu_{x'}^\top W_V \Tilde{\vu}_T)}.\]
We use the cross-entropy loss function to train the model over the whole training set $\mathcal{D}_{\train}$, i.e., 
$\max_{U, W_K, W_Q, W_V} J \triangleq \E_{\mathcal{D}_{\train}} [\log p_\theta(x_{T+1} | x_1, \ldots, x_T)]$.

\paragraph{Reparameterization.} Similar to \cite{tian2023scan}, we define $Z = UW_QW_K^\top U^\top/\sqrt{d} \in \mathbb{R}^{M\times M}$ and  $Y = UW_V^\top U^\top \in \mathbb{R}^{M\times M}$ and are interested in their dynamics after reperameterization. Then, the attention score (after softmax) and next token probability become
\begin{align}
\label{eq:bt_and_NTP_reparameterization}
b_{tT} = \frac{\exp(\vx_T^\top Z \vx_t)}{\sum_{t'=1}^{T-1} \exp(\vx_T^\top Z \vx_{t'})}, \ \  p_\theta(x | x_1, \ldots, x_T) = \frac{\exp\left(\vx^\top Y^\top \text{LN}(X^\top \vb_T)\right)}{\sum_{x'}\exp\left({\vx'}^\top Y^\top \text{LN}(X^\top \vb_T)\right)},
\end{align}
and the objective can be written as
\begin{align}
\label{eq:obj_reparameterization}
\max_{Y, Z} J = \E_{\mathcal{D}_{\train}} [\vx_{T+1}^\top Y^\top \text{LN}(X^\top \vb_T) - \log \sum_{x' \in [M]} {\vx'}^\top Y^\top \text{LN}(X^\top \vb_T)].
\end{align}

Let $\eta_Y$, $\eta_Z$ be the learning rate of matrices $Y$ and $Z$ respectively. Then the gradient of $Y$ and $Z$ can be characterized by the following lemma:

\begin{lemma}[Gradient of $Y$ and $Z$ for 1-layer transformer, Lemma 1 of \cite{tian2023scan}]
\label{lem:gradient_Y_Z}
The gradient of $Y$ and $Z$ w.r.t. \eqref{eq:obj_reparameterization} of batch size 1 and learning rate $\eta_Y$ and $\eta_Z$ can be written as 
\begin{align}
\label{eq:gradient_Y_Z}
\dot Y = \eta_Y \text{LN}(X^\top \vb_T)(\vx_{T+1} - \valpha)^\top, \ \  \dot Z = \eta_Z \vx_T(\vx_{T+1}-\valpha)^\top Y^\top \frac{P^{\perp}_{X^\top \vb_T}}{\|X^\top \vb_T\|_2}X^\top \diag(\vb_T)X,
\end{align}
where $P^{\perp}_{\vv} \triangleq I - \vv\vv^\top/\|\vv\|_2^2$ projects any vector to orthogonal complement of $\vv$, $\valpha = [\alpha_1, \alpha_2, \ldots, \alpha_M]^\top \in \mathbb{R}^M$ with $\valpha = \exp\left(Y^\top \text{LN}(X^\top \vb_T)\right) / \boldsymbol{1}^\top \exp\left(Y^\top \text{LN}(X^\top \vb_T)\right)$.
\end{lemma}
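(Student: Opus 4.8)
The plan is to prove \Cref{lem:gradient_Y_Z} by direct computation of the gradients of the single-sample objective
\[
J_1 = \vx_{T+1}^\top Y^\top \text{LN}(X^\top \vb_T) - \log \sum_{x' \in [M]} \exp\bigl({\vx'}^\top Y^\top \text{LN}(X^\top \vb_T)\bigr)
\]
with respect to $Y$ and $Z$, then multiplying by the respective learning rates to obtain the gradient-ascent updates. The key structural observation is that $Y$ enters $J_1$ only through the ``readout'' $Y^\top \vh$ where $\vh \triangleq \text{LN}(X^\top \vb_T)$ is independent of $Y$, while $Z$ enters $J_1$ only through the attention weights $\vb_T$ (hence through $\vh$), so the two computations decouple and the $Z$-gradient factors through a chain rule via $\vh$.

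First I would handle $\dot Y$. Write $\boldsymbol{\ell} = Y^\top \vh \in \mathbb{R}^M$ for the logit vector, so $J_1 = \vx_{T+1}^\top \boldsymbol{\ell} - \log\bigl(\boldsymbol{1}^\top \exp(\boldsymbol{\ell})\bigr)$. The standard softmax/cross-entropy identity gives $\nabla_{\boldsymbol{\ell}} J_1 = \vx_{T+1} - \valpha$ with $\valpha = \exp(\boldsymbol{\ell})/(\boldsymbol{1}^\top\exp(\boldsymbol{\ell}))$, matching the definition in the statement. Since $\boldsymbol{\ell} = Y^\top \vh$ is linear in $Y$ with $\partial \ell_k/\partial Y_{ab} = h_a \indc{b=k}$, the chain rule yields $\nabla_Y J_1 = \vh(\vx_{T+1}-\valpha)^\top$, and multiplying by $\eta_Y$ gives $\dot Y = \eta_Y \text{LN}(X^\top \vb_T)(\vx_{T+1}-\valpha)^\top$ as claimed.

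Next I would handle $\dot Z$. By the chain rule through $\vh = \text{LN}(X^\top \vb_T)$, $\nabla_Z J_1 = \sum_j (\partial J_1/\partial h_j)\, \nabla_Z h_j$, where from the $Y$-analysis $\partial J_1/\partial \vh = Y(\vx_{T+1}-\valpha)$. It remains to differentiate $\vh$ through the normalization and through the softmax defining $\vb_T$. Writing $\vv \triangleq X^\top \vb_T$, the Jacobian of $\text{LN}$ is $\partial \text{LN}(\vv)/\partial \vv = P^\perp_{\vv}/\|\vv\|_2$ with $P^\perp_{\vv} = I - \vv\vv^\top/\|\vv\|_2^2$; this supplies the factor $P^\perp_{X^\top\vb_T}/\|X^\top\vb_T\|_2$. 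Then $\vv = X^\top \vb_T$ is linear in $\vb_T$, contributing $X^\top$, and the softmax $\vb_T$ as a function of the pre-softmax scores $\vs \triangleq (\vx_T^\top Z \vx_t)_{t=1}^{T-1}$ has Jacobian $\diag(\vb_T) - \vb_T\vb_T^\top$. Finally $\vs$ is linear in $Z$: $s_t = \vx_T^\top Z \vx_t$, so $\partial s_t/\partial Z = \vx_T \vx_t^\top$, i.e.\ the map $Z \mapsto \vs$ has ``transpose'' $\vx_T (X \,\cdot\,)^\top$. Composing these four linear maps and using that $P^\perp_{\vv}$ annihilates $\vv = X^\top\vb_T = X^\top(\diag(\vb_T)-\vb_T\vb_T^\top)\boldsymbol{1}$-type contributions so that the rank-one term $\vb_T\vb_T^\top$ in the softmax Jacobian can be dropped (this is the one simplification needing a line of care), one collapses $X^\top(\diag(\vb_T)-\vb_T\vb_T^\top)X$ down to the quadratic form $X^\top\diag(\vb_T)X$ appearing in the statement. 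Assembling the chain gives
\[
\nabla_Z J_1 = \vx_T\,(\vx_{T+1}-\valpha)^\top Y^\top\, \frac{P^\perp_{X^\top\vb_T}}{\|X^\top\vb_T\|_2}\, X^\top \diag(\vb_T) X,
\]
and multiplying by $\eta_Z$ gives the stated $\dot Z$.

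I expect the main obstacle to be bookkeeping rather than conceptual: carefully tracking the order of the composed Jacobians (so that matrix/vector shapes line up and the outer product $\vx_T(\cdot)^\top$ ends up on the correct side) and justifying that the $-\vb_T\vb_T^\top$ piece of the softmax Jacobian does not survive — this is where the projector $P^\perp_{X^\top\vb_T}$ does its work, since $X^\top\vb_T\vb_T^\top X$ has column space inside $\mathrm{span}(X^\top\vb_T)$, which $P^\perp_{X^\top\vb_T}$ kills after we reorganize the product. Since this is exactly Lemma 1 of \cite{tian2023scan}, I would cite that computation for the routine algebra and only highlight these two points (the softmax/CE gradient identity and the projector cancellation) in the write-up.
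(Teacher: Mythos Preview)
Your proposal is correct and takes essentially the same approach as the paper: the paper's proof simply states that $\dot Y, \dot Z$ ``can be obtained by direct calculation'' and refers to Lemma~1 of \cite{tian2023scan}, which is exactly the chain-rule computation you outline (including the projector cancellation of the $\vb_T\vb_T^\top$ term via $\vb_T^\top X P^\perp_{X^\top\vb_T} = (X^\top\vb_T)^\top P^\perp_{X^\top\vb_T} = 0$). Your write-up in fact gives more detail than the paper does.
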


\begin{proof}
    $\dot Y, \dot Z$ can be obtained by direct calculation. One can refer to the proof of Lemma 1 of \cite{tian2023scan}.
\end{proof}

\subsection{Main results for the reversal curse}
\label{subsec:main_result_reversal_curse}

In this section, we analyze the reversal curse where data points are three-token sentences ``$\tokenA \to \tokenB$'' or ``$\tokenB \gets \tokenA$''. For each sentence, $\tokenA$ and $\tokenB$ are two distinct tokens that represent two entities, and ``$\to$'' and ``$\gets$'' are two special tokens representing a pair of relationships inverse to each other. 

\paragraph{Datasets.} Let $N_{\train} > 0$, $N_{\test}^{(1)} > 0$ and $N_{\test}^{(2)} > 0$ and denote $N_{\total} = N_\train+N_\test^{(1)}+N_\test^{(2)}$. Let $\ai, \bi  \in \mathcal{V}, \forall i \in [N_\total]$ be $2N_\total$ distinct tokens representing distinct entities. Let $\to, \gets \in \mathcal{V}$ be two additional different tokens that represent two inverse relationships.  Specifically, we have $\ai \to \bi$ and $\bi\gets \ai$ for all $i \in [N_\total]$. For notation convenience, we define the following three index sets 
\begin{align*}
 \idx_\train = [N_\train], \qquad  \idx_{\test}^{(1)} =[N_\train+N_\test^{(1)}] \backslash \idx_\train,  \qquad \idx_{\test}^{(2)} = [N_\total] \backslash (\idx_\train \cup \idx_{\test}^{(1)}).
\end{align*} 
The training set $\dataset_\train$ consists of all $\ai \to \bi$ and $\bi \gets \ai$ for $i \in \idx_\train$. In addition, $\dataset_\train$ contains $\ai \to \bi$ for $i \in \idx_{\test}^{(1)}$ and $\bi \gets \ai$ for $i \in \idx_{\test}^{(2)}$. For convenience, we let $N = |\mathcal{D}_\train|$ to be the size of the training set. The test set $\dataset_\test$ consists of $\bi \gets \ai$ for $i \in \idx_{\test}^{(1)}$ and $\ai \to \bi$ for $i \in \idx_{\test}^{(2)}$. Under our construction of the dataset, the LLM will learn the relationship between $\ai$ and $\bi$ for $i \in \idx_\train$ in both directions to deduce that $\to$ is reverse to $\gets$, and learn the relationship between $\ai$ and $\bi$ for $i \in \idx_\test^{(1)} \cup \idx_\test^{(2)}$ in one direction and will be tested for the other. 

We use $p_\theta(\ai|\bi\gets)$ and $p_\theta(\bi|\ai\to)$ to more compactly represent $p_\theta(x_3=\ai | x_1 = \bi, x_2 = \gets)$ and $ p_\theta(x_3=\bi | x_1 = \ai, x_2 = \to)$, respectively. Our goal is to prove through the training dynamics of one-layer transformers that the test probability remains negligible during training. In particular, we are interested in  $p_\theta(\ai |  \bi \gets), \forall i \in \idx_\test^{(1)}$ and $p_\theta(\bi |  \ai  \to), \forall i \in \idx_\test^{(2)}$.

For convenience, we assume zero-initialization $Y(0) = \zero$ and $Z(0) = \zero$. This is the same as \cite{tian2023scan} and is reasonable since empirically, $Y$ and $Z$ are usually initialized as inner products of $d$-dimensional vectors with i.i.d Gaussian entries, and thus are almost zero (\Cref{lem:orthonormal} in \Cref{sec:proof}). The following proposition shows the initial train/test probabilities are uniform over the vocabulary $\mathcal{V}$.

\begin{proposition}[Initial probability under zero initializaion]
\label{prop:initial_prob_transformer_rev_curse}
Assume the transformer is under zero-initialization $\theta(0) = (Y(0), Z(0))$ with $Y(0) = \zero$ and $Z(0) = \zero$. For any $i \in [N_\total]$, we have 
\begin{align*}
    p_{\theta(0)}(\bi |  \ai \to) = p_{\theta(0)}(\ai | \bi\gets) = 1/M.
\end{align*}

\end{proposition}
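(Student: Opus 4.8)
The plan is to trace through the definition of the one-layer transformer's next-token probability when $Y(0)=\zero$ and $Z(0)=\zero$, and show that the logits $l_{\theta(0)}(x\mid x_1,x_2)$ are identical for all $x\in[M]$, so that the softmax yields the uniform distribution $1/M$. The key observation is that both attention scores and logits factor through $Y$ and $Z$ (via the reparameterization in \eqref{eq:bt_and_NTP_reparameterization}), and each sentence in the dataset has exactly three tokens, so the context has length $T=2$ with a single contextual token $x_1$ and query token $x_2$.

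First I would handle the attention weights. For a three-token sequence, the contextual token matrix is $X=[\vx_1]^\top$, i.e.\ there is only one contextual token. By the formula $b_{tT}=\exp(\vx_T^\top Z\vx_t)/\sum_{t'}\exp(\vx_T^\top Z\vx_{t'})$ with a single term in the sum, we get $b_{12}=1$ regardless of $Z$; in particular this holds at $Z(0)=\zero$. Hence $X^\top\vb_2=\vx_1$, so $\text{LN}(X^\top\vb_2)=\vx_1/\|\vx_1\|_2=\vx_1$ since $\vx_1=\ve_{x_1}$ is a one-hot vector with unit norm. Next I would plug $Y(0)=\zero$ into the logit expression $l_{\theta(0)}(x\mid x_1,x_2)=\vx^\top Y(0)^\top\text{LN}(X^\top\vb_2)=\vx^\top\zero\,\vx_1=0$ for every $x\in[M]$. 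Therefore $\valpha$, equivalently the next-token distribution, equals $\exp(\zero)/\bone^\top\exp(\zero)=\bone/M$, giving $p_{\theta(0)}(\bi\mid\ai\to)=p_{\theta(0)}(\ai\mid\bi\gets)=1/M$ for all $i$.

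There is essentially no obstacle here: the statement is a direct computation once one notices that (i) a length-two context collapses the attention softmax to a trivial $b_{12}=1$ (so $Z$ plays no role at all at this stage), and (ii) $Y(0)=\zero$ kills every logit simultaneously. The only thing to be slightly careful about is that the query token is assumed not to attend to itself (as noted in the footnote following the definition of $\vb_T$), which is what makes $X=[\vx_1,\dots,\vx_{T-1}]^\top$ have exactly one row when $T=2$; if self-attention were allowed the sum would have two terms but the conclusion $b_{\cdot 2}$ being a valid distribution would still hold and the logits would still vanish, so the argument is robust. I would write this up in three short displayed lines: one for $b_{12}=1$ and $\text{LN}(X^\top\vb_2)=\vx_1$, one for $l_{\theta(0)}(x\mid x_1,x_2)=0$, and one for the resulting uniform softmax.
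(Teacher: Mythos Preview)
Your proposal is correct and follows the same approach as the paper's proof: plug $Y(0)=\zero$ into the next-token probability formula so that every logit vanishes and the softmax is uniform. The paper's version is even terser than yours---it does not compute $b_{12}$ or $\text{LN}(X^\top\vb_2)$ at all, since (as you yourself observe) $Y(0)=\zero$ annihilates the logits regardless of what the attention weights are.
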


The proof is deferred to \Cref{app:subsubsec_proof_prop}. \Cref{prop:initial_prob_transformer_rev_curse} shows that initially, the probability of predicting any $\tokenB$ (or $\tokenA$, respectively) given any $\tokenA\to$ (or $\tokenB\gets$, respectively) as input is uniform over the whole vocabulary. When $Y(0)$ and $Z(0)$ are not exactly $\zero$ but close to $\zero$, the initial prediction will still be close to the uniform distribution, which is similar to \Cref{lem:initial_uniform}. Next we analyze the dynamics of $p_{\theta(t)}(\bi|\ai\to)$ and $p_{\theta(t)}(\ai|\bi\gets)$.

\begin{proposition}[Next token probability]
\label{prop:NTP_closed_form}
    For input sequence $(x_1, x_2)$, the next token probability under parameters $\theta(t)$ is 
    $p_{\theta(t)}(x|x_1, x_2) = \exp\left( Y(t)_{x_1, x}\right) / \sum_{x' \in [M]}\exp\left(Y(t)_{x_1, x'}\right)$,
    where $Y(t)_{i,j}$ is the entry of the matrix $Y(t)$ at row $i$ and column $j$.
\end{proposition}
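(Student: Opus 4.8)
The plan is to unwind the definition of the one-layer transformer's forward pass for the special case where the input sequence has length $T=2$ (so that $x_1$ is the single contextual token and $x_2$ is the query token), and to track how the zero-initialized matrix $Z(t)$ stays zero, which collapses the attention weights to a trivial value and makes the logits depend only on $Y(t)$. First I would observe that with $T=2$, the contextual token matrix is $X = \vx_1^\top$ (a single row), so $\vb_2 = [b_{12}]$ is a $1$-dimensional vector of attention scores; since a softmax over a single entry is identically $1$, we get $b_{12} = 1$ regardless of $Z$. Hence $X^\top \vb_2 = \vx_1$, and $\mathrm{LN}(X^\top \vb_2) = \vx_1/\|\vx_1\|_2 = \vx_1$ because $\vx_1 = \ve_{x_1}$ is a one-hot (unit-norm) vector.

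Next I would plug this into the reparameterized next-token probability formula \eqref{eq:bt_and_NTP_reparameterization}: the logit of token $x$ becomes $\vx^\top Y(t)^\top \mathrm{LN}(X^\top \vb_2) = \vx^\top Y(t)^\top \vx_1 = (Y(t)^\top)_{x, x_1} = Y(t)_{x_1, x}$, using that $\vx = \ve_x$ and $\vx_1 = \ve_{x_1}$ pick out the $(x_1, x)$ entry. Substituting into the softmax over the vocabulary yields exactly
\[
p_{\theta(t)}(x \mid x_1, x_2) = \frac{\exp(Y(t)_{x_1, x})}{\sum_{x' \in [M]} \exp(Y(t)_{x_1, x'})},
\]
which is the claimed identity. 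Note that this expression is independent of $x_2$ and of $Z(t)$ entirely — a point worth emphasizing, since it already foreshadows the asymmetry phenomenon: the prediction given $\ai \to$ depends only on row $\ai$ of $Y$, while the reversal prediction given $\bi \gets$ depends only on row $\bi$ of $Y$, and these rows evolve independently.

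The only subtlety — and the one place I would be careful rather than routine — is justifying that the attention layer genuinely degenerates here. There are two things to check: (i) the softmax over a length-one index set is $1$ (immediate, but it is the crux), and (ii) the layer-normalization $\mathrm{LN}$ applied to the one-hot vector $\vx_1$ returns $\vx_1$ unchanged because $\|\ve_{x_1}\|_2 = 1$, so there is no division-by-zero or rescaling issue. I do not expect any real obstacle; the content of the proposition is essentially that for two-token inputs the transformer reduces to a lookup of a single row of $Y$. Since \Cref{lem:gradient_Y_Z} and the forward-pass equations \eqref{eq:bt_and_NTP_reparameterization} are already available, the proof is a short direct computation, and I would present it as such, remarking afterward that this is why $Z(t) \equiv \zero$ is consistent (the gradient $\dot Z$ in \eqref{eq:gradient_Y_Z} contains the projector $P^\perp_{X^\top \vb_T}$, which for the one-dimensional case kills the update, so $Z$ never moves from its zero initialization — though strictly this last remark belongs to the subsequent analysis, not to the proposition itself).
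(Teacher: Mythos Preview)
Your proposal is correct and follows essentially the same argument as the paper: with $T=2$ there is a single contextual token, so $\vb_2=[b_{12}]$ with $b_{12}=1$ by the trivial softmax, hence $\mathrm{LN}(X^\top\vb_2)=\mathrm{LN}(\vx_1)=\vx_1$, and plugging into \eqref{eq:bt_and_NTP_reparameterization} gives $\vx^\top Y(t)^\top \vx_1 = Y(t)_{x_1,x}$. The only remark is that your opening sentence about tracking how $Z(t)$ stays zero is unnecessary here (as you yourself note, $b_{12}=1$ regardless of $Z$), so you can drop that framing and keep the proof as the direct computation you describe.
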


The proof is also deferred to \Cref{app:subsubsec_proof_prop}. According to \Cref{prop:NTP_closed_form}, the next token probability when the entity in the input is $x_1$ is determined by the $x_1$-th row of the matrix $Y(t)$. Another nice property indicated by \Cref{prop:NTP_closed_form} is that we don't need to keep track of the dynamics of $Z(t)$, which could greatly simplify the analysis. The following lemma shows the dynamics of $Y(t)$.

\begin{lemma}[Dynamics of $Y(t)$]
\label{lem:dynamics_Y}
Assume we run SGD with batch size 1~\footnote{The lemma holds even if the batch size is larger than 1 and the analysis is essentially the same.}, and assume $M \gg 100$ and $\frac{1}{M^{0.99}} \ll \eta_Y < 1$. 
Let $t \gtrsim \frac{N \ln M}{\eta_Y}$ and let $Y(t)_i$ denote the $i$-th row of $Y(t)$ and $Y(t)_{ij}$ denote the $(i,j)$-th entry of $Y(t)$. Then for training sequence $(x_1, x_2, x_3) \in \dataset_{\train}$ at time $t$, we have
\begin{align*}
    Y(t)_{x_1, x_3} \gtrsim \ln\left(M \eta_Y t/N\right),  \ \ \text{ and } \quad Y(t)_{x_1, x} \lesssim -\ln\left(M \eta_Y t/N\right)/M , \quad \forall x \neq x_3,
\end{align*}
and for any test sequence $(x_1, x_2, x_3) \in \dataset_{\test}$, we have 
   $Y(t)_{x_1, x} = 0, \forall x \in [M]$.
\end{lemma}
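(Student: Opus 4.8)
The plan is to directly integrate the $Y$-dynamics from \Cref{lem:gradient_Y_Z}, exploiting the enormous simplification available in this three-token setting: every training sample has contextual length $T-1 = 1$, so the contextual token matrix $X = \vx_1^\top$ is a single one-hot row, $X^\top \vb_T = \vx_1$ (since $\vb_T$ is a scalar equal to $1$), and hence $\text{LN}(X^\top \vb_T) = \vx_1$ with unit norm. Plugging this into the $\dot Y$ formula gives $\dot Y = \eta_Y\, \vx_1 (\vx_{T+1} - \valpha)^\top$, i.e. only the $x_1$-th row of $Y$ ever moves, and it moves by $\eta_Y(\vx_3 - \valpha)$ where $\valpha = \mathrm{softmax}(Y_{x_1,\cdot})$ is exactly the current next-token distribution (this is consistent with \Cref{prop:NTP_closed_form}). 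Also, because the $Z$-update in \eqref{eq:gradient_Y_Z} is premultiplied by $P^{\perp}_{X^\top\vb_T}$ applied after... in fact since $X^\top\diag(\vb_T)X = \vx_1\vx_1^\top$ and $P^\perp_{\vx_1}\vx_1 = 0$, we get $\dot Z = 0$: $Z$ stays at zero, confirming we never need to track it.

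Next I would reduce the whole lemma to a one-dimensional ODE comparison per entity token. Fix a training entity $x_1$ (an $\ai$ with target $\bi$, or a $\bi$ with target $\ai$); since entities are distinct across samples, the $x_1$-th row of $Y$ is only updated by the unique training sample with that first token. Write $y(t) = Y(t)_{x_1,x_3}$ for the "correct logit" and $z_x(t) = Y(t)_{x_1,x}$ for $x\neq x_3$. Under SGD with per-step probability $1/N$ of sampling this example (batch size 1), in expectation the drift is $\dot y = (\eta_Y/N)(1 - p)$ and $\dot z_x = -(\eta_Y/N)\,p_x$, where $p = \mathrm{softmax}(Y_{x_1,\cdot})_{x_3}$ and $p_x$ the analogous coordinate. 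I would first argue by symmetry of the initialization ($Y(0)=0$) and of the dynamics that all the "wrong" coordinates $z_x$ stay equal to a common value $z(t) \le 0$, with $p = e^{y}/(e^{y} + (M-1)e^{z})$ and the constraint $y(t) + (M-1)z(t) = 0$ (the row sum is conserved since $\1^\top(\vx_3 - \valpha) = 0$). This collapses everything to a single scalar ODE $\dot y = (\eta_Y/N)\bigl(1 - \tfrac{e^{y}}{e^{y} + (M-1)e^{-y/(M-1)}}\bigr)$. For the regime $M \gg 100$, $y \gtrsim 0$, the denominator is dominated by the $(M-1)e^{-y/(M-1)} \approx M$ term until $y \sim \ln M$, so $\dot y \approx (\eta_Y/N)(1 - e^{y}/M)$, whose solution grows like $y(t) \asymp \ln(\eta_Y M t / N)$ up to the saturation time; a matching lower bound on $y$ and the identity $z = -y/(M-1)$ then give the two stated bounds $Y(t)_{x_1,x_3} \gtrsim \ln(M\eta_Y t/N)$ and $Y(t)_{x_1,x} \lesssim -\ln(M\eta_Y t/N)/M$ once $t \gtrsim N\ln M/\eta_Y$. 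To upgrade from expected drift to the actual stochastic trajectory I would use a standard concentration / stochastic-approximation argument: the increments are bounded by $\eta_Y$, over $\Theta(N\ln M/\eta_Y)$ steps the accumulated noise is $O(\sqrt{\eta_Y \ln M}) \ll \ln M$ given $\eta_Y < 1$, so the bounds hold with the stated $\gtrsim / \lesssim$ (absorbing constants). The test-sequence claim is immediate: if $(x_1,x_2,x_3)\in\dataset_\test$ then by construction $x_1$ (as an entity token in the "unseen direction") never appears as the first token of any training sample, so its row of $Y$ is never updated and stays $0$; strictly, I should also note relationship tokens $\to,\gets$ do appear in training as first tokens but the test queries are indexed by entities $\ai,\bi$ with $i\in\idx_\test^{(1)}\cup\idx_\test^{(2)}$, and each such entity appears as a first token in exactly zero directions within $\dataset_\train$.

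The main obstacle I anticipate is not the ODE integration — that is routine — but making the reduction to the scalar equation fully rigorous in the stochastic setting: justifying that the "wrong" coordinates remain (approximately) symmetric and negative along the entire SGD path, and controlling the feedback between the fluctuations of $p$ and the drift of $y$. The clean way is to prove two-sided deterministic envelopes: show $y(t)$ is sandwiched between the solutions of $\dot y = (\eta_Y/N)(1 - e^{y}/c_1)$ and $\dot y = (\eta_Y/N)(1 - e^{y}/c_2)$ for constants $c_1 \asymp c_2 \asymp M$ valid while $y \le O(\ln M)$, using that each $z_x \le 0$ (monotone, since $\dot z_x \le 0$ in expectation and one can show it never goes positive) and $\sum_{x\neq x_3} e^{z_x} \le M-1$ together with a lower bound $\sum_{x\neq x_3}e^{z_x} \gtrsim M$ coming from $\sum z_x = -y$ and convexity (Jensen: $\frac{1}{M-1}\sum e^{z_x} \ge e^{-y/(M-1)} \ge 1 - o(1)$). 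A secondary technical point is the lower end $t \gtrsim N\ln M/\eta_Y$ and the lower-bound requirement $\eta_Y \gg M^{-0.99}$, which is exactly what is needed so that the per-step change $\eta_Y/N$ times the number of steps to reach $y \asymp \ln M$ is consistent and the noise term is lower-order; I would track these constants carefully since they appear in the statement.
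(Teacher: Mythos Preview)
Your reduction is correct and matches the paper's: from \Cref{lem:gradient_Y_Z} you get $\dot Y = \eta_Y\,\vx_1(\vx_3-\valpha)^\top$, so only the $x_1$-th row moves and the rows decouple; the symmetry of the non-target coordinates and the row-sum conservation $\sum_x Y_{x_1,x}=0$ then collapse everything to a scalar recursion/ODE whose solution grows logarithmically. The test claim follows exactly as you say.

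The difference from the paper is in how the scalar analysis is carried out. The paper does not develop the ODE and envelope argument you propose; instead it (i) interprets ``SGD with batch size~1'' as deterministic cycling through $\dataset_\train$, so after $t$ steps the $x_1$-th row has been updated exactly $\lceil t/N\rceil$ times, and (ii) invokes Lemmas~5, 7, and 9 of \cite{tian2023scan} as black boxes. Concretely, Lemma~5 of \cite{tian2023scan} asserts the closed form $\vy(i)=(M-1)h^*(i)\,\vxi_{x_3}$ with $\vxi_{x_3}=\tfrac{M}{M-1}(\vx_3-\tfrac{1}{M}\mathbf 1)$ --- which is exactly your symmetry-plus-conservation observation packaged as a structural lemma --- and the scalar $h^*(i)$ satisfies the one-dimensional recursion $h^*(i)=h^*(i-1)+\eta_Y/\bigl((M-1)+\exp(Mh^*(i-1))\bigr)$; Lemmas~7 and 9 of \cite{tian2023scan} then give $h^*(i)\gtrsim \tfrac{1}{M}\ln(M\eta_Y i)$ for $i\gtrsim \ln M/\eta_Y$, and plugging $i=\lceil t/N\rceil$ yields the stated bounds directly. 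So the paper's route is shorter because it outsources both the structural identity and the growth estimate, at the cost of being less self-contained. Your route rederives both from first principles; this is fine, but note that under the deterministic-cycling reading of SGD the concentration step you sketch is unnecessary --- the number of updates to a given row is exactly $\lceil t/N\rceil$, not a random variable --- so you can drop that part and work entirely with the deterministic discrete recursion, which is what \cite{tian2023scan}'s lemmas do.
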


The proof of \Cref{lem:dynamics_Y} is presented in \Cref{app:subsubsec:proof_lem_dynamics_Y}. \Cref{lem:dynamics_Y} implies the asymmetry of the model weights $Y(t)$: for two tokens $x_1, x_3$, when $x_1$ appears as a contextual token and $x_3$ serves as the next token in the same training sequence, the model weights $Y(t)_{x_1,x_3}$ gets increased during training while $Y(t)_{x_3,x_1}$ will not get increased. Combining \Cref{prop:NTP_closed_form}, 
we can obtain our main theorem for the reversal curse.

\begin{theorem}[Reversal curse] 
\label{thm:reversal_curse}
    Assume we run SGD with batch size 1, and assume $M \gg 100$ and $\frac{1}{M^{0.99}} \ll \eta_Y < 1$. 
Let $t \gtrsim  \frac{N \ln M}{\eta_Y}$ denote the time step which also satisfies $\ln t \gtrsim \ln (NM/\eta_Y)$.
For training sequence $(x_1, x_2, x_3) \in \dataset_{\train}$ at time $t$, we have
\begin{align*}
    p_{\theta(t)}(x_3 | x_1, x_2) \geq 1 - (M-1) (M \eta_Y t / N)^{-c} \to 1, \quad \text{as} \ \ t \to \infty
\end{align*}
for some constant $c > 0$, and for any test sequence $(x_1, x_2, x_3) \in \dataset_{\test}$ that is not included in the training set $\mathcal{D}_\train$, we have $p_{\theta(t)}(x_3 | x_1, x_2) \leq 1/M$.
\end{theorem}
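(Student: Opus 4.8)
The plan is to read the theorem off from the closed form for the next‑token probability (\Cref{prop:NTP_closed_form}) together with the row‑wise control of $Y(t)$ proved in \Cref{lem:dynamics_Y}; beyond those two inputs the argument is a short softmax estimate, so I would not re‑enter the SGD dynamics at all.

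\emph{Training sequences.} Fix $(x_1,x_2,x_3)\in\dataset_\train$. First I would note that $t\gtrsim N\ln M/\eta_Y$ together with $\eta_Y<1$ forces $M\eta_Y t/N\gtrsim M\ln M>1$, so $\ln(M\eta_Y t/N)>0$ and the estimates of \Cref{lem:dynamics_Y} are genuinely of the stated signs. Using \Cref{prop:NTP_closed_form} and discarding the off‑diagonal terms in the denominator,
\[
1-p_{\theta(t)}(x_3\mid x_1,x_2)\le\frac{\sum_{x\neq x_3}\exp(Y(t)_{x_1,x})}{\exp(Y(t)_{x_1,x_3})}.
\]
By \Cref{lem:dynamics_Y}, $Y(t)_{x_1,x}\lesssim-\ln(M\eta_Y t/N)/M<0$ for every $x\neq x_3$, so each summand in the numerator is at most $1$ and the numerator is at most $M-1$; and $Y(t)_{x_1,x_3}\ge c\ln(M\eta_Y t/N)$ for some absolute constant $c>0$, so the denominator is at least $(M\eta_Y t/N)^{c}$. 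This yields $p_{\theta(t)}(x_3\mid x_1,x_2)\ge 1-(M-1)(M\eta_Y t/N)^{-c}$, and letting $t\to\infty$ the right side tends to $1$. The extra hypothesis $\ln t\gtrsim\ln(NM/\eta_Y)$ enters only to make the bound informative: it forces $\ln(M\eta_Y t/N)$ to be a large enough multiple of $\ln M$ that $(M-1)(M\eta_Y t/N)^{-c}<1$.

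\emph{Test sequences.} Fix $(x_1,x_2,x_3)\in\dataset_\test$ with $(x_1,x_2,x_3)\notin\dataset_\train$. The point I would make is that $x_1$ is never the contextual (first‑position) token of any training sentence. Indeed, for $i\in\idx_{\test}^{(1)}$ the test sentence is $\bi\gets\ai$, so $x_1=\bi$, and in $\dataset_\train$ the token $\bi$ occurs only as the predicted token of $\ai\to\bi$ and never in first position, since $\bi\gets\ai$ is withheld; the case $i\in\idx_{\test}^{(2)}$ is symmetric with $x_1=\ai$; and the $2N_\total$ entity tokens are distinct, so no collision creates an unwanted occurrence. Hence by \Cref{lem:dynamics_Y}, $Y(t)_{x_1,x}=0$ for all $x\in[M]$, and \Cref{prop:NTP_closed_form} gives $p_{\theta(t)}(x_3\mid x_1,x_2)=1/M\le 1/M$.

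\emph{Main obstacle.} At this level there is essentially no obstacle: the only care needed is (i) to invoke the two time hypotheses for their distinct roles --- $t\gtrsim N\ln M/\eta_Y$ to sign the off‑diagonal entries of the active row, and $\ln t\gtrsim\ln(NM/\eta_Y)$ to keep the final bound non‑vacuous --- and (ii) the bookkeeping that a withheld test sentence removes its query token from every training context, which rests on the disjointness of $\{\ai\}$, $\{\bi\}$ and the partition into $\idx_\train,\idx_{\test}^{(1)},\idx_{\test}^{(2)}$. The real difficulty is packaged inside \Cref{lem:dynamics_Y}: propagating the batch‑size‑$1$ SGD updates of \Cref{lem:gradient_Y_Z} to show that, along a training sentence, the correct entry of the active row of $Y(t)$ grows like $\ln(M\eta_Y t/N)$ while the other $M-1$ entries of that row stay nonpositive, and that rows indexed by tokens never seen in contextual position remain exactly zero.
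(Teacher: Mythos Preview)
Your proposal is correct and follows essentially the same route as the paper: both combine \Cref{prop:NTP_closed_form} with the row-wise bounds of \Cref{lem:dynamics_Y}, bound the off-diagonal numerator by $M-1$ via $Y(t)_{x_1,x}\le 0$, bound the denominator below by $(M\eta_Y t/N)^{c}$, and use $Y(t)_{x_1,\cdot}\equiv 0$ for test $x_1$. Your explicit bookkeeping that a withheld test sentence's first token never appears in contextual position, and your identification of the distinct roles of the two time hypotheses, are things the paper leaves implicit but are exactly the right points to make.
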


\Cref{thm:reversal_curse} shows that although the direction presented in the training set can be learned nearly perfectly, the model's next token prediction of the reverse direction is almost a random guess. The proof is deferred to \Cref{app:subsubsec_proof_reversal_curse}. We also empirically validate the above results for multi-layer transformers in \Cref{sec:exp}.

\subsection{Chain-of-thought}
\label{subsec:cot_and_ac}

In this section, we extend our analysis in \Cref{subsec:main_result_reversal_curse} to study other logical relationships. In particular, we study chain-of-thought (COT)~\citep{wei2022chain} and show its importance via training dynamics.
COT encourages LLMs to output a series of intermediate reasoning steps to increase their performance. Consider the simplest example, where the model learns two facts that $\tokenA \dto \tokenB$ and $\tokenB \dto \tokenC$, and we want to test whether the model is able to directly conclude that $\tokenA \ito \tokenC$. COT indicates that if an LLM is only trained on $\tokenA \dto \tokenB$ and $\tokenB \dto \tokenC$, it would be easier for the model to deduce $\tokenA \ito \tokenC$ during the inference time if the model can first output the intermediate steps $\tokenA \dto \tokenB$ and $\tokenB \dto \tokenC$, instead of directly predicting the next token $\tokenC$ given the input ``$\tokenA \ito$''. The failure of directly deducing $\tokenA \ito \tokenC$ is also empirically observed by \cite{allen2023physics}.

Theoretically, \cite{feng2024towards} shows the importance of COT for some complex reasoning tasks through the lens of the expressivity of transformers. In this section, we show the importance of COT through a different angle, i.e., training dynamics. We show that for the above simplest two-step reasoning, without COT, the model is not able to directly predict $\tokenC$ given the input ``$\tokenA \ito$'' even if it learns $\tokenA \dto \tokenB$ and $\tokenB \dto \tokenC$.

\begin{theorem}[Importance of chain-of-thought, informal statement of \Cref{prop:cot}] 
\label{prop:cot_informal}
    Under certain assumptions as stated in \Cref{prop:cot},
for any $\ai, \bi, \ci$ s.t. $\ai\dto\bi$ and $\bi\dto\ci$ are in the training set but $\ai\ito\ci$ is not, we have 
\begin{align*}
    p_{\theta(t)}(\bi | \ai \dto) \to 1,  \quad p_{\theta(t)}(\ci | \bi \dto) \to 1, \quad  
    p_{\theta(t)}(\ci | \ai \ito) \leq 1/M,\quad \text{as} \ \ t \to \infty.
\end{align*}
\end{theorem}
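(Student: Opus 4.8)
The plan is to mirror the analysis of the reversal curse in \Cref{subsec:main_result_reversal_curse}, exploiting the key simplification from \Cref{prop:NTP_closed_form} that the next-token probability for a two- or three-token prompt is governed entirely by a single row of the reparameterized matrix $Y(t)$, so that the dynamics of $Z(t)$ can be ignored. First I would set up the dataset so that each training sequence is a three-token sentence of the form $\ai \dto \bi$ or $\bi \dto \ci$ (and, for the tokens being tested, exactly one of the two links together with the ``composed'' sentence $\ai \ito \ci$ held out of training), and record which $(x_1, x_3)$ pairs ever co-occur as (contextual token, label) in a training sentence. The crucial structural fact is that for the held-out triples, the pair $(\ai, \ci)$ \emph{never} appears together in any training sequence: $\ai$ only ever appears as the first token of $\ai \dto \bi$, whose label is $\bi$, and $\ci$ only ever appears as the label of $\bi \dto \ci$, whose contextual token is $\bi$, not $\ai$. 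Hence the $\ai$-th row of $Y(t)$ receives gradient contributions only from the sentence $\ai \dto \bi$ (increasing $Y(t)_{\ai, \bi}$ and slightly decreasing every other entry of that row uniformly), and in particular $Y(t)_{\ai, \ci}$ is never singled out for an increase.

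Next I would invoke (a suitably adapted version of) \Cref{lem:dynamics_Y}: for training sequences the relevant entry $Y(t)_{x_1, x_3}$ grows like $\ln(M\eta_Y t / N)$ while every off-target entry in that row decays, giving $p_{\theta(t)}(\bi \mid \ai \dto) \to 1$ and $p_{\theta(t)}(\ci \mid \bi \dto) \to 1$ by the same computation as in \Cref{thm:reversal_curse}. For the composed query, I would argue that the $\ai$-th row of $Y(t)$ has the \emph{same} value in the $\ci$-coordinate as in every coordinate $x \notin \{\bi\}$ — by symmetry of the gradient update on the sentence $\ai \dto \bi$, all coordinates of that row other than the $\bi$-coordinate move identically from their (essentially zero) initialization — so $Y(t)_{\ai, \ci} = Y(t)_{\ai, x}$ for all $x \neq \bi$ and in fact $Y(t)_{\ai, \ci} < Y(t)_{\ai, \bi}$. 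Plugging this into the closed form of \Cref{prop:NTP_closed_form}, $p_{\theta(t)}(\ci \mid \ai \ito)$ is at most the average of $\exp(Y(t)_{\ai, x})$ over the $M-1$ coordinates $x \neq \bi$ divided by the full normalizer, which is bounded by $1/(M-1)$ or, more crudely, $1/M$ after accounting for the $\bi$-coordinate; care is needed here to get exactly $1/M$ rather than $1/(M-1)$, which follows because the $\bi$-coordinate is the largest and including it in the normalizer only helps.

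The main obstacle I anticipate is handling the attention matrix $Z(t)$ and the intermediate token $\bi$ cleanly: unlike the pure reversal setting, here $\bi$ plays a dual role (label in one sentence, contextual/query token in another), so one must check that the argument of \Cref{prop:NTP_closed_form} — that the logit reduces to a single entry of $Y$ independent of $Z$ — still applies to all three query types, and in particular that nothing in the training of $Z(t)$ from the $\bi \dto \ci$ sentences leaks information into the prediction on the prompt $\ai \ito$. I expect this to go through because the prompt $\ai \ito$ has contextual token $\ai$ only (length-one context after the relationship token, as in the reversal setup), so $\text{LN}(X^\top \vb_T) = \vu_{\ai}$ regardless of $Z$, but verifying the bookkeeping of token roles and re-deriving \Cref{lem:dynamics_Y} with this richer co-occurrence structure is where the real work lies. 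A secondary subtlety is ruling out the degenerate possibility that $\ai$ and $\ci$ happen to coincide with, or collide against, entities appearing in other training triples; this is handled by the assumption (to be stated in the formal \Cref{prop:cot}) that all entity tokens $\ai, \bi, \ci$ across the relevant triples are distinct.
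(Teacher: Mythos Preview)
Your proposal is correct and follows essentially the same approach as the paper: reduce to the $Y(t)$ dynamics via \Cref{prop:NTP_closed_form}, apply \Cref{lem:dynamics_Y} to the rows indexed by $\ai$ and $\bi$ to get the two convergence-to-one statements, and use the symmetry observation that all coordinates of the $\ai$-row other than $\bi$ evolve identically (since the only training sentence with $\ai$ as contextual token has label $\bi$) to bound $p_{\theta(t)}(\ci \mid \ai \ito) \leq 1/M$. Your anticipated obstacles regarding $Z(t)$ and the dual role of $\bi$ are non-issues exactly for the reason you identify---\Cref{prop:NTP_closed_form} already makes the logits depend only on the $x_1$-row of $Y(t)$ regardless of $Z(t)$---so no re-derivation of \Cref{lem:dynamics_Y} is needed beyond noting that for test-index $\ai$ there is a single training sentence touching that row.
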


We defer the details of the dataset construction and proof to \Cref{app:subsec_proof_cot}. \Cref{prop:cot_informal} shows that although the LLM learns $\ai \dto \bi$ and $\bi \dto \ci$ nearly perfectly, it cannot directly deduce $\ai \ito \ci$. Analogous to the asymmetry of causal transformer weights as we discussed in \Cref{subsec:main_result_reversal_curse}, our analysis of COT reveals another property, i.e., intransitivity: training the weights associated with $\tokenA$ to $\tokenB$ and $\tokenB$ to $\tokenC$ does not necessarily increase the weights associated with $\tokenA$ to $\tokenC$.  

We also emphasize that the model fails to directly deduce $\ai \ito \ci$ when the two intermediate steps $\ai \dto \bi$ and $\bi \dto \ci$ are trained \emph{separately}. If the two steps are concatenated into a single training sequence, it is possible that the model learns $\ai \ito \ci$ directly~\citep{wang2024understanding}. 

\subsection{Roles of the attention score matrix}
\label{subsec:role_of_attention}

During the analysis of \Cref{subsec:main_result_reversal_curse,subsec:cot_and_ac}, we show that the reversal curse and the importance of COT are largely due to the asymmetry and intransitivity of causal transformer weights (in our case, the weight matrix $Y(t)$). However, it seems that the dynamics of the attention score matrix $Z(t)$ do not impact the model performance. Below, we briefly discuss the role of the attention score matrix $Z(t)$.

In \eqref{eq:bt_and_NTP_reparameterization}, the attention score is used to calculate the weights $b_{tT}$, where a contextual token $x_t$ with a larger attention score attended by the query token $x_T$ has a larger weight. Note that we use the same formulation as the previous work \cite{tian2023scan} where the query token will not attend to itself. Therefore, for a three-token training sequence, the weights $b_{12}$ is always one since there is only one contextual token $x_1$, no matter whether the value of the attention score is high or low.

However, consider a slightly different setting, where the relationship is represented by two tokens. In that case, $x_1 = \ai, x_2 = \rone, x_3 = \rtwo, x_4 = \bi$, and there are two contextual tokens $\ai$ and $\rone$. The role of the attention score is then to select the important token, i.e., $\ai$, by putting more weights on it. Theorem 2 of \cite{tian2023scan} showed that under certain assumptions, the query token $\rtwo$ will attend more to ``distinct tokens'' $\ai$ and less to the ``common token'' $\rone$. Therefore, the query token $\rtwo$ will eventually put all weights to $\ai$, and the remaining analysis remains the same as in \Cref{subsec:main_result_reversal_curse,subsec:cot_and_ac}. See \Cref{app:subsec_reversal_four_token} for a more rigorous analysis.

\section{Experiments}
\label{sec:exp}

In this section, we conduct experiments to further validate our theoretical results in \Cref{sec:one_layer_transformer} on multi-layer transformers. We show experimental results of the reversal curse in this section and COT in \Cref{subsec:exp_cot}. Note that in \Cref{sec:training_dynamics_bilinear,sec:one_layer_transformer}, we theoretically proved the reversal curse for both the bilinear model and one-layer transformer under certain assumptions.  Now, we empirically show that the reversal curse still happens even for multi-layer transformers. In \Cref{sub:add_ICL}, we also provide empirical results that the reversal curse does not happen in ICL settings.

\begin{figure}[ht]
  \centering
  \includegraphics[width=10cm]{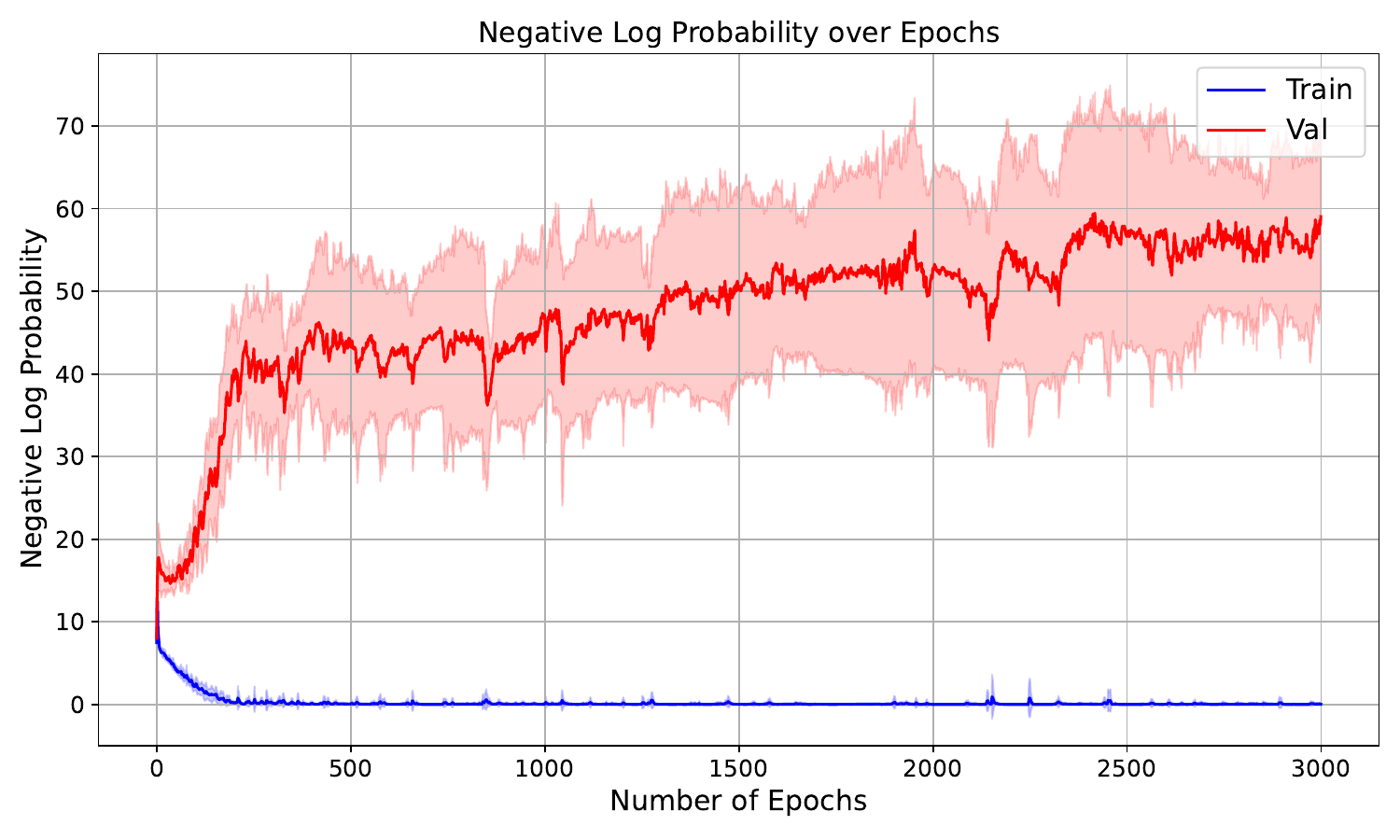}
  \caption{ Experiment results of reversal curse under default configuration (see \Cref{app:model_data_configs_table}). The curves represent the (average) negative log probability of the model predicting the next token to be $\bi$ when the input is ``$\ai\to$'', or to be $\ai$ when the input is ``$\bi\gets$''. While the sentences in the training set can be learned nearly perfectly (as shown by the training curve where the next token probability converges to one), the model is not able to predict the correct next token in the validation set better than a uniformly random guess.  Both curves are averaged over 10 random seeds.}
  \label{fig:reverse_main}
\end{figure}

\begin{figure}[ht]
  \centering
  \includegraphics[width=9cm]{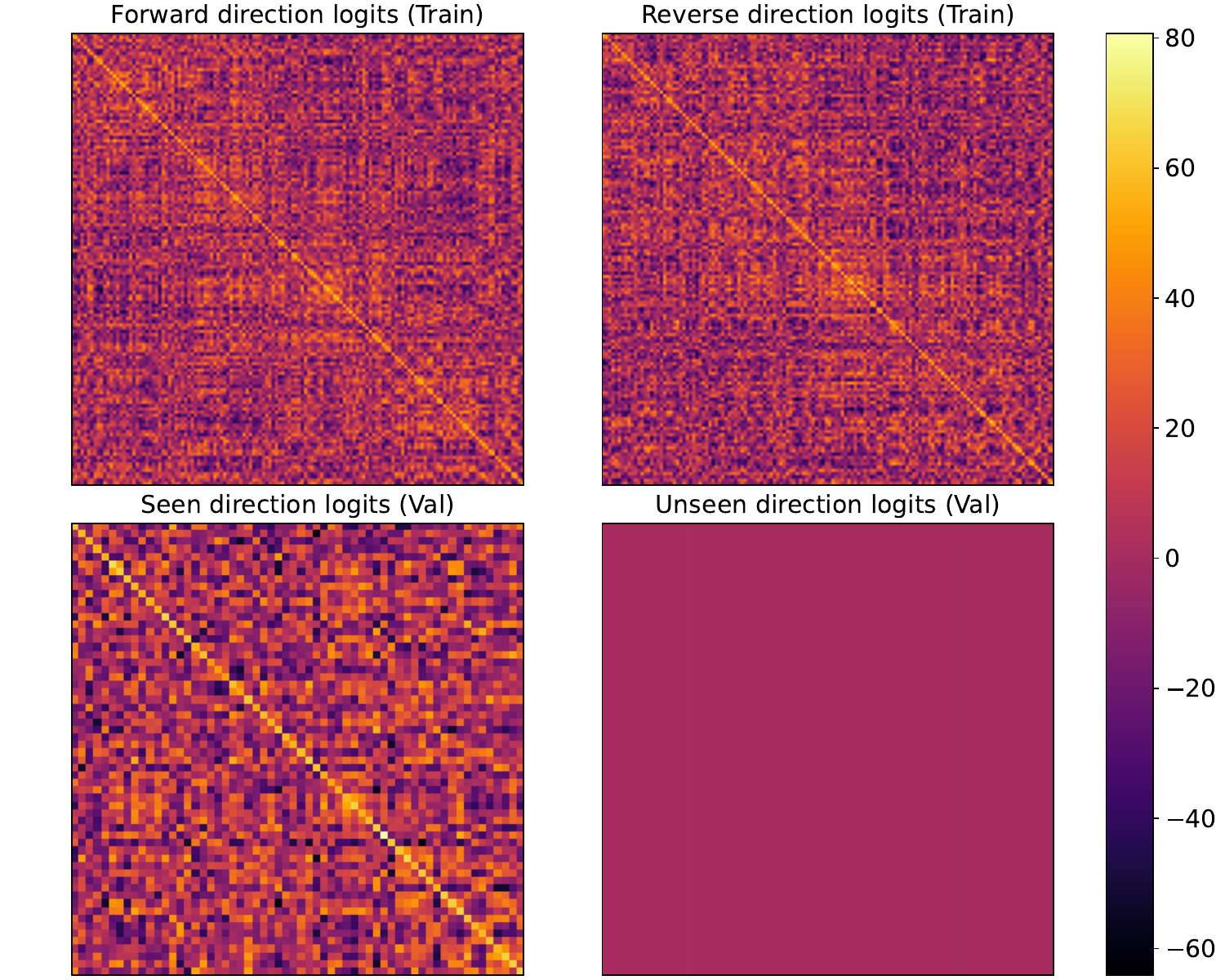}
  \caption{ Visualization of the weights (logits) of the model with default configurations trained after 3000 epochs for the reversal curse experiment. For the top-left matrix, the $i$-th  row corresponds to an entity token $\ai$ for a training pair, and the $i$-th column corresponds to an entity token $\bi$ for a training pair. The $(i,j)$-th entry represents the model weights from the token $\ai$ to $\texttt{B}_j$, i.e., the logits of $\texttt{B}_j$ when the input sequence consists of only $\ai$. Similarly, for the bottom-left matrix, the row corresponds to the input entity tokens of the seen direction (the direction included in the training set) of validation pairs, and the column corresponds to output entity tokens. The two matrices on the right are obtained by swapping row tokens and column tokens of their corresponding left matrices. Note that the diagonals of the bottom-right matrix are all close to zero, while the diagonals of other matrices all have large values. This implies that if a pair of tokens $(\tokenA, \tokenB)$ only appear in the training set in one direction, then the model weights associated with the other direction will hardly get trained.} 
  \label{fig:reverse_logits}
\end{figure}

\paragraph{Dataset construction.} 

Below, we describe how we generate our synthetic dataset for experiments on the reversal curse. We choose the vocabulary $\mathcal{V}= \{0, 1, \ldots, N\}$ for a specified $N > 0$. We randomly sample two disjoint sets of entities $\mathcal{A}, \mathcal{B} \subset \mathcal{V}$ with $|\mathcal{A}|=|\mathcal{B}|= |\mathcal{V}|/4$, and reserve two additional tokens for relationships $\to$ and $\gets$, respectively. \footnote{By default, each entity consists of one token. See multi-token experiments in \Cref{app:subsec_exp_reversal_curse,app:subsec_exp_cot}} Next, we specify a bijection from $\mathcal{A}$ to $\mathcal{B}$ uniformly at random. For each $\ai \in \mathcal{A}$ and its corresponding $\bi \in \mathcal{B}$, we can obtain a pair of sequence ($\ai \to \bi$, $\bi \gets \ai$). We split the set of all pairs into training pairs and validation pairs. For each training pair, both sequences will be included in the training set, while for the validation pair, we randomly select one sequence for the training set and the other for the validation set.  Therefore, the model will learn both directions for the training pairs and only one direction for each validation pair while being tested in the unseen direction.

\paragraph{Model architectures.} 
We train multi-layer transformers based on GPT-2 architecture~\cite{wolf-etal-2020-transformers}. \Cref{fig:reverse_main} shows the results where the model has 24 layers, 12 attention heads per layer, uses absolute positional encoding, and we choose the vocabulary size of 800. The training set size is 340, and the validation set size is 60 (resulting from 140 training pairs and 60 validation pairs). We also conducted experiments with various model configurations and vocabulary sizes in \Cref{app:subsec_exp_reversal_curse}. Besides, all hyperparameters and different model configurations are presented in \Cref{app:subsec_exp_details}.

\paragraph{Results.} \Cref{fig:reverse_main}
shows that during the training, the next token probability for training data increases a lot while the next token probability for validation data remains unchanged or gets even smaller. This is consistent with our theoretical results of \Cref{thm:reversal_curse}.

According to our theoretical analysis, the reversal curse happens due to the asymmetry of model (re-parameterized) weights (i.e., logits of a token given another token as input), and we also empirically validate the asymmetry for multi-layer transformers. \Cref{fig:reverse_logits} 
shows the model weights from a token $x_1$ to $x_3$ is trained large for a training sequence $(x_1, x_2, x_3)$ as represented by the diagonals of the first three matrices, while the weights from a token $x_1$ to $x_3$ remains nearly zero for a validation sequence $(x_1, x_2, x_3)$ as represented by the diagonals of the last matrix, which is consistent with \Cref{lem:dynamics_Y}. This implies that if a pair of tokens $(\tokenA, \tokenB)$ only appear in the training set in one direction, then the model weights associated with the other direction will hardly get trained.

\section{Conclusions}
\label{sec:conclusion}

In this paper, we study the reversal curse theoretically via training dynamics of (1) a bilinear model, which is a simplification of the one-layer transformer; (2) one-layer transformers under certain technical assumptions similar to \cite{tian2023scan}. Our theoretical results suggest that a core reason the reversal curse happens in auto-regressive LLMs is the asymmetry of the model weights, and we apply our technique to prove the necessity of COT for one-layer transformers, which is mainly due to the intransitivity of model weights. 
The asymmetry and intransitivity of model weights caused by unconstrained optimization of CE loss indicate that an auto-regressive LLM might mainly focus on learning text sequences during training \emph{separately} instead of automatically deducing indirect conclusions under the current popular training paradigms. This highlights the importance of ICL, data augmentation, or planning for current auto-regressive LLMs to solve complex reasoning tasks. 

As for future directions, it would be interesting and important to study: (1) What is a unified way to characterize and study the reversal curse, COT, and other similar logical reasoning tasks? (2) Our paper mainly focuses on three-token sequences, where each entity or relationship is represented by a single token. While we empirically explored the setting where each entity might consist of multiple tokens and distinct entities might share a few tokens, it would be interesting to analyze the multiple-token setting theoretically. (3) We theoretically analyzed the bilinear model and one-layer transformer, and it would be an important future direction to extend the analysis to multi-layer transformers.

\bibliographystyle{plainnat}
\bibliography{references,ref}

\newpage
\appendix

\section{Additional Notations}
\label{app:sec_add_notation}

 Let $\delta_{ij} = 1$ for $i = j$ and $\delta_{ij} = 0$ for $i \neq j$. For a squared matrix $A \in \mathbb{R}^{d\times d}$, its trace is $\trace(A) = \sum_{i=1}^d A_{ii}$. For two matrices $A, B \in \mathbb{R}^{m \times n}$ of the same shape, their inner product is defined as $\langle A, B \rangle = \trace(AB^\top)$. For any matrix $A \in \mathbb{R}^{m\times n}$, its (Frobenius) norm is defined as $\| A \| = \sqrt{\langle A, A\rangle}$. For any vector $\vx = (x_1, \ldots, x_d)^\top \in \mathbb{R}^d$ or a matrix $A \in \mathbb{R}^{m \times n}$, we define the zero-norm as $\|\vx\|_{0} = \sum_{i=1}^d \mathbbm{1}\{ x_i \neq 0 \}$ or $\|A\|_0 = \sum_{i=1}^m\sum_{j=1}^n \mathbbm{1}\{ A_{ij} \neq 0 \}$ where $\mathbbm{1}\{\cdot\}$ is the indicator function.

\section{Missing Proofs of \Cref{sec:training_dynamics_bilinear}}
\label{sec:proof}

\begin{theorem}[Separation of training dynamics, formal statement of \Cref{thm:dynamics-bilinear-informal}]\label{thm:dynamics-bilinear}
Fix arbitrary $\delta,\epsilon \in (0,1)$. Let $v_1,\dots,v_m$ be independently sampled from $\normal_d (\zero_d, \frac{1}{d}I_d)$. Let $x_1,\dots,x_n$ and $y_1,\dots,y_n$ be sampled uniformly at random from $\{v_1,\dots,v_m\}$ without replacement. Define
\begin{align*}
    \loss(\Theta) = &~ \frac{1}{2n-1}\left(\sum_{i=1}^n -\log p_\Theta(y_i|x_i) + \sum_{i=2}^n -\log p_\Theta(x_i|y_i)\right)\\
    \loss^{\rev}(\Theta) = &~ -\log p_\Theta(x_1|y_1).
\end{align*}
Consider the gradient flow $\Theta_t: t \geq 0$
\begin{align*}
    \frac{d \Theta_t}{dt} = -\nabla \loss(\Theta_t).
\end{align*}
where $\Theta_0 \sim \normal(\mathbf{0}^{\otimes 2},\sigma^2\cdot I^{\otimes 2})$ or $\Theta_0$ satisfies $\frac{1}{2m} < p_{\Theta_0}(y_i|x_i), p_{\Theta_0}(x_i|y_i) < \frac{2}{m}$ for all $i \in [n]$. 
Suppose $\sigma \leq \frac{1}{100\ln(64m^2/\delta)}$ and 
\begin{align*}
    d \geq \frac{10^6n^4m^2\log^4(2m)\log(64m^2n^2/\delta)}{\epsilon^2}.
\end{align*}
With probability at least $1-\delta$, we have
\begin{align*}
    \frac{\loss^{\rev}(\Theta_t)}{\loss^{\rev}(\Theta_0)} \geq \left(\frac{\loss(\Theta_t)}{\loss(\Theta_0)}\right)^{\epsilon}, ~\forall t \geq 0.
\end{align*}
\end{theorem}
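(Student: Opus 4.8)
Since $\loss$ is smooth on the finite–dimensional space of matrices, the gradient flow $\dot\Theta_t=-\nabla\loss(\Theta_t)$ exists for all $t\ge 0$ and $\loss(\Theta_t)$ is non‑increasing, so $\loss(\Theta_t)\le\loss(\Theta_0)$. Moreover, under either initialization hypothesis every per‑example probability at time $0$ lies in a window around $1/m$ (for the Gaussian initialization this holds once $\sigma$ is small, since then all logits $v^\top\Theta_0 v'$ are tiny), so $\loss(\Theta_0)$ and $\loss^{\rev}(\Theta_0)$ are both of order $\log m$ — in particular bounded above by $O(\log m)$ and below by a positive constant. Taking logarithms, the claimed inequality is equivalent to $h(t)\ge h(0)$ for all $t\ge0$, where $h(t):=\log\loss^{\rev}(\Theta_t)-\epsilon\log\loss(\Theta_t)$. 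As $h$ is $C^1$ and the two sides agree at $t=0$, it suffices to show $h'(t)\ge0$ for every $t$, i.e.
\[
\epsilon\,\frac{\|\nabla\loss(\Theta_t)\|^2}{\loss(\Theta_t)}\ \ge\ \frac{\langle\nabla\loss^{\rev}(\Theta_t),\nabla\loss(\Theta_t)\rangle}{\loss^{\rev}(\Theta_t)} .
\]
If the inner product on the right is $\le 0$ this is immediate, so from now on assume it is positive.

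First I would fix the high–probability event: a standard concentration bound for Gaussian norms and inner products gives, with probability $\ge 1-\delta$, that $|\langle v_i,v_j\rangle-\delta_{ij}|\le\gamma$ for all $i,j$ with $\gamma=O\big(\sqrt{\log(m^2/\delta)/d}\big)$, and the hypothesis on $d$ makes $\gamma$ smaller than any prescribed inverse polynomial in $n,m,1/\epsilon$; work on this event (intersected with the given probability bound on $p_{\Theta_0}$ in the Gaussian case). A direct computation gives the rank‑one gradients
\[
\nabla_\Theta\big(-\log p_\Theta(y|x)\big)=x\,(\bar v_x-y)^\top,\qquad \bar v_x:=\textstyle\sum_{v\in\mathcal V}p_\Theta(v|x)\,v,
\]
so $\nabla\loss(\Theta)=\frac1{2n-1}\sum_j b_j c_j^\top$, where $(b_j,c_j)$ runs over $(x_i,\bar v_{x_i}-y_i)$ for $i\in[n]$ and over $(y_i,\bar v_{y_i}-x_i)$ for $2\le i\le n$, while $\nabla\loss^{\rev}(\Theta)=y_1(\bar v_{y_1}-x_1)^\top$. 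The key structural fact is that $y_1$ never occurs as a \emph{query} token in the training set — the pair $(y_1,x_1)$ is precisely the one that was withheld — and $y_1$ differs from every $x_i$ and from $y_i$ for $i\ge2$; hence $|\langle y_1,b_j\rangle|\le\gamma$ for all $j$. Expanding the matrix inner product,
\[
\big|\langle\nabla\loss^{\rev}(\Theta),\nabla\loss(\Theta)\rangle\big|
=\Big|\tfrac1{2n-1}\textstyle\sum_j\langle\bar v_{y_1}-x_1,c_j\rangle\langle y_1,b_j\rangle\Big|
\ \le\ \frac{\gamma}{2n-1}\,\|\bar v_{y_1}-x_1\|\,\textstyle\sum_j\|c_j\| .
\]

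Three elementary estimates then close the comparison. (a) Since $\bar v_x-y=\sum_{v\ne y}p_\Theta(v|x)(v-y)$ and $\|v-y\|\le 2(1+\gamma)$, we get $\|\bar v_{y_1}-x_1\|\le 2(1+\gamma)\big(1-p_\Theta(x_1|y_1)\big)\le 2(1+\gamma)\loss^{\rev}(\Theta)$. (b) By near‑orthonormality of the query tokens $b_j$, the diagonal terms dominate and $\|\nabla\loss(\Theta)\|^2\ge\frac1{2(2n-1)^3}\big(\sum_j\|c_j\|\big)^2$ once $\gamma\lesssim 1/n$. (c) Testing $\bar v_x-y$ against $-y$ gives $\|\bar v_x-y\|\ge\frac1{1+\gamma}\big((1-p_\Theta(y|x))-2\gamma\big)$, and using $1-e^{-\ell}\ge\frac12\min\{\ell,1\}$ together with the fact that each per‑example loss is at most $(2n-1)\loss(\Theta)$, one obtains $\sum_j\|c_j\|\gtrsim\min\{1,n\loss(\Theta)\}$ whenever $\loss(\Theta)\gtrsim\gamma$ (the terms of loss $O(\gamma)$ contribute only an additive $O(n\gamma)$ and are dropped). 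Plugging (a)–(c) into the displayed bound, and using $\loss(\Theta_t)\le\loss(\Theta_0)=O(\log m)$, the right side of the target inequality is $\lesssim\gamma\sum_j\|c_j\|/n$ while the left side is $\gtrsim\epsilon\big(\sum_j\|c_j\|\big)^2/\big(n^3\loss(\Theta_t)\big)$; cancelling a factor $\sum_j\|c_j\|$ (if it is $0$ the inner product vanishes) reduces everything to $\gamma\lesssim\epsilon\min\{1,n\loss(\Theta_t)\}/\big(n^2\loss(\Theta_t)\big)$, which in turn follows from $\gamma\lesssim\epsilon/(n^2\log m)$ — guaranteed by the stated lower bound on $d$. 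Integrating $h'(t)\ge0$ over $[0,t]$ yields $h(t)\ge h(0)$, i.e. the theorem. In the residual regime $\loss(\Theta_t)\lesssim\gamma$ — where the training loss is already below the orthonormality error — all per‑example losses and $\|\nabla\loss\|$ are $O(\gamma)$, and comparing the two logarithmic derivatives directly (both $O(\gamma\loss(\Theta_t))$ in magnitude, with the training one bounded below by $\loss(\Theta_t)/n$ up to constants) again gives $h'(t)\ge0$ provided $\gamma\lesssim\epsilon/n$.

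\textbf{Main obstacle.} The crux is estimates (b)–(c) in combination with the trajectory‑boundedness of $\loss$: cross‑entropy admits no clean global Polyak--{\L}ojasiewicz inequality (the ratio $\|\nabla\loss\|^2/\loss$ can be driven small by a single badly‑fit example), so one must simultaneously exploit near‑orthonormality for the diagonal‑dominance lower bound on $\|\nabla\loss\|^2$, relate each $\|c_j\|$ to its per‑example loss only up to the unavoidable $O(\gamma)$ slack, and use monotonicity of $\loss$ under gradient flow to cap every per‑example loss by $O(\loss(\Theta_0))=O(\log m)$. Propagating the many $O(\gamma)$ errors coming from the $m$ nearly–orthonormal random embeddings through all of these steps, and patching the small‑$\loss$ regime, is the bulk of the work; once those estimates are in hand the conclusion is a one‑line differential‑inequality argument.
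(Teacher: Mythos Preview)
Your approach is correct and takes a genuinely different route from the paper. The paper proves the theorem in two stages: first it establishes an explicit rate $\loss(\Theta_t)\le\big(t/C+1/\loss(\Theta_0)\big)^{-1}$ with $C\asymp n\log^2 m$ via a PL--type inequality (which in turn relies on an auxiliary invariant $p_{\Theta_t}(\cdot\,|\,\cdot)>1/(2m)$ proved by a stopping--time/continuity argument), then it bounds $\frac{d}{dt}\log\loss^{\rev}(\Theta_t)$ by $-O(\gamma)\,\loss(\Theta_t)$, plugs in the explicit rate for $\loss(\Theta_t)$, and integrates in $t$; the two time--bounds are finally combined algebraically. You bypass the explicit $t$--parameterization entirely by differentiating $h(t)=\log\loss^{\rev}-\epsilon\log\loss$ and comparing the two logarithmic derivatives pointwise. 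This is cleaner for the present statement and avoids the $p>1/(2m)$ invariant altogether (your use of $1-e^{-\ell}\ge\tfrac12\min\{\ell,1\}$ together with $\loss(\Theta_t)\le\loss(\Theta_0)=O(\log m)$ replaces it); it also only needs pairwise near--orthonormality of the $v_i$ rather than the sparsity--dependent $\ell_2$ subspace--embedding the paper invokes, which in principle gives a smaller requirement on $d$. The paper's route, on the other hand, yields the explicit convergence rate of $\loss(\Theta_t)$ as a byproduct, which is what drives the corollary. One small sharpening: in your step (c), testing $\bar v_x-y$ against $-y$ actually gives $\|c_j\|\ge(1-O(\gamma))\,(1-p_j)$ with a \emph{multiplicative} rather than additive $\gamma$ (since $|\sum_{v\ne y}p(v|x)\langle v,y\rangle|\le\gamma(1-p_j)$, not $\gamma$); with this correction the ``residual regime'' $\loss\lesssim\gamma$ never needs separate treatment.
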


\begin{proof}
Let $v = \sqrt{\frac{400n^2m^2\log(64m^2n^2/\delta)}{d}}$. 
By \Cref{lem:dynamics-forward-loss} and \Cref{lem:dynamics-reversal-loss}, with probability at least $1-\delta$,
\begin{align*}
    \loss^{\rev}(\Theta_t) \geq &~ \loss^{\rev}(\Theta_0) \cdot \left(1+\frac{\loss(\Theta_0)\cdot t}{8(2n-1)\log^2(2m)}\right)^{-8v(2n-1)\log^2(2m)}.\\
    \geq &~ \loss^{\rev}(\Theta_0) \cdot\left(\frac{\loss(\Theta_t)}{\loss(\Theta_0)}\right)^{8v(2n-1)\log^2(2m)}
\end{align*}
By definition of $d$, we have $8v(2n-1)\log^2(2m) \leq \epsilon$. Notice that $\frac{\loss(\Theta_t)}{\loss(\Theta_0)} < 1$, thus
\begin{align*}
    \loss^{\rev}(\Theta_t) \geq &~ \loss^{\rev}(\Theta_0) \cdot\left(\frac{\loss(\Theta_t)}{\loss(\Theta_0)}\right)^{\epsilon}.
\end{align*}
\end{proof}

\begin{theorem}[Lower bound of reversal loss, formal statement of \Cref{coro:dynamics-bilinear-informal}]\label{coro:dynamics-bilinear}
Fix arbitrary $c > 0$ and $C \leq \log(m/2)$. Under the setting of \Cref{thm:dynamics-bilinear}, suppose $\sigma \leq \frac{1}{100\ln(64m^2/\delta)}$ and 
\begin{align*}
    d \geq {10^6n^4m^2\log^4(2m)\log(64m^2n^2/\delta)}\cdot\frac{\log^2 \frac{c}{\log(2m)}}{\log^2 \frac{C}{\log(m/2)}}.
\end{align*}
With probability at least $1-\delta$, 
\begin{align*}
    \loss^{\rev}(\Theta_\tau) \geq C.
\end{align*}
where $\tau$ denotes the first time such that $\loss(\Theta_t) \leq c$.
\end{theorem}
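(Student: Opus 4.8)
The plan is to derive Corollary~\ref{coro:dynamics-bilinear} directly from Theorem~\ref{thm:dynamics-bilinear} by choosing the parameter $\epsilon$ appropriately and then reading off the consequence at the stopping time $\tau$. First I would note that, by the definition of $\tau$ as the first time the training loss drops to $c$, continuity of the gradient flow gives $\loss(\Theta_\tau) \leq c$, and by Proposition-type facts about the initial loss we have $\loss(\Theta_0) \leq \log(2m)$ (since under either initialization the next-token probabilities are at least $\tfrac{1}{2m}$, each summand $-\log p_{\Theta_0}(\cdot|\cdot) \leq \log(2m)$, so the average is at most $\log(2m)$). Likewise $\loss^{\rev}(\Theta_0) = -\log p_{\Theta_0}(x_1|y_1) \geq \log(m/2)$ since $p_{\Theta_0}(x_1|y_1) < 2/m$. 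Hence $\loss(\Theta_\tau)/\loss(\Theta_0) \leq c/\log(2m)$ — wait, this needs care because dividing a small quantity by a possibly smaller one; I'd instead use $\loss(\Theta_\tau) \le c$ and $\loss(\Theta_0) \le \log(2m)$ together with monotonicity of the map $x \mapsto x^\epsilon$ on $(0,1]$, but actually since both appear only through the ratio raised to $\epsilon$, I would track $(\loss(\Theta_\tau)/\loss(\Theta_0))^{\epsilon}$ and bound it using the two one-sided estimates in the correct direction.

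The cleanest route: apply Theorem~\ref{thm:dynamics-bilinear} with a specific choice of $\epsilon$, namely
\begin{align*}
    \epsilon = \frac{\log \frac{C}{\log(m/2)}}{\log \frac{c}{\log(2m)}},
\end{align*}
which lies in $(0,1)$ provided $C \le \log(m/2)$ and $c$ is small enough that $c < \log(2m)$ (and if $c \ge \log(2m)$ the statement is near-vacuous since then $\tau=0$ and $\loss^{\rev}(\Theta_0)\ge\log(m/2)\ge C$ already). With this $\epsilon$, the hypothesis on $d$ in Theorem~\ref{thm:dynamics-bilinear}, namely $d \gtrsim n^4 m^2 \log^4(2m)\log(\cdot)/\epsilon^2$, becomes exactly the hypothesis stated in Corollary~\ref{coro:dynamics-bilinear}, since $1/\epsilon^2 = \log^2\frac{c}{\log(2m)} / \log^2\frac{C}{\log(m/2)}$. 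Then Theorem~\ref{thm:dynamics-bilinear} gives, at $t = \tau$,
\begin{align*}
    \loss^{\rev}(\Theta_\tau) \geq \loss^{\rev}(\Theta_0)\left(\frac{\loss(\Theta_\tau)}{\loss(\Theta_0)}\right)^{\epsilon} \geq \log(m/2)\cdot\left(\frac{c}{\log(2m)}\right)^{\epsilon},
\end{align*}
where in the last step I use $\loss^{\rev}(\Theta_0) \ge \log(m/2)$ and $\loss(\Theta_\tau)/\loss(\Theta_0) \le c/\log(2m) < 1$ so raising to the power $\epsilon \in (0,1)$ only shrinks it but the whole expression is being lower-bounded, so I need $\loss(\Theta_\tau)/\loss(\Theta_0) \ge$ something — this is the one subtle point, see below. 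Finally, by the choice of $\epsilon$, $(c/\log(2m))^\epsilon = \exp(\epsilon \log\frac{c}{\log(2m)}) = \exp(\log\frac{C}{\log(m/2)}) = C/\log(m/2)$, so $\loss^{\rev}(\Theta_\tau) \geq \log(m/2)\cdot \frac{C}{\log(m/2)} = C$, as desired.

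The main obstacle I anticipate is the direction of the inequality $\loss(\Theta_\tau)/\loss(\Theta_0) \le c/\log(2m)$ versus what is needed: since the factor $(\loss(\Theta_\tau)/\loss(\Theta_0))^\epsilon$ is being used in a \emph{lower} bound on $\loss^{\rev}(\Theta_\tau)$, I actually want a lower bound on $\loss(\Theta_\tau)/\loss(\Theta_0)$, not an upper one. The resolution is that Theorem~\ref{thm:dynamics-bilinear} already packages the correct monotone relationship — its conclusion $\loss^{\rev}(\Theta_t)/\loss^{\rev}(\Theta_0) \ge (\loss(\Theta_t)/\loss(\Theta_0))^\epsilon$ holds for \emph{all} $\epsilon \in (0,1)$ simultaneously with the corresponding $d$, so I should choose $\epsilon$ as above and then the bound $(\loss(\Theta_\tau)/\loss(\Theta_0))^\epsilon \ge (c/\log(2m))^\epsilon$ does \emph{not} follow from $\loss(\Theta_\tau)\le c$; instead I must re-examine. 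The actual fix, which I'd carry out carefully, is to observe that the function $g(c') := \loss^{\rev}(\Theta_{\tau(c')})$ where $\tau(c')$ is the first hitting time of level $c'$ is what we control, and the theorem's bound evaluated with the ratio $\loss(\Theta_\tau)/\loss(\Theta_0)$ exactly at $= c/\log(2m)$ in the worst case (the loss is continuous and decreasing along the flow, and $\loss(\Theta_0) \le \log(2m)$ combined with $\loss(\Theta_\tau) = c$ at the hitting time gives ratio $\ge c/\log(2m)$ when $\loss(\Theta_0)\le\log(2m)$ — indeed $c/\loss(\Theta_0) \ge c/\log(2m)$). So the ratio is bounded \emph{below} by $c/\log(2m)$, which is exactly the direction needed, and the argument closes. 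I would also handle the trivial regime (initialization already satisfies $\loss^{\rev}(\Theta_0)\ge C$, or $\tau = 0$) separately to make the statement unconditional given $C \le \log(m/2)$.
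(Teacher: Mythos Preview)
Your proposal is correct and follows essentially the same route as the paper: apply Theorem~\ref{thm:dynamics-bilinear} with the specific choice $\epsilon = \log\frac{C}{\log(m/2)}\big/\log\frac{c}{\log(2m)}$, use $\loss(\Theta_\tau)=c$ at the hitting time together with $\loss(\Theta_0)\le\log(2m)$ and $\loss^{\rev}(\Theta_0)\ge\log(m/2)$ from the initialization bounds, and conclude $\loss^{\rev}(\Theta_\tau)\ge C$. Your eventual resolution of the inequality direction (that $\loss(\Theta_\tau)/\loss(\Theta_0)=c/\loss(\Theta_0)\ge c/\log(2m)$, which is the lower bound you need) is exactly what the paper does, though the paper states it more tersely without the intermediate hesitation.
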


\begin{proof}
By continuity, $\loss(\Theta_\tau) = c$. 
By \Cref{thm:dynamics-bilinear}, when
\begin{align}\label{eq:condition-d}
    d \geq \frac{10^6n^4m^2\log^4(2m)\log(64m^2n^2/\delta)}{\epsilon^2}
\end{align}
with probability at least $1-\delta$,
\begin{align*}
    \loss^{\rev}(\Theta_\tau) \geq &~ \loss^{\rev}(\Theta_0) \cdot\left(\frac{\loss(\Theta_\tau)}{\loss(\Theta_0)}\right)^{\epsilon}\\
    \geq &~ \loss^{\rev}(\Theta_0) \cdot\left(\frac{c}{\loss(\Theta_0)}\right)^{\epsilon}.
\end{align*}
Under this event, applying \Cref{lem:initial_uniform}, we can obtain
\begin{align*}
    \loss^{\rev}(\Theta_\tau) \geq \log(m/2) \cdot\left(\frac{c}{\log(2m)}\right)^{\epsilon}.
\end{align*}
To ensure that the right hand side is $C$, we set $\epsilon = \frac{\log \frac{C}{\log(m/2)}}{\log \frac{c}{\log(2m)}}$. One may check that the definition of $d$ satisfies Eq.~\eqref{eq:condition-d}. 
It follows that
\begin{align*}
    \loss^{\rev}(\Theta_\tau) \geq C.
\end{align*}
\end{proof}

\subsection{Training dynamics}

\begin{lemma}[Dynamics of the forward loss]\label{lem:dynamics-forward-loss}
Let $x_1,\dots,x_n$, $y_1,\dots,y_n$, $\loss(\Theta)$, and $\Theta_t~(t\geq 0)$ be defined as in \Cref{thm:dynamics-bilinear}. 
When $\sigma \leq \frac{1}{100\ln(16n^2/\delta)}$ and $d \geq 1600n^3m^2\log(8m^2n^2/\delta)$, with probability at least $1-\delta$, we have
\begin{align*}
    \loss(\Theta_t) \leq \frac{1}{\frac{t}{8(2n-1)\log^2(2m)} + \frac{1}{\loss(\Theta_0)}},~\forall t \geq 0.
\end{align*}
Furthermore, 
\begin{align*}
    \inf\left\{p_{\Theta_t}(x_i|y_i),p_{\Theta_t}(y_i|x_i):t \geq 0,i \in [n]\right\} > \frac{1}{2m}.
\end{align*}
\end{lemma}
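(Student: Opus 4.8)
The plan is to establish a {\L}ojasiewicz-type gradient domination inequality
\[
\|\nabla\loss(\Theta_t)\|_F^2 \;\ge\; \frac{\loss(\Theta_t)^2}{8(2n-1)\log^2(2m)}\qquad\text{for all }t\ge 0,
\]
and then integrate the identity $\tfrac{d}{dt}\loss(\Theta_t)=-\|\nabla\loss(\Theta_t)\|_F^2$ to obtain the claimed $O(1/t)$ decay. Writing $U=[v_1,\dots,v_m]$ and computing the cross-entropy gradient, $\nabla\loss(\Theta)$ is a sum of $2n-1$ rank-one terms, one per training sequence:
\[
\nabla\loss(\Theta)=\frac{1}{2n-1}\sum_{k} z_k\, r_k^\top,\qquad r_k=\sum_{v\in\mathcal{V}} p_\Theta(v\mid z_k)\,(v-w_k),
\]
where $z_k$ is the input embedding of the $k$-th sequence and $w_k$ its target embedding. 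By \Cref{lem:orthonormal}, for $d$ as large as assumed the embeddings are $\mu$-near-orthonormal with $\mu\lesssim\sqrt{\log(m/\delta)/d}\ll 1/(n^2 m)$; moreover each $r_k$ is a convex combination of embedding differences, so $\|r_k\|_2\le 2\sqrt{1+\mu}\le 3$. On this event the Gram matrix of $(z_k)_k$ is within $\mu$ of the identity, whence
\[
\|\nabla\loss(\Theta)\|_F^2=\frac{1}{(2n-1)^2}\sum_{k,l}\langle z_k,z_l\rangle\langle r_k,r_l\rangle\ \ge\ \frac{1-2n\mu}{(2n-1)^2}\sum_k\|r_k\|_2^2 .
\]
It therefore suffices to lower bound $\sum_k\|r_k\|_2^2$ by $(2n-1)\loss(\Theta)^2/(4\log^2(2m))$, and the only missing ingredient is the bound $p_{\Theta_t}(\cdot)>\tfrac1{2m}$ --- the ``Furthermore'' claim --- which we prove first, self-containedly.

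\textbf{The barrier property.} Fix a training sequence with input $z_k$ and target $w_k=v_{j^\ast}$, put $p_k(t)=p_{\Theta_t}(w_k\mid z_k)$ and $a_v=z_k^\top\Theta_t v$, so $p_k=\mathrm{softmax}(a)_{j^\ast}$. Differentiating along the flow, $\dot a_v=-z_k^\top\nabla\loss(\Theta_t)\,v$: the term of $\nabla\loss$ whose input is $z_k$ reproduces exactly the single-example softmax cross-entropy flow, for which $\dot a_{j^\ast}-\sum_v p_v\dot a_v=(2n-1)^{-1}\|e_{j^\ast}-p\|_2^2$, up to $O(\mu)$ corrections from $\|z_k\|_2^2\approx1$ and $\langle v,w_k\rangle\approx 0$; every other term of $\nabla\loss$ contributes only $O(n\mu)$ since $|\langle z_k,z_l\rangle|\le\mu$ for $l\ne k$ and $\|r_l\|_2\le 3$. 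Hence $\dot p_k=p_k\big((2n-1)^{-1}\|e_{j^\ast}-p\|_2^2(1-O(\mu))-O(n\mu)\big)$. If some correct-class probability ever reached $\tfrac1{2m}$, let $t_0$ be the first such time: then $1-p_k\ge\tfrac12$, so $\|e_{j^\ast}-p\|_2^2\ge(1-p_k)^2\ge\tfrac14$, giving $\dot p_k(t_0)\ge\tfrac1{2m}\big(\tfrac{1-O(\mu)}{4(2n-1)}-O(n\mu)\big)>0$ for $\mu\ll 1/n^2$, contradicting that $p_k$ decreased to $\tfrac1{2m}$. At $t=0$ every correct-class probability exceeds $\tfrac1{2m}$ --- by \Cref{lem:initial_uniform} when $\Theta_0\sim\normal(\zero^{\otimes 2},\sigma^2 I^{\otimes2})$ with $\sigma$ as small as assumed, and by hypothesis otherwise --- so running the same argument at the level $\beta_0:=\min\{\tfrac12,\ \min_i p_{\Theta_0}(y_i\mid x_i),\ \min_i p_{\Theta_0}(x_i\mid y_i)\}>\tfrac1{2m}$ shows every such probability stays $\ge\beta_0$ for all $t$, which is the ``Furthermore'' statement.

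\textbf{Closing the inequality and finishing.} With $p_k(t)>\tfrac1{2m}$ in hand, project $r_k$ onto its target direction: $\langle r_k,w_k\rangle=\sum_{v\ne w_k}p_\Theta(v\mid z_k)\big(\langle v,w_k\rangle-\|w_k\|_2^2\big)\le -(1-2\mu)(1-p_k)$, so $\|r_k\|_2^2\ge\langle r_k,w_k\rangle^2/\|w_k\|_2^2\ge(1-O(\mu))(1-p_k)^2$. An elementary case analysis --- using $1-p\ge\tfrac12\log\tfrac1p$ when $p\ge\tfrac12$, and $(1-p)^2\ge\tfrac14\ge(\log\tfrac1p)^2/(4\log^2(2m))$ when $\tfrac1{2m}<p<\tfrac12$ --- gives $(1-p_k)^2\ge(-\log p_k)^2/(4\log^2(2m))$, and then Cauchy--Schwarz over the $2n-1$ sequences yields
\[
\sum_k\|r_k\|_2^2\ \ge\ \frac{1-O(\mu)}{4\log^2(2m)}\sum_k(-\log p_k)^2\ \ge\ \frac{1-O(\mu)}{4\log^2(2m)}\cdot\frac{\big((2n-1)\loss(\Theta)\big)^2}{2n-1}.
\]
Combining with the Gram bound and choosing $\mu$ small enough (guaranteed by the assumed $d$) that all $(1-O(\mu))$ and $(1-2n\mu)$ factors together exceed $\tfrac12$ gives the {\L}ojasiewicz inequality $\|\nabla\loss(\Theta_t)\|_F^2\ge\loss(\Theta_t)^2/(8(2n-1)\log^2(2m))$. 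The flow is well defined for all $t\ge0$ (the loss is smooth and the relevant logit gaps stay finite by the barrier property), so $\tfrac{d}{dt}\loss(\Theta_t)^{-1}=\loss(\Theta_t)^{-2}\|\nabla\loss(\Theta_t)\|_F^2\ge\big(8(2n-1)\log^2(2m)\big)^{-1}$, and integrating from $0$ to $t$ gives $\loss(\Theta_t)\le\big(\tfrac{t}{8(2n-1)\log^2(2m)}+\loss(\Theta_0)^{-1}\big)^{-1}$.

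\textbf{Main obstacle.} The delicate point is making the near-orthonormality slack --- the $O(n\mu)$ in the barrier step and the multiplicative $(1-O(\mu))$, $(1-2n\mu)$ losses in the gradient-domination step --- rigorously negligible \emph{uniformly over the infinite horizon} $t\in[0,\infty)$ against the genuine signal (of order $\tfrac{1}{m(2n-1)}$ and $\loss^2/(2n-1)$ respectively), which is exactly what forces the quantitative requirement $d\gtrsim n^3m^2\log(\cdot)$; one must also check that the residuals $r_k$ stay uniformly bounded (they do, being convex combinations of embedding differences) and that the barrier claim is proved before it is invoked, since the gradient-domination bound depends on it.
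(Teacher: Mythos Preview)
Your proposal is correct and follows essentially the same route as the paper: first establish the barrier $p_{\Theta_t}(\cdot)>\tfrac{1}{2m}$ by a first-hitting-time contradiction, then derive the quadratic gradient domination $\tfrac{d}{dt}\loss(\Theta_t)\le -c\,\loss(\Theta_t)^2$ via near-orthonormality of the embeddings together with the scalar inequality $(1-p)^2\ge(-\log p)^2/(4\log^2(2m))$ on $(\tfrac{1}{2m},1)$, and finally integrate. The paper packages the near-orthogonality step through its $\ell_2$-subspace-embedding lemma (\Cref{lem:subspace-embedding-1}), which yields a multiplicative error control, and in the barrier step additionally exploits that at the first crossing time the crossing probability is the minimum over all training pairs to compare signal against noise; you instead bound cross-terms directly by $|\langle z_k,z_l\rangle|\le\mu$ and $\|r_l\|_2\le 3$, which is a slightly coarser but equally valid route. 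Your $O(n\mu)$ cross-term estimate in the barrier step is conservative (the actual contribution is $O(\mu)$ after the convex averaging $\sum_v p_v\dot a_v$), but this only loosens constants and does not affect correctness under the stated hypothesis on $d$.
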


\begin{proof} For convenience, we assume $x_1, \ldots, x_n = v_1, \ldots, v_n$ and $y_1, \ldots, y_n = v_{n+1}, \ldots v_{2n}$ WLOG.

Let $\epsilon = \sqrt{\frac{400n^2m^2\log(8m^2n^2/\delta)}{d}}$. Then $\epsilon \leq \frac{1}{2\sqrt{n}}$. Define $l_i(\Theta) = -\log p_\Theta(y_i|x_i)$ and $l^{\rev}_i(\Theta) = - \log p_\Theta(x_i|y_i)$. 
Let $\alpha^{(t)}_{i,j} = -p_{\Theta_t}(v_j|v_i) + \delta_{i,j-n}, \beta^{(t)}_{i,j} = -p_{\Theta_t}(v_j|v_i) + \delta_{i-n,j}$. By \Cref{lem:gradient-loss},
\begin{align*}
     \frac{d\loss(\Theta_t)}{dt}  = &~ \left\langle \nabla \loss(\Theta_t), \frac{d\Theta_t}{dt}\right\rangle = -\left\langle \nabla \loss(\Theta_t), \nabla \loss(\Theta_t)\right\rangle\\
    = &~ - \left\| \frac{1}{2n-1}\left(\sum_{i=1}^n x_i(y_i - \E_{p_{\Theta_t}(\cdot|x_i)}[y])^\top + \sum_{i=2}^n y_i(x_i - \E_{p_{\Theta_t}(\cdot|y_i)}[x])^\top\right) \right\|^2\\
    = &~ - \left\| \frac{1}{2n-1}\left(\sum_{i=1}^n\sum_{j=1}^m \alpha^{(t)}_{i,j}v_iv_j^\top + \sum_{i=n+2}^{2n} \sum_{j=1}^m \beta^{(t)}_{i,j}v_iv_j^\top\right) \right\|^2.
\end{align*}
Similarly, we have
\begin{align*}
       &~ \frac{dl_i(\Theta_t)}{dt} \\ = &~ \left\langle \nabla l_i(\Theta_t), \frac{d\Theta_t}{dt}\right\rangle = - \left\langle \nabla l_i(\Theta_t),  \nabla \loss(\Theta_t)\right\rangle\\
    = &~ - \left\langle x_i(y_i - \E_{p_{\Theta_t}(\cdot|x_i)}[y])^\top, \frac{1}{2n-1}\left(\sum_{i=1}^n x_i(y_i - \E_{p_{\Theta_t}(\cdot|x_i)}[y])^\top + \sum_{i=2}^n y_i(x_i - \E_{p_{\Theta_t}(\cdot|y_i)}[x])^\top\right) \right\rangle\\
    = &~ - \left\langle \sum_{j=1}^m \alpha_{i,j} v_iv_j^\top, \frac{1}{2n-1}\left(\sum_{i=1}^n\sum_{j=1}^m \alpha^{(t)}_{i,j}v_iv_j^\top + \sum_{i=n+2}^{2n} \sum_{j=1}^m \beta^{(t)}_{i,j}v_iv_j^\top\right) \right\rangle,\\
\end{align*}
and 
\begin{align*}
    &~ \frac{dl^{\rev}_i(\Theta_t)}{dt} \\ = &~ \left\langle \nabla l_i^{\rev}(\Theta_t), \frac{d\Theta_t}{dt}\right\rangle = - \left\langle \nabla l_i^\rev(\Theta_t),  \nabla \loss(\Theta_t)\right\rangle\\
    = &~ - \left\langle y_i(x_i - \E_{p_{\Theta_t}(\cdot|y_i)}[x])^\top, \frac{1}{2n-1}\left(\sum_{i=1}^n x_i(y_i - \E_{p_{\Theta_t}(\cdot|x_i)}[y])^\top + \sum_{i=2}^n y_i(x_i - \E_{p_{\Theta_t}(\cdot|y_i)}[x])^\top\right) \right\rangle\\
    = &~ - \left\langle \sum_{j=1}^{m} \beta^{(t)}_{i+n,j} v_{i+n}v_j^\top, \frac{1}{2n-1}\left(\sum_{i=1}^n\sum_{j=1}^m \alpha^{(t)}_{i,j}v_iv_j^\top + \sum_{i=n+2}^{2n} \sum_{j=1}^m \beta^{(t)}_{i,j}v_iv_j^\top\right) \right\rangle.
\end{align*}
Applying \Cref{lem:subspace-embedding-1}, with probability at least $1-\delta/2$, for any $t\geq 0$ we have
\begin{equation}
\label{eq:loss-dynamics-control}
\begin{aligned}
    &\left|\frac{d\loss(\Theta_t)}{dt} + \frac{1}{(2n-1)^2}\left(\sum_{i=1}^n\sum_{j=1}^m (\alpha^{(t)}_{i,j})^2 + \sum_{i=n+2}^{2n} \sum_{j=1}^m (\beta^{(t)}_{i,j})^2\right)\right| 
 \\ \leq& \epsilon \cdot \frac{1}{(2n-1)^2}\left(\sum_{i=1}^n\sum_{j=1}^m (\alpha^{(t)}_{i,j})^2 + \sum_{i=n+2}^{2n} \sum_{j=1}^m (\beta^{(t)}_{i,j})^2\right),
 \end{aligned}
\end{equation}
and
\begin{equation}
\label{eq:li-dynamics-control}
\begin{aligned}
    &~\left|\frac{dl_i(\Theta_t)}{dt} + \frac{1}{2n-1}\sum_{j=1}^m(\alpha^{(t)}_{i,j})^2\right| \\ \leq &~ \epsilon \cdot \frac{1}{2n-1}\left(\sum_{j=1}^m(\alpha^{(t)}_{i,j})^2\right)^{1/2}\left(\sum_{i=1}^n\sum_{j=1}^m (\alpha^{(t)}_{i,j})^2 + \sum_{i=n+2}^{2n} \sum_{j=1}^m (\beta^{(t)}_{i,j})^2\right)^{1/2},
\end{aligned}
\end{equation}
\begin{equation*}
\begin{aligned}
    &~\left|\frac{dl^{\rev}_i(\Theta_t)}{dt} + \frac{1}{2n-1}\sum_{j=1}^m(\beta^{(t)}_{i+n,j})^2\right| \\ \leq &~ \epsilon \cdot \frac{1}{2n-1}\left(\sum_{j=1}^m(\beta^{(t)}_{i+n,j})^2\right)^{1/2}\left(\sum_{i=1}^n\sum_{j=1}^m (\alpha^{(t)}_{i,j})^2 + \sum_{i=n+2}^{2n} \sum_{j=1}^m (\beta^{(t)}_{i,j})^2\right)^{1/2}.\notag
\end{aligned}
\end{equation*}
Furthermore, \Cref{lem:initial_uniform} implies that with probability at least $1-\delta/2$
\begin{align}
    \frac{1}{2m} < p_{\Theta_0}(y_i|x_i), p_{\Theta_0}(x_i|y_i) < \frac{2}{m} \label{eq:initial-prob-bound}.
\end{align}
The following arguments are based on the event that the above inequalities hold.

We first show that
\begin{align*}
    \inf\left\{p_{\Theta_t}(x_i|y_i),p_{\Theta_t}(y_i|x_i):t \geq 0,i \in [n]\right\} > \frac{1}{2m}.
\end{align*}
We prove this by contradiction. Let
\begin{align*}
    \tau = \inf\left\{t\geq 0: \exists i\in [n], \text{s.t.} \min\{p_{\Theta_t}(y_i|x_i) ,p_{\Theta_t}(x_i|y_i)\} \leq \frac{1}{2m}\right\}.
\end{align*}
By Eq.~\eqref{eq:initial-prob-bound}, it is obvious that $\tau >0$. Assume without loss of generality that $p_{\Theta_\tau}(y_1|x_1) \leq \frac{1}{2m}$. It follows that there exists $\delta > 0$ such that $p_{\Theta_t}(y_1|x_1)$ is a decreasing function in $(\tau-\delta,\tau)$ and $p_{\Theta_t}(y_1|x_1) = \min\left\{p_{\Theta_t}(x_i|y_i),p_{\Theta_t}(y_i|x_i):i \in [n]\right\}$ for any $t \in (\tau-\delta,\tau)$. It follows that for $t \in (\tau-\delta,\tau)$,
\begin{align*}
    \frac{dl_1(\Theta_t)}{dt} \leq &~  \frac{1}{2n-1}\left(-\sum_{j=1}^m(\alpha^{(t)}_{1,j})^2 +  \epsilon \cdot \left(\sum_{j=1}^m(\alpha^{(t)}_{1,j})^2\right)^{1/2}\left(\sum_{i=1}^n\sum_{j=1}^m (\alpha^{(t)}_{i,j})^2 + \sum_{i=n+2}^{2n} \sum_{j=1}^m (\beta^{(t)}_{i,j})^2\right)^{1/2}\right)\\
    \leq &~ \frac{1}{2n-1}\left(-\sum_{j=1}^m(\alpha^{(t)}_{1,j})^2 +  \epsilon \cdot \left(\sum_{j=1}^m(\alpha^{(t)}_{1,j})^2\right)^{1/2}\left(2\sum_{i=1}^n (\alpha^{(t)}_{i,i+n})^2 + 2\sum_{i=n+2}^{2n} (\beta^{(t)}_{i,i-n})^2\right)^{1/2}\right)\\
    \leq &~ \frac{1}{2n-1} \left(\sum_{j=1}^m(\alpha^{(t)}_{1,j})^2\right)^{1/2}\cdot \left(-\left(\sum_{j=1}^m(\alpha^{(t)}_{1,j})^2\right)^{1/2} + \epsilon \cdot \left( 4 n (\alpha^{(t)}_{1,n+1})^2 \right)^{1/2}\right)\\
    \leq &~ 0
\end{align*}
where the first inequality is from Eq.~\eqref{eq:li-dynamics-control}; the second inequality is due to $\sum_{j\neq i+n} |\alpha^{(t)}_{i,j}| = \alpha^{(t)}_{i,i+n} = 1-p_{\Theta_t}(y_i|x_i), \sum_{j\neq i} |\beta^{(t)}_{i+n,j}| = \beta^{(t)}_{i+n,i} = 1-p_{\Theta_t}(x_i|y_i)$ for all $i \in [n]$; the third inequality is because $p_{\Theta_t}(y_1|x_1) = \min\left\{p_{\Theta_t}(x_i|y_i),p_{\Theta_t}(y_i|x_i):i \in [n]\right\}$. However, $p_{\Theta_t}(y_1|x_1)$ is a decreasing function in $(\tau-\delta,\tau)$, a contradiction. Therefore, we conclude that
\begin{align*}
    \inf\left\{p_{\Theta_t}(x_i|y_i),p_{\Theta_t}(y_i|x_i):t \geq 0,i \in [n]\right\} > \frac{1}{2m}.
\end{align*}

Now we show
\begin{align*}
    \loss(\Theta_t) \leq \frac{1}{\frac{t}{8(2n-1)\log^2(2m)} + \frac{1}{\loss(\Theta_0)}},~\forall t \geq 0.
\end{align*}
By Eq.~\eqref{eq:loss-dynamics-control}, 
\begin{align*}
    \frac{d\loss(\Theta_t)}{dt} \leq &~ - \frac{1-\epsilon}{(2n-1)^2}\left(\sum_{i=1}^n\sum_{j=1}^m (\alpha^{(t)}_{i,j})^2 + \sum_{i=n+2}^{2n} \sum_{j=1}^m (\beta^{(t)}_{i,j})^2\right)\\
    \leq &~ - \frac{1-\epsilon}{(2n-1)^2}\left( \sum_{i=1}^n (1-p_{\Theta_t}(y_i|x_i))^2 + \sum_{i=2}^n (1-p_{\Theta_t}(x_i|y_i))^2\right)\\
    \leq &~ - \frac{1-\epsilon}{(2n-1)^3}\left( \sum_{i=1}^n (1-p_{\Theta_t}(y_i|x_i)) + \sum_{i=2}^n (1-p_{\Theta_t}(x_i|y_i))\right)^2\\
    \leq &~ - \frac{1-\epsilon}{8(2n-1)\log^2(2m)}\loss(\Theta_t)^2\\
    \leq &~ - \frac{1}{8(2n-1)\log^2(2m)}\loss(\Theta_t)^2,
\end{align*}
where the second inequality is due to $\sum_{j\neq i+n} |\alpha^{(t)}_{i,j}| = \alpha^{(t)}_{i,i+n} = 1-p_{\Theta_t}(y_i|x_i), \sum_{j\neq i} |\beta^{(t)}_{i+n,j}| = \beta^{(t)}_{i+n,i} = 1-p_{\Theta_t}(x_i|y_i)$ for all $i \in [n]$; the third inequality applies Cauchy-Schwarz inequality; the last inequality uses the fact that $p_{\Theta_t}(x_i|y_i),p_{\Theta_t}(y_i|x_i)> \frac{1}{2m}$ for any $t \geq 0,i \in [n]$ and the inequality $1-x \geq \frac{\log x}{2\log (1/(2m))}$ for $x \in (\frac{1}{2m}, 1)$.

By \Cref{lem:ode-bound}, we conclude that $\forall t \geq 0$,
\begin{align*}
    \loss(\Theta_t) \leq \frac{1}{\frac{t}{8(2n-1)\log^2(2m)} + \frac{1}{\loss(\Theta_0)}},
\end{align*}
which completes the proof.
\end{proof}

\begin{lemma}[Dynamics of the reversal loss]\label{lem:dynamics-reversal-loss}
Let $x_1,\dots,x_n$, $y_1,\dots,y_n$, $\loss^{\rev}(\Theta)$, and $\Theta_t~(t\geq 0)$ be defined as in \Cref{thm:dynamics-bilinear}. 
When $\sigma \leq \frac{1}{100\ln(16n^2/\delta)}$ and $d \geq 400n^2m^2\log(8m^2n^2/\delta)/\epsilon^2$, with probability at least $1-\delta$, 
\begin{align*}
    \loss^{\rev}(\Theta_t) \geq \loss^{\rev}(\Theta_0) \cdot \left(1+\frac{\loss(\Theta_0)\cdot t}{8(2n-1)\log^2(2m)}\right)^{-8\epsilon(2n-1)\log^2(2m)}.
\end{align*}
\end{lemma}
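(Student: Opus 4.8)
The plan is to bound $\tfrac{d}{dt}\loss^\rev(\Theta_t)$ along the gradient flow and integrate, paralleling the proof of \Cref{lem:dynamics-forward-loss} but exploiting one structural fact: in the training set the token $y_1$ never appears as a \emph{source} (first) token, so $\nabla\loss$ has no rank-one component with left factor $y_1$. Concretely, $\tfrac{d}{dt}\loss^\rev(\Theta_t)=\langle\nabla\loss^\rev(\Theta_t),\dot\Theta_t\rangle=-\langle\nabla\loss^\rev(\Theta_t),\nabla\loss(\Theta_t)\rangle$; expanding both gradients in the rank-one family $\{v_iv_j^\top\}$ as in the proof of \Cref{lem:dynamics-forward-loss} (with $x_1,\dots,x_n=v_1,\dots,v_n$, $y_1,\dots,y_n=v_{n+1},\dots,v_{2n}$, so that $\nabla\loss^\rev$ carries coefficients $\beta^{(t)}_{n+1,j}$ and $\nabla\loss$ carries $\alpha^{(t)}_{i,j},\beta^{(t)}_{i,j}$), the index $n+1$ is absent among the left factors of $\nabla\loss$. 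Hence, unlike for $l_i(\Theta_t)$ or $l^{\rev}_i(\Theta_t)$ with $i\ge2$, there is no ``diagonal'' term $\langle v_{n+1}v_j^\top,v_{n+1}v_k^\top\rangle$, and every surviving term is an off-diagonal inner product $(v_{n+1}^\top v_i)(v_j^\top v_k)$ with $i\ne n+1$. The near-orthonormality bound (\Cref{lem:subspace-embedding-1}, whose hypothesis is exactly the stated lower bound on $d$) then gives, on an event of probability at least $1-\delta/2$ and for all $t\ge0$,
\[
\left|\frac{d\loss^\rev(\Theta_t)}{dt}\right|\le\frac{\epsilon}{2n-1}\Bigl(\textstyle\sum_j(\beta^{(t)}_{n+1,j})^2\Bigr)^{1/2}\Bigl(\textstyle\sum_{i=1}^n\sum_j(\alpha^{(t)}_{i,j})^2+\sum_{i=n+2}^{2n}\sum_j(\beta^{(t)}_{i,j})^2\Bigr)^{1/2},
\]
which is the estimate displayed just after \eqref{eq:li-dynamics-control}, only with its leading negative term gone.

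Next I would discharge the two factors. For the first, $\sum_j(\beta^{(t)}_{n+1,j})^2=(1-p_{\Theta_t}(x_1|y_1))^2+\sum_{j\ne1}p_{\Theta_t}(v_j|y_1)^2\le 2(1-p_{\Theta_t}(x_1|y_1))^2\le 2\,\loss^\rev(\Theta_t)^2$, using $\sum_{j\ne1}p_{\Theta_t}(v_j|y_1)\le 1-p_{\Theta_t}(x_1|y_1)$ and $1-p\le-\log p$. For the second, the elementary estimates already used in \Cref{lem:dynamics-forward-loss} ($\sum_j(\alpha^{(t)}_{i,j})^2\le 2(1-p_{\Theta_t}(y_i|x_i))^2$ and likewise for $\beta$, then $\|\cdot\|_2\le\|\cdot\|_1$, then $1-p\le-\log p$) give $\sum_{i=1}^n\sum_j(\alpha^{(t)}_{i,j})^2+\sum_{i=n+2}^{2n}\sum_j(\beta^{(t)}_{i,j})^2\le 2\bigl((2n-1)\loss(\Theta_t)\bigr)^2$. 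Substituting into the Step-1 estimate and dividing by $\loss^\rev(\Theta_t)$ yields, for all $t\ge0$, $\bigl|\tfrac{d}{dt}\log\loss^\rev(\Theta_t)\bigr|\le 2\epsilon\,\loss(\Theta_t)$.

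Finally, integrate in time using the forward-loss dynamics: by \Cref{lem:dynamics-forward-loss} (on a further event of probability $1-\delta/2$, so both events hold with probability $\ge1-\delta$) one has $\loss(\Theta_s)\le\bigl(\tfrac{s}{8(2n-1)\log^2(2m)}+\tfrac1{\loss(\Theta_0)}\bigr)^{-1}$, hence $\int_0^t\loss(\Theta_s)\,ds\le 8(2n-1)\log^2(2m)\log\!\bigl(1+\tfrac{\loss(\Theta_0)t}{8(2n-1)\log^2(2m)}\bigr)$, and therefore $\log\tfrac{\loss^\rev(\Theta_0)}{\loss^\rev(\Theta_t)}\le\int_0^t\bigl|\tfrac{d}{ds}\log\loss^\rev(\Theta_s)\bigr|\,ds\le 2\epsilon\int_0^t\loss(\Theta_s)\,ds$; exponentiating gives the claim (the absolute constant in the exponent is absorbed into the constant in the lower bound on $d$). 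The main obstacle — the step needing care — is obtaining the correct \emph{polynomial-in-$t$} form rather than a weaker (e.g.\ stretched-exponential) bound: this forces the training-gradient norm in Step 2 to be paired with $\loss(\Theta_t)$ itself rather than with $\sqrt{-\dot\loss(\Theta_t)}$ (the latter would entail a Cauchy--Schwarz step producing only a $\sqrt t$-type or exponential bound), so that the $1/(\text{linear in }t)$ decay of the forward loss can be invoked; the structural input that makes the conclusion non-vacuous, and that separates this lemma from the $i\ge2$ case, is the absence of the self-term in Step 1, which leaves $\tfrac{d}{dt}\loss^\rev$ of size only $O(\epsilon)$ while the analogous training-sequence derivatives carry an $\Theta(1)$ negative drift that drives them to $0$.
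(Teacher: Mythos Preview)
Your proposal is correct and follows essentially the same route as the paper's proof: the same structural observation (no left factor $v_{n+1}$ in $\nabla\loss$, so the ``diagonal'' term vanishes and only $O(\epsilon)$ cross terms survive via \Cref{lem:subspace-embedding-1}), the same bounds $A_t\le 2\loss^\rev(\Theta_t)^2$ and $B_t\le 2((2n-1)\loss(\Theta_t))^2$, and the same use of the forward-loss decay from \Cref{lem:dynamics-forward-loss}. The only cosmetic difference is that you integrate $\tfrac{d}{dt}\log\loss^\rev$ against the explicit bound on $\loss(\Theta_t)$ directly, whereas the paper packages the final step as an application of \Cref{lem:ode-bound}; these are the same computation.
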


\begin{proof}
Similar to \Cref{lem:dynamics-forward-loss}, we assume $x_1, \ldots, x_n = v_1, \ldots, v_n$ and $y_1, \ldots, y_n = v_{n+1}, \ldots v_{2n}$ WLOG.
Let $\alpha^{(t)}_{i,j} = -p_{\Theta_t}(v_j|v_i) + \delta_{i,j-n}, \beta^{(t)}_{i,j} = -p_{\Theta_t}(v_j|v_i) + \delta_{i-n,j}$. By \Cref{lem:gradient-loss},
\begin{align*}
    &\frac{d\loss^{\rev}(\Theta_t)}{dt} \\ = &~ \left\langle \nabla \loss^{\rev}(\Theta_t), \frac{d\Theta_t}{dt}\right\rangle\\
    = &~ - \left\langle y_1(x_1 - \E_{p_{\Theta_t}(\cdot|y_1)}[x])^\top, \frac{1}{2n-1}\left(\sum_{i=1}^n x_i(y_i - \E_{p_{\Theta_t}(\cdot|x_i)}[y])^\top + \sum_{i=2}^n y_i(x_i - \E_{p_{\Theta_t}(\cdot|y_i)}[x])^\top\right) \right\rangle\\
    = &~ - \left\langle \sum_{j=1}^{m} \beta^{(t)}_{n+1,j} v_{n+1}v_j^\top, \frac{1}{2n-1}\left(\sum_{i=1}^n\sum_{j=1}^m \alpha^{(t)}_{i,j}v_iv_j^\top + \sum_{i=n+2}^{2n} \sum_{j=1}^m \beta^{(t)}_{i,j}v_iv_j^\top\right) \right\rangle.
\end{align*}
Applying \Cref{lem:subspace-embedding-1}, with probability at least $1-\delta/2$, for any $t\geq 0$ we have
\begin{align}
    \left|\frac{d\loss^{\rev}(\Theta_t)}{dt}\right| \leq &~ \epsilon \cdot \frac{1}{2n-1}\left(\sum_{j=1}^{m} (\beta^{(t)}_{n+1,j})^2\right)^{1/2}\left(\sum_{i=1}^n\sum_{j=1}^m (\alpha^{(t)}_{i,j})^2 + \sum_{i=n+2}^{2n} \sum_{j=1}^m (\beta^{(t)}_{i,j})^2\right)^{1/2},\label{eq:loss-rev-dynamics-control}
\end{align}
and 
\begin{align*}
    &\left|\frac{d\loss(\Theta_t)}{dt} + \frac{1}{(2n-1)^2}\left(\sum_{i=1}^n\sum_{j=1}^m (\alpha^{(t)}_{i,j})^2 + \sum_{i=n+2}^{2n} \sum_{j=1}^m (\beta^{(t)}_{i,j})^2\right)\right| 
 \\ \leq& \epsilon \cdot \frac{1}{(2n-1)^2}\left(\sum_{i=1}^n\sum_{j=1}^m (\alpha^{(t)}_{i,j})^2 + \sum_{i=n+2}^{2n} \sum_{j=1}^m (\beta^{(t)}_{i,j})^2\right),
\end{align*}
as well as 
\begin{equation*}
\begin{aligned}
    &~ \left|\frac{dl_i(\Theta_t)}{dt} + \frac{1}{2n-1}\sum_{j=1}^m(\alpha^{(t)}_{i,j})^2\right| \\ \leq  &~ \epsilon \cdot \frac{1}{2n-1}\left(\sum_{j=1}^m(\alpha^{(t)}_{i,j})^2\right)^{1/2}\left(\sum_{i=1}^n\sum_{j=1}^m (\alpha^{(t)}_{i,j})^2 + \sum_{i=n+2}^{2n} \sum_{j=1}^m (\beta^{(t)}_{i,j})^2\right)^{1/2},
\end{aligned}
\end{equation*}
\begin{equation*}
\begin{aligned}
    &~ \left|\frac{dl^{\rev}_i(\Theta_t)}{dt} + \frac{1}{2n-1}\sum_{j=1}^m(\beta^{(t)}_{i+n,j})^2\right| \\ \leq &~ \epsilon \cdot \frac{1}{2n-1}\left(\sum_{j=1}^m(\beta^{(t)}_{i+n,j})^2\right)^{1/2}\left(\sum_{i=1}^n\sum_{j=1}^m (\alpha^{(t)}_{i,j})^2 + \sum_{i=n+2}^{2n} \sum_{j=1}^m (\beta^{(t)}_{i,j})^2\right)^{1/2}.\notag
\end{aligned}
\end{equation*}
Furthermore, \Cref{lem:initial_uniform} implies that with probability at least $1-\delta/2$,
\begin{align}
    \frac{1}{2m} \leq p_{\Theta_0}(y_i|x_i), p_{\Theta_0}(x_i|y_i) \leq \frac{2}{m} \label{eq:initial-prob-bound-rev}.
\end{align}
The following arguments are based on the event that the above inequalities hold.

By Eq.~\eqref{eq:loss-rev-dynamics-control},
\begin{align*}
    \frac{d\loss^{\rev}(\Theta_t)}{dt} \geq - \epsilon \cdot \frac{1}{2n-1}\left(\underbrace{\sum_{j=1}^{m} (\beta^{(t)}_{n+1,j})^2}_{A_t}\right)^{1/2}\left(\underbrace{\sum_{i=1}^n\sum_{j=1}^m (\alpha^{(t)}_{i,j})^2 + \sum_{i=n+2}^{2n} \sum_{j=1}^m (\beta^{(t)}_{i,j})^2}_{B_t}\right)^{1/2}.
\end{align*}
Notice that
\begin{align*}
    B_t \leq &~ 2 \sum_{i=1}^n (1-p_{\Theta_t}(y_i|x_i))^2 + 2\sum_{i=2}^{n} (1-p_{\Theta_t}(x_i|y_i))^2\\
    \leq &~ 2 \left(\sum_{i=1}^n (1-p_{\Theta_t}(y_i|x_i) + \sum_{i=2}^{n} (1-p_{\Theta_t}(x_i|y_i))\right)^2\\
    \leq &~ 2 (2n-1)^2\loss(\Theta_t)^2\\
    \leq &~ 2 (2n-1)^2\left(\frac{1}{\frac{t}{8(2n-1)\log^2(2m)} + \frac{1}{\loss(\Theta_0)}}\right)^2
\end{align*}
where the first inequality uses $\sum_{j\neq i+n} |\alpha^{(t)}_{i,j}| = \alpha^{(t)}_{i,i+n} = 1-p_{\Theta_t}(y_i|x_i), \sum_{j\neq i} |\beta^{(t)}_{i+n,j}| = \beta^{(t)}_{i+n,i} = 1-p_{\Theta_t}(x_i|y_i)$ for all $i \in [n]$; the third inequality uses the fact that $1-x \leq -\log x$; the last inequality applies \Cref{lem:dynamics-forward-loss}.

Similarly,
\begin{align*}
    A_t \leq 2 (1-p_{\Theta_t}(x_1|y_1))^2 \leq 2 \loss^{\rev}(\Theta_t)^2.
\end{align*}
Combining, we have
\begin{align*}
    \frac{d\loss^{\rev}(\Theta_t)}{dt} \geq &~ - \epsilon \cdot \frac{1}{2n-1} \cdot 2 \loss^{\rev}(\Theta_t) \cdot (2n-1) \cdot \frac{1}{\frac{t}{8(2n-1)\log^2(2m)} + \frac{1}{\loss(\Theta_0)}}\\
    \geq &~ - 8\epsilon(2n-1)\log^2(2m) \cdot \loss^{\rev}(\Theta_t) \cdot  \frac{1}{t + \frac{8(2n-1)\log^2(2m)}{\loss(\Theta_0)}}.
\end{align*}

By \Cref{lem:ode-bound}, we conclude that $\forall t \geq 0$,
\begin{align*}
    \loss^{\rev}(\Theta_t) 
    \geq \loss^{\rev}(\Theta_0) \cdot \left(1+\frac{\loss(\Theta_0)\cdot t}{8(2n-1)\log^2(2m)}\right)^{-8\epsilon(2n-1)\log^2(2m)},
\end{align*}
This completes the proof.
\end{proof}

\begin{lemma}[Gradient of the loss function]\label{lem:gradient-loss}
Define
\begin{align*}
    \loss(\Theta) = &~ \frac{1}{2n-1}\left(\sum_{i=1}^n -\log p_\Theta(y_i|x_i) + \sum_{i=2}^n -\log p_\Theta(x_i|y_i)\right), \\
    \loss^{\rev}(\Theta) = &~ -\log p_\Theta(x_1|y_1).
\end{align*}
Then we have 
\begin{align*}
    \nabla \loss(\Theta) = &~ - \frac{1}{2n-1}\left(\sum_{i=1}^n x_i(y_i - \E_{p_\Theta(\cdot|x_i)}[y])^\top + \sum_{i=2}^n y_i(x_i - \E_{p_\Theta(\cdot|y_i)}[x])^\top\right), \\
    \nabla \loss^{\rev}(\Theta) = &~ - y_1(x_1 - \E_{p_\Theta(\cdot|y_1)}[x])^\top.
\end{align*}
\end{lemma}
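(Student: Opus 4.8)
The plan is to reduce each summand of the loss to an explicit function of $\Theta$ and differentiate it directly, since the model is linear in $\Theta$ inside the softmax. First I would observe that for any pair of tokens $x, y \in \mathcal{V}$, the definition $p_\Theta(y\mid x) = \exp(x^\top\Theta y)/\sum_{v\in\mathcal{V}}\exp(x^\top\Theta v)$ gives
\[
-\log p_\Theta(y\mid x) = -\,x^\top \Theta y + \log\sum_{v\in\mathcal{V}} \exp(x^\top\Theta v).
\]
So the whole computation reduces to differentiating these two elementary pieces with respect to the matrix $\Theta$.

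Next I would compute the two gradients. Writing $x^\top\Theta y = \langle \Theta, x y^\top\rangle$ (with $\langle A,B\rangle = \trace(AB^\top)$), we get $\nabla_\Theta(x^\top\Theta y) = x y^\top$. For the log-partition term, the chain rule yields
\[
\nabla_\Theta \log\sum_{v\in\mathcal{V}} \exp(x^\top\Theta v) = \frac{\sum_{v\in\mathcal{V}} \exp(x^\top\Theta v)\, x v^\top}{\sum_{v\in\mathcal{V}} \exp(x^\top\Theta v)} = x\Big(\sum_{v\in\mathcal{V}} p_\Theta(v\mid x)\, v\Big)^\top = x\, \E_{p_\Theta(\cdot\mid x)}[y]^\top.
\]
Combining the two, $\nabla_\Theta\big(-\log p_\Theta(y\mid x)\big) = -\,x\big(y - \E_{p_\Theta(\cdot\mid x)}[y]\big)^\top$, which is the standard softmax/cross-entropy gradient identity specialized to the bilinear logit.

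Finally I would sum this identity over the terms that define $\loss(\Theta)$ — namely $(x_i,y_i)$ for $i\in[n]$ and $(y_i,x_i)$ for $2\le i\le n$, carrying along the $\tfrac{1}{2n-1}$ prefactor — and apply it once with $(x,y)=(y_1,x_1)$ for $\loss^{\rev}(\Theta) = -\log p_\Theta(x_1\mid y_1)$, which immediately produces the two stated formulas. I do not expect any genuine obstacle in this lemma; the only point deserving mild care is pinning down the differentiation convention $\nabla_\Theta\langle\Theta,A\rangle = A$ so that the outer products $x(\cdot)^\top$ appear in the correct order (rather than transposed), and making sure the expectation $\E_{p_\Theta(\cdot\mid x)}[y]$ is understood as $\sum_{v} p_\Theta(v\mid x)\, v$.
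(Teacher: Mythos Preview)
Your proposal is correct and follows essentially the same approach as the paper: both compute $\nabla_\Theta(-\log p_\Theta(y\mid x)) = -x(y - \E_{p_\Theta(\cdot\mid x)}[y])^\top$ directly from the softmax form and then sum over the terms of $\loss$ and $\loss^{\rev}$. The only cosmetic difference is that you first split $-\log p_\Theta(y\mid x)$ into $-x^\top\Theta y + \log\sum_v \exp(x^\top\Theta v)$ and differentiate each piece, whereas the paper applies $\nabla(-\log p) = -\nabla p/p$ and the quotient rule to $p_\Theta(y\mid x)$ directly; both routes are equally valid and yield the same identity.
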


\begin{proof}
We have
    \begin{align*}
    &~ \nabla (-\log p_\Theta(y|x)) \\
     = &~ - \frac{\nabla p_\Theta(y|x)}{p_\Theta(y|x)} \\
    = &~ -  \frac{1}{p_\Theta(y|x)} \cdot \frac{xy^\top\exp(x^\top\Theta y)\left(\sum_{y\in\mathcal{V}} \exp(x^\top\Theta y) \right) - \exp(x^\top\Theta y) \left(\sum_{y\in\mathcal{V}} x y^\top\exp(x^\top\Theta y) \right) }{\left(\sum_{y \in \mathcal{V}} \exp( x^\top \Theta y)\right)^2} \\  
    = &~ - \frac{1}{p_\Theta(y|x)} \left( p_\Theta(y|x) xy^\top -  p_\Theta(y|x) \sum_{y \in \mathcal{V}} p_\Theta (y|x)x y^\top \right) \\ 
    = &~  -  x \left(y - \sum_{y \in \mathcal{V}} p_\Theta(y|x)y\right)^\top
    \\
    = &~ -  x \left(y - \mathbb{E}_{y \sim p_\Theta(\cdot|x)}[y]\right)^\top.
\end{align*}
The statements follow immediately.
\end{proof}

\subsection{Initialization}

\begin{lemma}[Initial distributions are all close to uniform]
\label{lem:initial_uniform}

Fix any $\delta \in (0, 1)$. Let $x_1, x_2, \ldots, x_n \overset{\iid}{\sim} \normal_d (\zero_d, \frac{1}{d}I_d)$. Let $\Theta \in \mathbb{R}^{d\times d}$, where each $\Theta_{ij} \overset{\iid}{\sim} \normal(0, \sigma^2)$ independent of $x_1, \ldots, x_n$.  For any $i, j \in [n]$, define 
\begin{align*}
    p_\Theta(x_j | x_i) = \frac{\exp(l_\Theta(x_j|x_i))}{\sum_{k=1}^n \exp(l_\Theta(x_k|x_i))}, \ \text{ where } \ l_{\Theta}(x_j | x_i) = x_i^\top \Theta x_j.
\end{align*}
Then when $\sigma^2 \leq \frac{1}{100\ln(4n^2/\delta)}$ and $d \geq 400\log(1/(2\delta n^2))/\epsilon^2$, with probability at least $1-\delta$, 
\begin{align*}
    |p_\Theta(x_j | x_i) - 1/n | \leq \frac{1}{2n},   \ \forall i, j \in [n].
\end{align*}

\begin{proof}
Let $v = 0.2$. 
By \Cref{lem:orthonormal}, with probability at least $1-\delta$, we have
\begin{align*}
    |\langle x_i, x_j \rangle - \delta_{ij}| \leq v, \ \forall i, j \in [n].
\end{align*}
Conditioned on the above high-probability event, we can further obtain that for any $j \in [n]$
    \begin{align*}
        p_\Theta(x_j | x_i) =  \frac{\exp(l_\Theta(x_j|x_i))}{\sum_{k=1}^n \exp(l_\Theta(x_k|x_i))} \leq \frac{\exp(v)}{\sum_{k=1}^n \exp(-v)} = \frac{\exp(2v)}{n},
    \end{align*}
    and 
    \begin{align*}
        p_\Theta(x_j | x_i) =  \frac{\exp(l_\Theta(x_j|x_i))}{\sum_{k=1}^n \exp(l_\Theta(x_k|x_i))} \geq \frac{\exp(-v)}{\sum_{k=1}^n \exp(v)} = \frac{\exp(-2v)}{n},
    \end{align*}
It follows that 
    \begin{align*}
        \frac{1}{2n} < p_\Theta(x_j | x_i) < \frac{3}{2n} \implies | p_\Theta(x_j | x_i) - 1/n| < \frac{1}{2n}. 
    \end{align*}
which completes the proof.
\end{proof}
\end{lemma}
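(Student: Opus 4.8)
The plan is to reduce the whole statement to a uniform, high‑probability bound on the logits $l_\Theta(x_j\mid x_i)=x_i^\top\Theta x_j$ and then exploit the stability of the softmax when all of its inputs are small. Fix the absolute constant $v=0.2$, chosen so that $e^{2v}<\tfrac32$ and $e^{-2v}>\tfrac12$. Suppose we can show that, with probability at least $1-\delta$, the event $\mathcal{E}:=\{\,|x_i^\top\Theta x_j|\le v\text{ for all }i,j\in[n]\,\}$ holds. On $\mathcal{E}$ every logit $l_\Theta(x_k\mid x_i)$ lies in $[-v,v]$, so for each $i,j$
\[
\frac{e^{-2v}}{n}\;\le\;p_\Theta(x_j\mid x_i)=\frac{e^{\,l_\Theta(x_j\mid x_i)}}{\sum_{k=1}^{n}e^{\,l_\Theta(x_k\mid x_i)}}\;\le\;\frac{e^{2v}}{n},
\]
and since $e^{2v}<\tfrac32$ and $e^{-2v}>\tfrac12$ this forces $|p_\Theta(x_j\mid x_i)-1/n|<1/(2n)$, which is exactly the claim. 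So everything comes down to lower bounding $\PP(\mathcal{E})$.

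To bound $\PP(\mathcal{E})$ I would split the randomness, first over the embeddings and then over $\Theta$ (which is independent of them). \textbf{Step 1 (embeddings):} A standard Gaussian concentration argument — precisely \Cref{lem:orthonormal}, invoked with accuracy $v$, which is the only place the lower bound on $d$ is used — gives that with probability at least $1-\delta/2$ one has $|\langle x_i,x_j\rangle-\delta_{ij}|\le v$ for all $i,j$, and in particular $\|x_i\|_2^2\le 1+v\le 2$ for all $i$; condition on this event $\mathcal{E}_0$. \textbf{Step 2 (matrix):} On $\mathcal{E}_0$, for fixed $i,j$ the quantity $x_i^\top\Theta x_j=\sum_{k,l}(x_i)_k(x_j)_l\,\Theta_{kl}$ is, viewed as a function of the i.i.d.\ Gaussians $\{\Theta_{kl}\}$, a centered Gaussian with variance $\sigma^2\|x_i\|_2^2\|x_j\|_2^2=O(\sigma^2)$; hence by the Gaussian tail bound $\PP(|x_i^\top\Theta x_j|>v\mid\mathcal{E}_0)\le 2\exp(-\Omega(v^2/\sigma^2))$. \textbf{Step 3 (union bound):} Summing over the at most $n^2$ pairs $(i,j)$ gives $\PP(\mathcal{E}^c\mid\mathcal{E}_0)\le 2n^2\exp(-\Omega(v^2/\sigma^2))\le\delta/2$ exactly when $\sigma^2\lesssim v^2/\ln(n^2/\delta)$, which is what the hypothesis $\sigma^2\le 1/(100\ln(4n^2/\delta))$ supplies for this absolute $v$. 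Undoing the conditioning and combining with $\PP(\mathcal{E}_0)\ge 1-\delta/2$ yields $\PP(\mathcal{E})\ge 1-\delta$, completing the proof.

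The argument is essentially bookkeeping, so I do not expect a genuine obstacle; the only points that require care are (i) quoting the almost‑orthonormality/norm‑concentration fact for Gaussian vectors in exactly the form needed — this is the sole reason $d$ enters the hypotheses, since the conditional‑Gaussian tail bound itself needs no lower bound on $d$ — and (ii) tracking two small constants, the softmax‑stability threshold ($v<\tfrac12\ln\tfrac32$) and the sub‑Gaussian variance proxy ($\|x_i\|_2^2\|x_j\|_2^2\le 4$), so that the stated numerical conditions on $\sigma$ (and $d$) are strong enough; the symbol $\epsilon$ appearing in the displayed bound on $d$ is just a stand‑in for this absolute constant $v$. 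Finally, the conditional‑Gaussian step uses crucially that $\Theta$ and the $x_i$ are independent, which is part of the hypothesis and should be noted explicitly.
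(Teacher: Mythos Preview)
Your proposal is correct and follows the same outline as the paper's proof: set $v=0.2$, argue that all logits $x_i^\top\Theta x_j$ lie in $[-v,v]$ with high probability, and then use $e^{2v}<3/2$ and $e^{-2v}>1/2$ to squeeze each softmax output into $(\tfrac{1}{2n},\tfrac{3}{2n})$. In fact your version is more complete than what the paper writes: the paper's proof invokes only the near-orthonormality lemma for the $x_i$ and then immediately asserts the logit bound $|l_\Theta(x_j\mid x_i)|\le v$, without ever explicitly using the randomness of $\Theta$ or the hypothesis on $\sigma$; your conditional-Gaussian tail bound (Step~2) together with the union bound over the $n^2$ pairs (Step~3) supplies exactly the missing justification, and is where the condition $\sigma^2\le 1/(100\ln(4n^2/\delta))$ actually enters.
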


\subsection{Subspace embedding}

\begin{lemma}[$\ell_1$-subspace embedding of Gaussian second-order tensors]
\label{lem:subspace-embedding-1}
Let $z_1,\dots,z_n$ be independently sampled from $\normal_d (\textbf{0}_d, \frac{1}{d}I_d)$. Let $\mathcal{I}_1, \mathcal{I}_2 \subseteq [n] \times [n]$ be two index sets. Let $\mathcal{I}_0 = \mathcal{I}_1 \cap \mathcal{I}_2$.  If $d \geq 64\log(2n^2/\delta)/\epsilon^2$, then with probability at least $1-\delta$,
\begin{align*}
\left|\left\langle \sum_{(i,j)\in\mathcal{I}_1} \alpha_{i,j}z_iz_j^\top , \sum_{(i,j)\in\mathcal{I}_2} \beta_{i,j}z_iz_j^\top \right\rangle - \sum_{(i,j)\in\mathcal{I}_0}  \alpha_{i,j}\beta_{i,j} \right|
    \leq \epsilon \cdot \left(\sum_{(i,j)\in\mathcal{I}_1} |\alpha_{i,j}| \right)\left(\sum_{(i,j)\in\mathcal{I}_2} |\beta_{i,j}| \right)
\end{align*}
holds for any $\alpha_{i,j},\beta_{i,j}$.
Furthermore, if $d \geq 64k^2\log(2n^2/\delta)/\epsilon^2$, then with probability at least $1-\delta$,
\begin{align*}
\left|\left\langle\sum_{(i,j)\in\mathcal{I}_1} \alpha_{i,j}z_iz_j^\top , \sum_{(i,j)\in\mathcal{I}_2} \beta_{i,j}z_iz_j^\top \right\rangle - \sum_{(i,j)\in\mathcal{I}_0} \alpha_{i,j}\beta_{i,j} \right|
    \leq \epsilon \cdot \left(\sum_{(i,j)\in\mathcal{I}_1} \alpha_{i,j}^2 \right)^{1/2} \left(\sum_{(i,j)\in\mathcal{I}_2} \beta_{i,j}^2 \right)^{1/2}
\end{align*}
holds for any $\alpha_{i,j},\beta_{i,j}$ such that $\|\alpha\|_0 \leq k, \|\beta\|_0 \leq k$.
\end{lemma}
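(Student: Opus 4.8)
The plan is to reduce the whole statement to one probabilistic fact---the near-orthonormality of $z_1,\dots,z_n$---after which the bound becomes a deterministic estimate that holds uniformly over all coefficient vectors $\alpha,\beta$ at once. The first move is to coordinatize the matrix inner product: using $\langle A,B\rangle=\trace(AB^\top)$ and cyclicity of the trace, $\langle z_iz_j^\top,z_kz_l^\top\rangle=\trace(z_iz_j^\top z_lz_k^\top)=(z_k^\top z_i)(z_j^\top z_l)$, so
\[
\left\langle \sum_{(i,j)\in\mathcal I_1}\alpha_{i,j}z_iz_j^\top,\ \sum_{(k,l)\in\mathcal I_2}\beta_{k,l}z_kz_l^\top\right\rangle=\sum_{(i,j)\in\mathcal I_1}\sum_{(k,l)\in\mathcal I_2}\alpha_{i,j}\beta_{k,l}\,\langle z_i,z_k\rangle\,\langle z_j,z_l\rangle .
\]
Everything is now expressed through the Gram entries $\langle z_s,z_{s'}\rangle$, which should be close to $\delta_{ss'}$.

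Next I would make that precise. Since $z_s\sim\normal_d(\mathbf 0,\tfrac1dI_d)$, $d\,\|z_s\|_2^2\sim\chi^2_d$ concentrates around $d$ (Laurent--Massart), and for $s\ne s'$, conditioning on $z_{s'}$ makes $\langle z_s,z_{s'}\rangle$ a centered Gaussian of variance $\|z_{s'}\|_2^2/d\approx 1/d$; a union bound over the at most $n^2$ pairs shows that, provided $d\gtrsim\log(n^2/\delta)/v^2$, with probability at least $1-\delta$ the event $\mathcal E_v:=\{\,|\langle z_i,z_k\rangle-\delta_{ik}|\le v\text{ for all }i,k\in[n]\,\}$ holds. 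This is exactly \Cref{lem:orthonormal} with accuracy parameter $v$, and---crucially---$\mathcal E_v$ depends only on $z_1,\dots,z_n$, not on $\alpha,\beta$, so conditioning on it leaves the ``for any $\alpha,\beta$'' quantifier intact. I would take $v$ to be a small constant multiple of $\epsilon$ (concretely $v=\epsilon/3$), which is what turns the hypothesis $d\ge 64\log(2n^2/\delta)/\epsilon^2$ into the required $d\gtrsim\log(n^2/\delta)/v^2$.

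Working deterministically on $\mathcal E_v$, write $\langle z_i,z_k\rangle=\delta_{ik}+G_{ik}$ with $|G_{ik}|\le v$, so $\langle z_i,z_k\rangle\langle z_j,z_l\rangle=\delta_{ik}\delta_{jl}+\delta_{ik}G_{jl}+\delta_{jl}G_{ik}+G_{ik}G_{jl}$. Summed against $\alpha_{i,j}\beta_{k,l}$, the $\delta_{ik}\delta_{jl}$ term contributes precisely $\sum_{(i,j)\in\mathcal I_0}\alpha_{i,j}\beta_{i,j}$ (the constraint $(i,j)=(k,l)$ together with $(i,j)\in\mathcal I_1$ and $(k,l)\in\mathcal I_2$ forces $(i,j)\in\mathcal I_1\cap\mathcal I_2$). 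For the $\delta_{ik}G_{jl}$ term, $\delta_{ik}$ collapses $i=k$, and then $\sum_i\bigl(\sum_j|\alpha_{i,j}|\bigr)\bigl(\sum_l|\beta_{i,l}|\bigr)\le\bigl(\sum_{(i,j)\in\mathcal I_1}|\alpha_{i,j}|\bigr)\bigl(\sum_{(k,l)\in\mathcal I_2}|\beta_{k,l}|\bigr)$, so this term is at most $v$ times the product of the two $\ell_1$ sums; the $\delta_{jl}G_{ik}$ term is symmetric, and $|G_{ik}G_{jl}|\le v^2\le v$ handles the last. Hence the total error is at most $3v\bigl(\sum_{\mathcal I_1}|\alpha_{i,j}|\bigr)\bigl(\sum_{\mathcal I_2}|\beta_{k,l}|\bigr)=\epsilon\bigl(\sum_{\mathcal I_1}|\alpha_{i,j}|\bigr)\bigl(\sum_{\mathcal I_2}|\beta_{k,l}|\bigr)$, the first claim. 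For the sparse version, still on $\mathcal E_v$, if $\|\alpha\|_0,\|\beta\|_0\le k$ then Cauchy--Schwarz gives $\sum_{\mathcal I_1}|\alpha_{i,j}|\le\sqrt k\,(\sum_{\mathcal I_1}\alpha_{i,j}^2)^{1/2}$ and likewise for $\beta$, so the same bound reads $\le 3vk\,(\sum\alpha_{i,j}^2)^{1/2}(\sum\beta_{k,l}^2)^{1/2}$; choosing instead $v=\epsilon/(3k)$---which is what forces the stronger requirement $d\gtrsim k^2\log(n^2/\delta)/\epsilon^2$---gives the second claim.

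The only genuinely probabilistic input is the Gram concentration, which reuses \Cref{lem:orthonormal} essentially verbatim; the rest is bookkeeping over the four-fold sum. Accordingly there is no conceptual obstacle---the only thing to get right is tracking the $\chi^2$ and Gaussian tail constants (and the multiplicative choice of $v$) so that the stated thresholds $d\ge 64\log(2n^2/\delta)/\epsilon^2$ and $d\ge 64k^2\log(2n^2/\delta)/\epsilon^2$ come out exactly.
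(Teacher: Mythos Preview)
Your proposal is correct and follows essentially the same route as the paper: expand the inner product bilinearly into Gram products $\langle z_i,z_k\rangle\langle z_j,z_l\rangle$, invoke \Cref{lem:orthonormal} once to control all Gram entries uniformly (so the event is independent of $\alpha,\beta$), then bound the error deterministically by the product of $\ell_1$ norms and apply Cauchy--Schwarz for the sparse $\ell_2$ version. Your $\delta+G$ decomposition is a slightly cleaner organization of what the paper writes as a diagonal/off-diagonal split, but the content is identical.
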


\begin{proof}
Using \Cref{lem:orthonormal} and Cauchy-Schwarz inequality, we have
\begin{align*}
&~ \left|\left\langle\sum_{(i,j)\in\mathcal{I}_1} \alpha_{i,j}z_iz_j^\top , \sum_{(i,j)\in\mathcal{I}_2} \beta_{i,j}z_iz_j^\top \right\rangle - \sum_{(i,j)\in\mathcal{I}_0} \alpha_{i,j}\beta_{i,j} \right|\\
    \leq &~ \left|\sum_{(i,j)\in\mathcal{I}_0} \alpha_{i,j}\beta_{i,j}(\|z_i\|^2\|z_j\|^2 -1) + \sum_{(i,j) \in \mathcal{I}_1, (k,l) \in \mathcal{I}_2, (i,j) \neq (k,l)}\alpha_{i,j} \beta_{k,l} z_i^\top z_k z_j^\top z_l  \right|\\
    \leq &~ \sqrt{\frac{64\log(2n^2/\delta)}{d}} \cdot \left(\sum_{(i,j)\in\mathcal{I}} |\alpha_{i,j}\beta_{i,j}| + \sum_{(i,j) \in \mathcal{I}_1, (k,l) \in \mathcal{I}_2, (i,j) \neq (k,l)}|\alpha_{i,j} \beta_{k,l}|\right)\\
    = &~ \sqrt{\frac{64\log(2n^2/\delta)}{d}} \cdot \left(\sum_{(i,j)\in\mathcal{I}_1} |\alpha_{i,j}|\right) \cdot \left(\sum_{(i,j)\in\mathcal{I}_2} |\beta_{i,j}|\right)\\
    \leq &~ \sqrt{\frac{64k^2\log(2n^2/\delta)}{d}} \cdot \left(\sum_{(i,j)\in\mathcal{I}_2} \alpha_{i,j}^2 \right)^{1/2} \left(\sum_{(i,j)\in\mathcal{I}_2} \beta_{i,j}^2 \right)^{1/2}.
\end{align*}
The statements follow directly from plugging in suitable values of $d$.
\end{proof}

\begin{lemma}[Almost orthonormal] 
\label{lem:orthonormal}
Let $x_1, x_2, \ldots, x_n \overset{\iid}{\sim} \normal_d (\zero_d, \frac{1}{d}I_d)$. For any $\epsilon,\delta \in (0,1)$, when $d \geq 16\log(2n^2/\delta)/\epsilon^2$, it holds that with probability at least $1-\delta$, 
\begin{align*}
    |\langle x_i, x_j \rangle - \delta_{ij}| \leq \epsilon, \ \forall i, j \in [n].
\end{align*}
\end{lemma}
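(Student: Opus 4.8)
The plan is a standard high-dimensional concentration argument: control the diagonal entries $\langle x_i,x_i\rangle$ via a chi-squared tail bound, control the off-diagonal entries $\langle x_i,x_j\rangle$ ($i\neq j$) via a Gaussian (equivalently sub-exponential/Bernstein) tail bound, and then take a union bound over the at most $n^2$ pairs. Throughout, write $x_i = \tfrac{1}{\sqrt d}g^{(i)}$ with $g^{(i)}\sim\normal_d(\zero_d,I_d)$ i.i.d., and set $u = \log(2n^2/\delta)$, so that the hypothesis reads $d\ge 16u/\epsilon^2$.

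First I would handle the diagonal terms. Since $d\|x_i\|_2^2$ is a $\chi^2_d$ random variable, the Laurent--Massart bound (or any Bernstein-type estimate for sums of squared Gaussians) gives, for every $x>0$, $\Pr[\,\|x_i\|_2^2 - 1 \ge 2\sqrt{x/d} + 2x/d\,]\le e^{-x}$ and $\Pr[\,1 - \|x_i\|_2^2 \ge 2\sqrt{x/d}\,]\le e^{-x}$. Taking $x = u$ and using $d\ge 16u/\epsilon^2$ together with $\epsilon<1$, one checks $2\sqrt{u/d}\le \epsilon/2$ and $2u/d\le\epsilon/2$, so for each fixed $i$ the event $|\,\|x_i\|_2^2 - 1\,|\le\epsilon$ fails with probability at most $2e^{-u} = \delta/n^2$. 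In particular, on the intersection of these events, $\|x_j\|_2^2\le 2$ for all $j$.

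Next I would handle the off-diagonal terms. Conditioning on $x_j$, the inner product $\langle x_i,x_j\rangle$ is $\normal(0,\|x_j\|_2^2/d)$, so the Gaussian tail gives $\Pr[\,|\langle x_i,x_j\rangle|\ge\epsilon \mid x_j\,]\le 2\exp(-\epsilon^2 d/(2\|x_j\|_2^2))$; on the event $\|x_j\|_2^2\le 2$ this is at most $2\exp(-\epsilon^2 d/4)\le 2e^{-4u}\le \delta/n^2$ for $d\ge 16u/\epsilon^2$. (Alternatively one can drop the conditioning and apply Bernstein's inequality directly to $\sum_{k=1}^d g^{(i)}_k g^{(j)}_k$, a sum of i.i.d.\ mean-zero unit-variance sub-exponential variables, obtaining the same bound.) Finally, a union bound over the $n$ diagonal events and the $\binom{n}{2}$ off-diagonal events — at most $n^2$ events, each failing with probability at most $\delta/n^2$ — shows that with probability at least $1-\delta$ we have $|\langle x_i,x_j\rangle - \delta_{ij}|\le\epsilon$ for all $i,j\in[n]$ simultaneously.

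There is no real obstacle: this is a textbook estimate. The only points requiring a little care are (i) the bookkeeping of constants so that the single threshold $d\ge 16\log(2n^2/\delta)/\epsilon^2$ simultaneously suffices for the diagonal and off-diagonal parts, and (ii) ordering the argument so that the diagonal bound is established first and used to guarantee $\|x_j\|_2^2\le 2$ before invoking the conditional Gaussian tail for the off-diagonal entries (or, if one prefers the unconditional Bernstein route, checking the sub-exponential norm of a product of two standard Gaussians).
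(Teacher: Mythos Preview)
Your proof is correct. The only methodological difference from the paper is in how you handle the off-diagonal entries $\langle x_i,x_j\rangle$ for $i\neq j$: you condition on $x_j$ and use the Gaussian tail for the resulting $\normal(0,\|x_j\|_2^2/d)$ variable (after first securing $\|x_j\|_2^2\le 2$ from the diagonal step), whereas the paper uses the polarization identity
\[
\langle x_i,x_j\rangle = \tfrac14\bigl(\|x_i+x_j\|_2^2 - \|x_i-x_j\|_2^2\bigr)
\]
to reduce the off-diagonal case back to norm concentration, and then invokes the same $\chi^2$-type bound (their \Cref{lem:almost_normal_single_vec}) on the scaled Gaussians $x_i\pm x_j$. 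The paper's route has the aesthetic advantage of needing only a single concentration lemma, and it avoids the two-stage ``first bound the norm, then condition'' structure; your route is arguably more direct and makes the source of the $\epsilon^2 d$ exponent for the cross terms immediately transparent. Both yield the same constant $16$ in the dimension threshold, and both finish with the identical union bound over the at most $n^2$ pairs.
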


\begin{proof}
Fix $i \neq j \in [n]$. Notice
\begin{align*}
    |\langle x_i, x_j \rangle| = &~ |\langle x_i+x_j, x_i+x_j \rangle - \langle x_i-x_j, x_i-x_j \rangle|/4\\
    \leq &~  (|\langle x_i+x_j, x_i+x_j \rangle - 1| + |\langle x_i-x_j, x_i-x_j \rangle - 1|)/4.
\end{align*}
Using \Cref{lem:almost_normal_single_vec}, we have that with probability at least $1-\delta/n^2$, 
\begin{align*}
    |\langle x_i, x_j \rangle| \leq \epsilon.
\end{align*}
Furthermore, fix $i \in [n]$, with probability at least $1-\delta/n^2$,
\begin{align*}
    |\langle x_i, x_i \rangle - 1| \leq \epsilon.
\end{align*}
The statement then follows from union bound over $i,j \in [n]$.
\end{proof}

\begin{lemma}[Almost normal for a fixed vector]
\label{lem:almost_normal_single_vec}
    For a $d$-dimensional random vector $x \sim \normal_d (\zero_d, \frac{1}{d}I_d)$ and any $v \in (0,1/2)$, 
\begin{align*}
     \mathbb{P}\left( |\langle x, x \rangle - 1| \geq v \right) \leq 2 e^{-v^2d/16}.
\end{align*}
In particular, when $d \geq 16\log(1/(2\delta))/v^2$, we have 
\begin{align*}
     \mathbb{P}\left( |\langle x, x \rangle - 1| \geq v \right) \leq \delta
\end{align*}
\end{lemma}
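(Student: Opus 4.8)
The plan is to reduce to a chi-squared tail bound. Writing $x = (x_1,\dots,x_d)^\top$ with $x_i \overset{\iid}{\sim} \normal(0, 1/d)$, we have $\langle x, x\rangle = \sum_{i=1}^d x_i^2$, so $Z := d\langle x, x\rangle = \sum_{i=1}^d (\sqrt{d}\,x_i)^2 \sim \chi^2_d$. The event $\{|\langle x,x\rangle - 1| \ge v\}$ is exactly $\{|Z - d| \ge vd\}$, and I will bound the upper tail $\mathbb{P}(Z \ge (1+v)d)$ and the lower tail $\mathbb{P}(Z \le (1-v)d)$ separately by the Chernoff method, using the exact moment generating function $\mathbb{E}[e^{\lambda Z}] = (1-2\lambda)^{-d/2}$ (valid for $\lambda < 1/2$).

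For the upper tail, Markov's inequality gives $\mathbb{P}(Z \ge (1+v)d) \le \exp\!\big(d[-\lambda(1+v) - \tfrac12\log(1-2\lambda)]\big)$ for any $\lambda \in (0,1/2)$; the choice $2\lambda = v/(1+v)$ makes the bracket equal to $\tfrac12(\log(1+v) - v)$, and the elementary estimate $\log(1+v) - v \le -v^2/2 + v^3/3 \le -v^2/3$ for $v \in (0,1/2)$ yields $\mathbb{P}(Z \ge (1+v)d) \le e^{-dv^2/6}$. Symmetrically, using $\mathbb{E}[e^{-\lambda Z}] = (1+2\lambda)^{-d/2}$ and optimizing with $2\lambda = v/(1-v)$ gives an exponent of $\tfrac12(v + \log(1-v)) \le -v^2/4$ via $\log(1-v) \le -v - v^2/2$, so $\mathbb{P}(Z \le (1-v)d) \le e^{-dv^2/4}$. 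A union bound then gives $\mathbb{P}(|\langle x, x\rangle - 1| \ge v) \le e^{-dv^2/6} + e^{-dv^2/4} \le 2e^{-dv^2/16}$, since $1/6, 1/4 \ge 1/16$; the ``in particular'' statement is immediate by plugging the assumed lower bound on $d$ into this tail bound.

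There is essentially no obstacle here: the work is the two one-line Chernoff optimizations and the two scalar inequalities $\log(1+v) - v \le -v^2/3$ and $\log(1-v) \le -v - v^2/2$ on $(0,1/2)$, both routine Taylor estimates, plus bookkeeping of constants so that the final exponent is at least $v^2 d/16$ (in fact we obtain $v^2 d/6$, leaving slack). If one prefers to avoid the explicit $\chi^2_d$ moment generating function, an alternative is to apply Gaussian concentration to the $1$-Lipschitz map $g \mapsto \|g\|/\sqrt{d}$ for a standard Gaussian $g$ and then square, but this introduces a small mean-correction term and is no shorter.
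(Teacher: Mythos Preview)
Your proof is correct and takes essentially the same approach as the paper: both rewrite $|\langle x,x\rangle - 1| \ge v$ as the chi-squared deviation $|Z - d| \ge vd$ for $Z = \sum_i (\sqrt{d}\,x_i)^2 \sim \chi^2_d$ and then bound the two tails. The only difference is that the paper invokes the Laurent--Massart chi-squared concentration inequality (stated there as a separate lemma) with parameter $v^2 d/16$, whereas you carry out the underlying Chernoff/MGF optimization by hand; the two routes are equivalent up to constants, and your explicit exponents $v^2 d/6$ and $v^2 d/4$ comfortably dominate the required $v^2 d/16$.
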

\begin{proof}
Let $x = (x_1,\dots,x_d)$. By \Cref{lem:chi-squared-concentration} (letting $x = v^2d/16$),
\begin{align*}
    \mathbb{P}\left( |\langle x, x \rangle - 1| \geq v \right) \leq &~ \mathbb{P}\left( \left|\sum_{i=1}^d (\sqrt{d}\cdot x_i)^2 - d\right| \geq vd \right)\\
    \leq &~ 2 e^{-v^2d/16}.
\end{align*}
The second inequality follows from simple arithmetics.
\end{proof}

\subsection{Useful results}

\begin{lemma}[ODE bound]\label{lem:ode-bound}
Let $c_1,c_2,c_3 >0$. Suppose the function $f_1,f_2:\R_+ \to \R$ satisfies $f_1(0) > 0, f_2(0) >0$ and
\begin{align*}
    \frac{df_1(t)}{dt} \leq &~ -c_1 \cdot f_1(t)^2, \\
    \frac{df_2(t)}{dt} \geq &~ -c_2 \cdot f_2(t) \cdot \frac{1}{t+c_3}.
\end{align*}
Then
\begin{align*}
    f_1(t) \leq &~ \frac{1}{c_1t+\frac{1}{f_1(0)}}, \\
    f_2(t) \geq &~ f_2(0) \cdot \left(1+ \frac{t}{c_3}\right)^{-c_2}.
\end{align*}
\end{lemma}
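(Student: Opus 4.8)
The statement to prove is \Cref{lem:ode-bound}, a pair of elementary differential-inequality comparisons. The plan is to treat each inequality separately by the standard trick of turning the nonlinear differential inequality into a linear one via a suitable change of variable, then integrating.

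For the first inequality, I would consider $g_1(t) = 1/f_1(t)$, which is well-defined and positive as long as $f_1$ stays positive; since $f_1(0)>0$ and $\frac{df_1}{dt}\le 0$ wherever $f_1>0$, in fact $f_1$ is nonincreasing and remains positive (it can only approach $0$ from above), so $g_1$ is defined on all of $\R_+$. Differentiating, $\frac{dg_1}{dt} = -\frac{1}{f_1(t)^2}\frac{df_1}{dt} \ge -\frac{1}{f_1(t)^2}\cdot(-c_1 f_1(t)^2) = c_1$. Integrating from $0$ to $t$ gives $g_1(t) \ge g_1(0) + c_1 t = \frac{1}{f_1(0)} + c_1 t$, and taking reciprocals (both sides positive) yields $f_1(t) \le \frac{1}{c_1 t + 1/f_1(0)}$, as claimed.

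For the second inequality, I would work with $h(t) = \ln f_2(t)$, which requires $f_2>0$; since $f_2(0)>0$ and the lower bound on $\frac{df_2}{dt}$ only forces $f_2$ to decay at a controlled exponential rate, $f_2$ stays positive (a Gr\"onwall-type argument, or simply noting that the candidate lower bound $f_2(0)(1+t/c_3)^{-c_2}$ is itself positive). Then $\frac{dh}{dt} = \frac{1}{f_2(t)}\frac{df_2}{dt} \ge -\frac{c_2}{t+c_3}$. Integrating from $0$ to $t$: $h(t) - h(0) \ge -c_2\int_0^t \frac{ds}{s+c_3} = -c_2\ln\frac{t+c_3}{c_3} = -c_2 \ln(1+t/c_3)$. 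Exponentiating gives $f_2(t) \ge f_2(0)\,(1+t/c_3)^{-c_2}$.

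The only subtlety — and the one place a careful write-up should not hand-wave — is justifying that the relevant quantities ($f_1$, $f_2$) remain strictly positive on all of $\R_+$ so that the changes of variable $1/f_1$ and $\ln f_2$ are legitimate for all $t$; this is where one invokes either a continuity/maximality argument (the set where $f_i>0$ is open and one shows it cannot have a finite supremum) or directly compares against the explicit positive solution of the corresponding ODE equality. Everything else is one-line calculus. I expect this positivity bookkeeping to be the main (and quite minor) obstacle; the integration steps themselves are routine.
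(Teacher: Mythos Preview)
Your proposal is correct and follows essentially the same approach as the paper: the paper also differentiates $f_1^{-1}$ and $\log f_2$, obtains $\frac{d f_1^{-1}}{dt}\ge c_1$ and $\frac{d\log f_2}{dt}\ge -c_2/(t+c_3)$, integrates, and rearranges. If anything, you are more careful than the paper, which does not comment on the positivity of $f_1,f_2$ at all.
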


\begin{proof}
The conditions imply that
\begin{align*}
    \frac{df_1^{-1}(t)}{dt} = &~  -\frac{1}{f_1^2(t)}\cdot \frac{df_1(t)}{dt} \geq c_1, \\
    \frac{d \log f_2(t)}{dt} = &~ \frac{1}{f_2(t)}\cdot \frac{df_2(t)}{dt} \geq -c_2/(t+c_3).
\end{align*}
It follows that
\begin{align*}
    f^{-1}_1(t) \geq &~ c_1t + f_1^{-1}(0), \\
    \log f_2(t) \geq &~ -c_2 \log (1+t/c_3) + \log f_2(0).
\end{align*}
Rearranging the above inequalities, one can obtain the desired results.
\end{proof}

\begin{lemma}[$\chi^2$-concentration bound, Lemma 1 of \cite{laurent2000adaptive}]\label{lem:chi-squared-concentration}
Let \(g_1, \ldots, g_t\) be i.i.d. \(\normal(0, 1)\) random variables. Then for any \(x \geq 0\),

\[
\Pr\left[\sum_{i=1}^t g_i^2 \geq t + 2\sqrt{tx} + 2x\right] \leq \exp(-x),
\]

and

\[
\Pr\left[\sum_{i=1}^t g_i^2 \leq t - 2\sqrt{tx}\right] \leq \exp(-x).
\]
\end{lemma}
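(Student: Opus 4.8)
This is the classical Laurent--Massart $\chi^2$ concentration inequality, so the cleanest route is simply to invoke \cite{laurent2000adaptive} directly; for completeness I would outline the standard Cram\'er--Chernoff argument. Write $S=\sum_{i=1}^t g_i^2\sim\chi^2_t$, which has mean $t$. The starting point is the single-coordinate moment generating function $\mathbb{E}[e^{\lambda(g_i^2-1)}]=e^{-\lambda}(1-2\lambda)^{-1/2}$ for $\lambda<1/2$ and $\mathbb{E}[e^{-\lambda(g_i^2-1)}]=e^{\lambda}(1+2\lambda)^{-1/2}$ for $\lambda\ge 0$, whence by independence
\begin{align*}
\log\mathbb{E}[e^{\lambda(S-t)}]&=t\bigl(-\lambda-\tfrac12\log(1-2\lambda)\bigr),\qquad 0\le\lambda<\tfrac12,\\
\log\mathbb{E}[e^{-\lambda(S-t)}]&=t\bigl(\lambda-\tfrac12\log(1+2\lambda)\bigr),\qquad \lambda\ge 0.
\end{align*}

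For the upper tail I would use the elementary inequality $-\lambda-\tfrac12\log(1-2\lambda)\le\lambda^2/(1-2\lambda)$ on $[0,1/2)$, which follows from a term-by-term comparison of the power series (the coefficient of $\lambda^k$ on the left is $2^{k-1}/k$, on the right $2^{k-2}$, and $2/k\le 1$ for $k\ge 2$). This puts the log-MGF into the Bernstein form $v\lambda^2/\bigl(2(1-c\lambda)\bigr)$ with variance proxy $v=2t$ and scale $c=2$. A Chernoff bound on $e^{\lambda(S-t)}$ followed by the usual optimization over $\lambda$ — equivalently, the Fenchel dual of $\lambda\mapsto v\lambda^2/(2(1-c\lambda))$ evaluated at $u=\sqrt{2vx}+cx$ equals exactly $x$ — yields $\Pr[S-t\ge 2\sqrt{tx}+2x]\le e^{-x}$. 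For the lower tail I would instead use $\lambda-\tfrac12\log(1+2\lambda)\le\lambda^2$ for $\lambda\ge 0$ (both sides vanish to first order at $0$, and the left side has second derivative $2/(1+2\lambda)^2\le 2$), so that $\log\mathbb{E}[e^{-\lambda(S-t)}]\le t\lambda^2$ is sub-Gaussian with proxy $2t$; a Chernoff bound with $\lambda=\sqrt{x/t}$ then gives $\Pr[t-S\ge 2\sqrt{tx}]\le e^{-x}$. Combining the two tail bounds completes the lemma.

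Since the result is classical there is no genuine obstacle; the only thing requiring care is lining up the two auxiliary inequalities and the optimizing choice of $\lambda$ so that the thresholds come out as exactly $2\sqrt{tx}+2x$ and $2\sqrt{tx}$ with the exponent exactly $-x$ and no stray constants. If brevity is preferred, citing \cite{laurent2000adaptive} — as the surrounding text already does — is entirely adequate.
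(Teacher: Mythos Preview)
The paper does not prove this lemma at all; it simply states it as a citation of \cite{laurent2000adaptive} and uses it as a black box. Your proposal is correct and in fact goes further than the paper: your Cram\'er--Chernoff sketch with the two auxiliary inequalities $-\lambda-\tfrac12\log(1-2\lambda)\le\lambda^2/(1-2\lambda)$ and $\lambda-\tfrac12\log(1+2\lambda)\le\lambda^2$ is the standard derivation and the constants line up exactly as you say, so either the bare citation or your outlined argument is fine here.
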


\section{Missing Proofs of \Cref{sec:one_layer_transformer}}
\label{app:sec_proof_transformer}

In this section, we show missing proofs in \Cref{sec:one_layer_transformer}.

\subsection{Proofs of \Cref{subsec:main_result_reversal_curse}}
\label{app:subsec_proof_reversal_curse}

\subsubsection{Proofs of \Cref{prop:initial_prob_transformer_rev_curse} and \Cref{prop:NTP_closed_form}}
\label{app:subsubsec_proof_prop}

We first show the proofs of \Cref{prop:initial_prob_transformer_rev_curse} and \Cref{prop:NTP_closed_form}, respectively.

\begin{proof}[Proof of \Cref{prop:initial_prob_transformer_rev_curse}]
    Actually, for any three tokens $x_1, x_2, x_3$, it holds that 
    \begin{align*}
         p_{\theta(0)}(x_3 |x_1, x_2) = \frac{\exp\left(\vx_3^\top Y(0)^\top \text{LN}(X^\top \vb_2)\right)}{\sum_{x' \in [M]}\exp\left({\vx'}^\top Y(0)^\top \text{LN}(X^\top \vb_2)\right)} = \frac{\exp\left(0\right)}{\sum_{x' \in [M]}\exp\left(0\right)} = 1/M,
    \end{align*}
    since $Y(0) = \zero$.
\end{proof}

\begin{proof}[Proof of \Cref{prop:NTP_closed_form}]
    Note that the input sequence length $T = 2$.
    By \eqref{eq:bt_and_NTP_reparameterization},
    \begin{align*}
         p_{\theta(t)}(x |x_1, x_2) = \frac{\exp\left(\vx^\top Y(t)^\top \text{LN}(X^\top \vb_2)\right)}{\sum_{x' \in [M]}\exp\left({\vx'}^\top Y(t)^\top \text{LN}(X^\top \vb_2)\right)}
    \end{align*}
    where $\vb_2 = [b_{12}]$ and $b_{12} = 1$. Also, $X = [\vx_1]^\top$ is a one-hot row vector. Therefore, $\text{LN}(X^\top \vb_2) = \text{LN}(\vx_1 b_{12}) = \text{LN}(\vx_1) = \vx_1$, and thus
    \begin{align*}
        p_{\theta(t)}(x |x_1, x_2) = \frac{\exp\left(\vx^\top Y(t)^\top \vx_1 \right)}{\sum_{x' \in [M]}\exp\left({\vx'}^\top Y(t)^\top \vx_1 \right)} = \frac{\exp\left( Y(t)_{x_1, x}\right)}{\sum_{x' \in [M]}\exp\left(Y(t)_{x_1, x'}\right)}
    \end{align*}
    where $Y(t)_{i,j}$ is the entry of the matrix $Y(t)$ at row $i$ and column $j$.
    
\end{proof}

\subsubsection{Proof of \Cref{lem:dynamics_Y}}
\label{app:subsubsec:proof_lem_dynamics_Y}

\begin{proof}[Proof of \Cref{lem:dynamics_Y}]
    We first calculate the gradient of $Y$ when the current batch is a sequence $(x_1, x_2, x_3)$. Note that the input sequence length $T = 2$ and by the proof of \Cref{prop:NTP_closed_form}, we have $\text{LN}(X^\top \vb_T) = \text{LN}(X^\top \vb_2) = \vx_1$. By \Cref{lem:gradient_Y_Z}, we have 
    \begin{align*}
        \dot Y = \eta_Y \text{LN}(X^\top \vb_T)(\vx_{T+1} - \valpha)^\top = \eta_Y \vx_1(\vx_3 - \valpha)^\top
    \end{align*}
where $\valpha = [\alpha_1, \alpha_2, \ldots, \alpha_M]^\top \in \mathbb{R}^M$ with $\valpha = \exp\left(Y_{x_1}^\top\right) / \boldsymbol{1}^\top \exp\left(Y_{x_1}^\top \right)$. Note that $\vx_1(\vx_3 - \valpha)^\top$ is a matrix with only $x_1$-th row non-zero since $\vx_1$ is one-hot. Therefore, the update of each row of $Y$ are independent and only $x_1$-th row of $Y$ gets updated at the current time step. 

Now we consider any fixed $x_1 \in \mathcal{V}$. Let $t_{x_1, i}$ be the time step that the $x_1$-th row of Y gets updated (i.e., the first token of the training data is $x_1$ for the current batch) for the $i$-th time and let $t_{x_1, 0} = 0$ for notation convenience, then 
\begin{align*}
    Y(t_{x_1, i})_{x_1} = Y(t_{x_1, i-1})_{x_1} + \eta_Y (\vx_3 - \valpha)^\top.
\end{align*}
For convenience, we denote $\vy(i) = Y(t_{x_1, i})_{x_1}^\top$, and thus 
\begin{equation}
\label{eq:y_i_updates}
    \vy(i) = \vy(i-1) + \eta_Y (\vx_3 - \valpha(i-1))
\end{equation}
where $\vy(0) = \boldsymbol{0}$ and $\valpha(i-1) = \exp(\vy(i-1))/\boldsymbol{1}^\top \exp(\vy(i-1))$. Note that for a fixed $x_1$, $x_3$ is also fixed by our construction of the dataset. By Lemma 5 of \cite{tian2023scan}, we can obtain that 
\begin{align*}
    \vy(i) = (M-1)h^*(i) \vxi_{x_3},  
\end{align*}
where $\vxi_{x_3} = \frac{M}{M-1}(\vx_3 - \frac{1}{M}\boldsymbol{1})$ and $h^*(i)$ can be derived recursively as
\begin{align*}
    h^*(i) = h^*(i-1) + \frac{\eta_Y}{(M-1) + \exp(M h^*(i-1))} 
\end{align*}
with $h^*(0) = 0$. Combining Lemma 7 and 9 in \cite{tian2023scan}, we have 
\begin{align}
\label{eq:h_closed_form}
    h^*(i) \gtrsim \frac{1}{M} \ln(M \eta_Y i), \quad  \forall i \gtrsim \frac{\ln M}{\eta_Y}.
\end{align}
Note that the update of each row of $Y$ are independent, the training set has size $N$, and the batch size is 1, we have
\begin{align*}
    Y(t)_{x_1}^\top = (M-1)h^*\left(\lceil t/N \rceil \right) \vxi_{x_3}
\end{align*}
where the training data at time step $t$ is $(x_1, x_2, x_3)$. Combining \eqref{eq:h_closed_form}, we can obtain that 
\begin{align*}
    Y(t)_{x_1,x_3} \gtrsim (M-1)\cdot\frac{M}{M-1}(1-\frac{1}{M})\cdot \frac{1}{M} \ln \left(M \eta_Y \lceil t/N \rceil \right) \geq \ln\left(\frac{M \eta_Y t}{N}\right)
\end{align*}
and 
\begin{align*}
    Y(t)_{x_1,x} \lesssim (M-1)\cdot\frac{M}{M-1}(-\frac{1}{M})\cdot \frac{1}{M} \ln \left(M \eta_Y \lceil t/N \rceil \right) \leq -\frac{1}{M}\ln\left(\frac{M \eta_Y t}{N}\right), \quad \forall x \neq x_3.
\end{align*}
On the other hand, for any sequence $(x_1, x_2, x_3)$ in the test set, since the $x_1$-th row of $Y$ has never been updated, we have 
\begin{align*}
    Y(t)_{x_1,x} = Y(0)_{x_1,x} = 0, \quad \forall x \in [M]. 
\end{align*}
\end{proof}

\subsubsection{Proof of \Cref{thm:reversal_curse}}
\label{app:subsubsec_proof_reversal_curse}

\begin{proof}
    We first consider training sequence $(x_1, x_2, x_3)$ at time $t$. By \Cref{prop:NTP_closed_form}, we have
    \begin{align*}
        p_{\theta(t)}(x_3 |x_1, x_2) = \frac{\exp\left( Y(t)_{x_1, x_3}\right)}{\sum_{x' \in [M]}\exp\left(Y(t)_{x_1, x'}\right)}
    \end{align*}
    and by \Cref{lem:dynamics_Y}, we have 
    \begin{align*}
    Y(t)_{x_1, x_3} \geq  c \ln\left(\frac{M \eta_Y t}{N}\right),  \ \ \text{ and } \quad Y(t)_{x_1, x} \leq -  \frac{c}{M}\ln\left(\frac{M \eta_Y t}{N}\right) , \quad \forall x \neq x_3
    \end{align*}
    for some constant $c > 0$. Therefore, 
    \begin{align*}
         p_{\theta(t)}(x_3 |x_1, x_2) \geq&  \frac{\exp\left( c \ln\left(\frac{M \eta_Y t}{N}\right)\right)}{\exp\left( c \ln\left(\frac{M \eta_Y t}{N}\right)\right) + (M-1) \ln\left( -  \frac{c}{M}\ln\left(\frac{M \eta_Y t}{N}\right)\right)} \\ 
         \geq & \frac{\left(\frac{M \eta_Y t}{N}\right)^c}{ \left(\frac{M \eta_Y t}{N}\right)^c + (M-1)} 
         \\ 
         =& 1 - \frac{M-1}{ \left(\frac{M \eta_Y t}{N}\right)^c + (M-1)} 
         \\ \geq& 1 - \frac{M-1}{2\left(\frac{M \eta_Y t}{N}\right)^c} 
    \end{align*}
    where the last inequality holds since 
    \begin{align*}
        \ln t \gtrsim \ln (NM/\eta_Y) \implies t \geq \frac{N(M-1)^{1/c}}{M\eta_Y} \implies \left(\frac{M \eta_Y t}{N}\right)^c \geq M-1. 
    \end{align*}
    Finally, for any sequence $(x_1, x_2, x_3) \in \mathcal{D}_\test$, since $Y(t)_{x_1, x} = 0, \forall x \in [M]$ according to \Cref{lem:dynamics_Y}, we have
    \begin{align*}
         p_{\theta(t)}(x_3 |x_1, x_2) = \frac{\exp\left( Y(t)_{x_1, x_3}\right)}{\sum_{x' \in [M]}\exp\left(Y(t)_{x_1, x'}\right)} = \frac{\exp(0)}{M\cdot\exp(0)} = 1/M,
    \end{align*}
    which completes the proof.
\end{proof}

\subsection{Additional details and proof of \Cref{subsec:cot_and_ac}}
\label{app:subsec_proof_cot}

\paragraph{Datasets.} Let $N_{\train} > 0$, $N_{\test} > 0$ be two positive integers and let $N_{\total} = N_\train+N_\test$. Let $\ai, \bi, \ci  \in \mathcal{V}, \forall i \in [N_\total]$ be $3N_\total$ distinct tokens. Let $\dto, \ito \ \in \mathcal{V} = [M]$ be two additional different tokens that represent ``direct implication'' and ``indirect implication'' respectively. Specifically, we have $\ai \dto \bi$, $\bi \dto \ci$ and $\ai \ito \ci$ for all $i \in [N_\total]$. For notation convenience, we define the following two index sets 
\begin{align*}
\idx_\train = \{1, 2, \ldots, N_\train \}, \ \ \idx_{\test} = \{N_\train+1, \ldots, N_\total \}.    
\end{align*}
The training set $\dataset_\train$ consists of all $\ai \dto \bi$, $\bi \dto \ci$ and $\ai \ito \ci$ for $i \in \idx_\train$. In addition, $\dataset_\train$ contains $\ai \dto \bi$ and $\bi \dto \ci$ for $i \in \idx_{\test}$.
For convenience, we let $N = |\mathcal{D}_\train|$ to be the size of the training set. 
The test set $\dataset_\test$ consists of $\ai \ito \ci$  for $i \in \idx_{\test}$. Under our construction of the dataset, the LLM will learn the relationship between $\ai$, $\bi$ and $\ci$ for $i \in \idx_\train$ in both direct and indirect implication, and learn the relationship between $\ai$, $\bi$ and $\ci$ for $i \in \idx_\test$ only in direct implication and will be tested for indirect implication. 

Similar to the reversal curse in \Cref{subsec:main_result_reversal_curse}, we aim to prove through the training dynamics of one-layer transformers that the test probability remains negligible during training. In particular, we are interested in 
\begin{align*}
    p_\theta(x_3=\ci | x_1 = \ai, x_2 = \ito),  \quad  i \in \idx_\test. 
\end{align*}

We also use $p_\theta(\bi|\ai\dto)$, $p_\theta(\ci|\bi\dto)$ and $p_\theta(\ci|\ai\ito)$ to more compactly represent $p_\theta(x_3 = \bi | x_1 = \ai, x_2 = \dto)$, $p_\theta(x_3 = \ci | x_1 = \bi, x_2 = \dto)$ and $p_\theta(x_3 = \ci | x_1 = \ai, x_2 = \ito)$, respectively.

The following theorem shows the importance of the chain-of-thought method:

\begin{theorem}[Importance of chain-of-thought, formal statement of \Cref{prop:cot_informal}] 
\label{prop:cot}
    Assume we run SGD with batch size 1, and assume $M \gg 100$ and $\frac{1}{M^{0.99}} \ll \eta_Y < 1$. 
Let $t \gtrsim  \frac{N \ln M}{\eta_Y}$ denote the time step which also satisfies $\ln t \gtrsim \ln (NM/\eta_Y)$.
For any test index $i \in \idx_\test$, we have
\begin{align*}
    p_{\theta(t)}(\bi | \ai \dto) \geq 1 - \frac{M-1}{2\left(\frac{M \eta_Y t}{N}\right)^c},  \qquad p_{\theta(t)}(\ci | \bi \dto) \geq 1 - \frac{M-1}{2\left(\frac{M \eta_Y t}{N}\right)^c} 
\end{align*}
for some constant $c > 0$ and
\begin{align*}
    p_{\theta(t)}(\ci | \ai \ito) \leq \frac{1}{M}.
\end{align*}
\end{theorem}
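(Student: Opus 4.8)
The plan is to follow the proof of \Cref{thm:reversal_curse} almost verbatim, once one checks that the COT training set induces the same single-target dynamics on the rows of $Y(t)$ that govern the quantities in the statement.

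First I would establish an analogue of \Cref{lem:dynamics_Y} for the COT dataset. As in the proof of \Cref{lem:dynamics_Y}, for a three-token sequence the contextual block has length one, so $\text{LN}(X^\top\vb_2)=\vx_1$, and \Cref{lem:gradient_Y_Z} gives $\dot Y=\eta_Y\,\vx_1(\vx_3-\valpha)^\top$, which modifies only row $x_1$ and whose $\valpha$ depends only on that row; hence the rows of $Y(t)$ evolve independently and the relation token $x_2$ never enters the update. Now fix $i\in\idx_\test$. By the dataset construction, the only training sequence whose first token is $\ai$ is $\ai\dto\bi$ and the only one whose first token is $\bi$ is $\bi\dto\ci$ (the tokens $\ai,\bi,\ci$ are all distinct across $i$, the $\ci$'s never occur in first position, and $\dto,\ito$ are relation tokens). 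Thus on each of these rows the target is a fixed token, the scalar recursion \eqref{eq:y_i_updates} applies with a fixed $\vxi_{x_3}$, and Lemmas~5, 7 and 9 of \cite{tian2023scan} transfer exactly as in \Cref{lem:dynamics_Y}, giving for $t\gtrsim N\ln M/\eta_Y$
\[
Y(t)_{\ai,\bi}\gtrsim\ln\!\big(M\eta_Y t/N\big),\qquad Y(t)_{\ai,x}\lesssim-\tfrac1M\ln\!\big(M\eta_Y t/N\big)\ \ \forall x\neq\bi,
\]
and the symmetric statement for the $\bi$-row with target $\ci$. I would also note explicitly that for $i\in\idx_\train$ the row $\ai$ is driven by \emph{two} targets ($\bi$ from $\ai\dto\bi$ and $\ci$ from $\ai\ito\ci$), so \Cref{lem:dynamics_Y} does not literally apply to it; but those rows play no role in the statement, which only concerns $i\in\idx_\test$.

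Second, for the two learned ``direct'' steps I would combine the row bounds with \Cref{prop:NTP_closed_form}: $p_{\theta(t)}(\bi|\ai\dto)=\exp(Y(t)_{\ai,\bi})/\sum_{x'\in[M]}\exp(Y(t)_{\ai,x'})$, and the identical softmax-concentration computation already done in \Cref{thm:reversal_curse} gives $p_{\theta(t)}(\bi|\ai\dto)\ge 1-(M-1)/\big(2(M\eta_Y t/N)^c\big)$, and likewise for $p_{\theta(t)}(\ci|\bi\dto)$ via the $\bi$-row. Third, for the ``indirect'' claim --- the essential point --- I would again invoke \Cref{prop:NTP_closed_form}: the next-token distribution for input $(\ai,\ito)$ is controlled by the \emph{same} row $\ai$ of $Y(t)$ as the input $(\ai,\dto)$, since the prediction ignores the relation token. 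Hence $p_{\theta(t)}(\ci|\ai\ito)=\exp(Y(t)_{\ai,\ci})/\sum_{x'\in[M]}\exp(Y(t)_{\ai,x'})$, and because $\ci\neq\bi$ we have $Y(t)_{\ai,\ci}\le 0$ while the denominator is at least $\exp(Y(t)_{\ai,\bi})\gtrsim(M\eta_Y t/N)^c\ge M$ once $\ln t\gtrsim\ln(NM/\eta_Y)$, which forces $p_{\theta(t)}(\ci|\ai\ito)\le 1/M$.

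The only step requiring genuine care is the first one, and it is bookkeeping rather than analysis: one must verify that on the rows $\ai,\bi$ with $i\in\idx_\test$ the target token is constant throughout training, so that the single-target recursion of \cite{tian2023scan} applies unchanged, and be explicit that this fails for $\ai$-rows with $i\in\idx_\train$ without affecting the conclusion. No estimate beyond those in the proof of \Cref{thm:reversal_curse} is needed; conceptually, the ``intransitivity'' of $Y(t)$ stressed after the theorem is just the fact that \Cref{prop:NTP_closed_form} makes the prediction a function of the first token alone, so training $\ai$ to emit $\bi$ under $\dto$ simultaneously suppresses the probability of $\ci$ under every relation token.
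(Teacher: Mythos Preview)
Your proposal is correct and follows essentially the same route as the paper: apply \Cref{prop:NTP_closed_form} together with the single-target row dynamics of \Cref{lem:dynamics_Y} to the rows $\ai$ and $\bi$ for $i\in\idx_\test$, and reuse the softmax-concentration computation from \Cref{thm:reversal_curse}. The only (minor) divergence is in the bound $p_{\theta(t)}(\ci|\ai\ito)\le 1/M$: the paper observes that, by the closed form in the proof of \Cref{lem:dynamics_Y}, all entries $Y(t)_{\ai,x}$ with $x\neq\bi$ are \emph{equal}, which immediately yields $p_{\theta(t)}(\ci|\ai\ito)\le 1/M$; you instead bound the numerator by $1$ and the denominator by $\exp(Y(t)_{\ai,\bi})\ge M$ under the time assumption. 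Both arguments are valid, and your extra remark that the $\ai$-rows with $i\in\idx_\train$ have two targets (and hence fall outside \Cref{lem:dynamics_Y}) is a clarification the paper omits.
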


\begin{proof}
    Recall that by \Cref{prop:NTP_closed_form}, we have
    \begin{align*}
        p_{\theta(t)}(x_3 |x_1, x_2) = \frac{\exp\left( Y(t)_{x_1, x_3}\right)}{\sum_{x' \in [M]}\exp\left(Y(t)_{x_1, x'}\right)}
    \end{align*}
    and by \Cref{lem:dynamics_Y}, we have 
    \begin{align*}
    Y(t)_{\ai, \bi} \geq  c \ln\left(\frac{M \eta_Y t}{N}\right),  \ \ \text{ and } \quad Y(t)_{\ai, x} \leq -  \frac{c}{M}\ln\left(\frac{M \eta_Y t}{N}\right) , \quad \forall x \neq \bi
    \end{align*}
    for some constant $c > 0$. Therefore, using the same proof as \Cref{thm:reversal_curse}, we have 
    \begin{align*}
         p_{\theta(t)}(\bi | \ai\dto) \geq  1 - \frac{M-1}{2\left(\frac{M \eta_Y t}{N}\right)^c}. 
    \end{align*}
   Additionally, according to the proof of \Cref{lem:dynamics_Y}, $Y(t)_{\ai,x}$ has the same value across all $x \neq \bi$, which implies
   \begin{align*}
       p_{\theta(t)}(\ci | \ai \ito) \leq \frac{1}{M}.
   \end{align*}
    Similarly, applying \Cref{lem:dynamics_Y} to $Y(t)_{\bi}$, we can obtain that 
    \begin{align*}
        p_{\theta(t)}(\ci | \bi\dto) \geq  1 - \frac{M-1}{2\left(\frac{M \eta_Y t}{N}\right)^c}. 
    \end{align*}
\end{proof}

\subsection{Analysis of the reversal curse for the four-token sequences}
\label{app:subsec_reversal_four_token}

In this section, we analyze the reversal curse where data points are four-token sentences ``$\tokenA \rone \rtwo \tokenB$'' or ``$\tokenB \rone \rtwo \tokenA$''. For each sentence, $\tokenA$ and $\tokenB$ are two distinct tokens that represent two entities, and $\rone$, $\rtwo$ are two special tokens jointly representing a relationship that is inverse to itself (e.g., $\rone\rtwo$ represents ``is the friend of'', then $\tokenA \rone \rtwo \tokenB$ means ``$\tokenA$ is the friend of $\tokenB$'' and  $\tokenB \rone \rtwo \tokenA$ means ``$\tokenB$ is the friend of $\tokenA$'' ).

\paragraph{Datasets.} Let $N_{\train} > 0$ and $N_{\test} > 0$ and denote $N_{\total} = N_\train+N_\test$. Let $\ai, \bi  \in \mathcal{V}, \forall i \in [N_\total]$ be $K \triangleq 2N_\total$ distinct tokens representing distinct entities. WLOG, we assume $\ai, \bi \in [K], \forall i \in [N_\total]$. Let $\rone, \rtwo \in \mathcal{V}$ be two additional different tokens that jointly represent a relationship that is inverse to itself.  Specifically, we have $\ai \rone\rtwo \bi$ and $\bi \rone\rtwo \ai$ for all $i \in [N_\total]$. For notation convenience, we define the following two index sets 
\begin{align*}
 \idx_\train = [N_\train], \qquad \idx_{\test} = [N_\total] \backslash \idx_\train.
\end{align*} 
The training set $\dataset_\train$ consists of all $\ai \rone \rtwo \bi$ and $\bi \rone \rtwo \ai$ for $i \in \idx_\train$. In addition, $\dataset_\train$ contains $\ai \rone \rtwo  \bi$ for $i \in \idx_{\test}$. For convenience, we let $N = |\mathcal{D}_\train|$ to be the size of the training set. The test set $\dataset_\test$ consists of $\bi \rone \rtwo \ai$ for $i \in \idx_{\test}$. Under our construction of the dataset, the LLM will learn the relationship between $\ai$ and $\bi$ for $i \in \idx_\train$ in both directions to deduce that the relationship ``$\rone\rtwo$'' is reverse to itself, and learn the relationship between $\ai$ and $\bi$ for $i \in \idx_\test$ in one direction and will be tested for the other.
WLOG, we assume for each training sequence $(x_1, \rone, \rtwo, x_4) \in \dataset_\train$, $x_4 \in [N]$.

Now we assume that the learning rate $\eta_Y \gg \eta_Z$ and therefore can treat $X^\top \vb_T$ as fixed when analyzing the dynamics of $Y$. For any sequence $(x_1, x_2 = \rone, x_3 = \rtwo, x_4 = n)$ in training or test dataset with $T = 3$, we define $\vf_n = \text{LN}(X^\top\vb_T)$. Then the gradient of $Y$ in \eqref{eq:gradient_Y_Z} becomes 
\begin{align*}
   \dot Y = \eta_Y \vf_n(\vx_{T+1} - \valpha)^\top = \eta_Y \vf_n(\ve_n - \valpha)^\top.
\end{align*}

Note that for three-token sequences, $\vf_n$ is a one-hot vector, and thus $\dot Y$ has only one non-zero row, and each row of $Y$ can be analyzed independently. For the four-token sequences, we use the same reparameterization strategy as in \cite{tian2023scan}, where we denote $W = [\vw_1, \vw_2, \ldots, \vw_K]^\top \triangleq F^\top Y \in \mathbb{R}^{K\times M}$ with $F = [\vf_1, \ldots, \vf_K] \in \mathbb{R}^{M \times K}$. 

Note that the parameter $W$ can be viewed as a combination of $Y$ and $Z$ where $Z$ is fixed. The following lemma shows the dynamics of $W$, assuming we are performing gradient updates on $W$ instead of $Y$.

\begin{lemma}[Dynamics of $W$]
\label{lem:dynamics_W}
Assume we perform gradient updates directly on $W$ instead of $Y$ with learning rate $\eta_Y$ and batch size 1, and assume $M \gg 100$ and $\frac{1}{M^{0.99}} \ll \eta_Y < 1$. Let $t \gtrsim \frac{N \ln M}{\eta_Y}$ and let $W(t)_i$ denote the $i$-th row of $W(t)$ and $W(t)_{ij}$ denote the $(i,j)$-th entry of $W(t)$. Then for training sequence $(x_1, \rone, \rtwo, x_4) \in \dataset_{\train}$ at time $t$, we have
\begin{align*}
    W(t)_{x_4, x_4} \gtrsim \ln\left(M \eta_Y t/N\right),  \ \ \text{ and } \quad W(t)_{x_4, x} \lesssim -\ln\left(M \eta_Y t/N\right)/M , \quad \forall x \neq x_4,
\end{align*}
and for any test sequence $(x_1, \rone, \rtwo, x_4) \in \dataset_{\test}$, we have 
   $W(t)_{x_4, x} = 0, \forall x \in [M]$.
\end{lemma}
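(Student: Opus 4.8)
The plan is to reduce this lemma to the proof of \Cref{lem:dynamics_Y}, the point being that the parameter $W$, when updated directly, obeys exactly the decoupled row dynamics that $Y$ obeyed in the three-token case. First I would re-express the objective purely in terms of $W$. Fix any sequence $(x_1,\rone,\rtwo,x_4)$ in the training or test set, with label $n=x_4$. Since the only contextual tokens are $x_1,\rone$ and the query is $\rtwo$, the vector $X^\top\vb_T$ — hence $\vf_n=\text{LN}(X^\top\vb_T)$, which is by construction the $n$-th column of $F$ — depends only on $x_1$ once $Z$ is frozen (the regime $\eta_Y\gg\eta_Z$). Therefore the logit of an arbitrary token $x$ on this sequence is $\vx^\top Y^\top\vf_n=\ve_n^\top F^\top Y\vx=W_{n,x}$, so $p_\theta(x\mid x_1,\rone,\rtwo)=\exp(W_{n,x})/\sum_{x'}\exp(W_{n,x'})$ and the training objective reads $J=\tfrac1N\sum_n\bigl(W_{n,n}-\log\sum_x\exp(W_{n,x})\bigr)$, the sum being over the $N$ labels of the training sequences. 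This is the $W$-analogue of \Cref{prop:NTP_closed_form}, and since $W(0)=F(0)^\top Y(0)=\zero$ the reparameterized parameter starts at zero as well.

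Next I would compute the SGD update. Differentiating $J$ on a batch consisting of a single training sequence with label $n$ gives $\dot W=\eta_Y\,\ve_n(\ve_n-\valpha)^\top$ with $\valpha=\exp(W_n^\top)/\boldsymbol{1}^\top\exp(W_n^\top)$, which is the exact analogue of $\dot Y=\eta_Y\,\vx_1(\vx_3-\valpha)^\top$ in the proof of \Cref{lem:dynamics_Y}, now with both ``context'' and ``target'' equal to $\ve_n$ (so it is the diagonal $W_{n,n}$ that grows). Because $\ve_n$ is one-hot, only row $n$ of $W$ moves, the rows evolve independently, and — this is where the dataset construction enters — since the entity tokens are pairwise distinct and, by the construction of $\dataset_\train$ and $\dataset_\test$, each entity token serves as the label $x_4$ of at most one sequence in $\dataset_\train\cup\dataset_\test$, row $n$ is updated precisely on the steps whose batch is that unique training sequence. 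Over $t$ steps it therefore receives $\lceil t/N\rceil$ updates of the form $\vy(i)=\vy(i-1)+\eta_Y(\ve_n-\exp(\vy(i-1))/\boldsymbol{1}^\top\exp(\vy(i-1)))$, $\vy(0)=\zero$, which is verbatim the recursion \eqref{eq:y_i_updates} handled by Lemmas 5, 7, and 9 of \cite{tian2023scan}. Transcribing their closed-form bound \eqref{eq:h_closed_form} then gives $W(t)_{x_4,x_4}\gtrsim\ln(M\eta_Y t/N)$ and $W(t)_{x_4,x}\lesssim-\ln(M\eta_Y t/N)/M$ for $x\neq x_4$ once $t\gtrsim N\ln M/\eta_Y$, under the stated conditions $M\gg100$ and $1/M^{0.99}\ll\eta_Y<1$, exactly as in \Cref{lem:dynamics_Y}.

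For a test sequence $\bi\rone\rtwo\ai$ with $i\in\idx_\test$ the label is $\ai$, and by construction the only sequence with that pair put into $\dataset_\train$ is the forward one $\ai\rone\rtwo\bi$, whose label is $\bi$, not $\ai$; since all $2N_\total$ entity tokens are distinct, no training sequence ever carries the label $\ai$. Hence row $\ai$ of $W$ is never touched and stays at its initial value, so $W(t)_{x_4,x}=0$ for all $x$.

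I expect the main obstacle to be the first step — setting up the bookkeeping so that the $Y$-view (feature vectors indexed by the input entity) and the $W$-view (rows indexed by the label token) line up, and checking that ``updating $W$ directly'' is a well-posed dynamics that genuinely decouples across rows. Once the identity $\vf_n^\top Y=W_n$ and the pairwise distinctness of the labels are nailed down, the remaining steps are a purely notational rerun of the proof of \Cref{lem:dynamics_Y}; in particular, nothing about the attention matrix $Z$ or Theorem 2 of \cite{tian2023scan} is needed \emph{here}, those entering only afterward to argue that this idealized $W$-dynamics approximates the true dynamics when $\eta_Y\gg\eta_Z$ and the attention has concentrated on the distinct token.
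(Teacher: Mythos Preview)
Your proposal is correct and follows essentially the same approach as the paper: the paper's proof also reduces to \Cref{lem:dynamics_Y} by observing that the row-$x_4$ update of $W$ is $\dot\vw_{x_4}=\eta_Y(\vx_4-\valpha_{x_4})$ (which it obtains by citing Lemma~3 of \cite{tian2023scan} rather than deriving it from the objective as you do), and then invoking the proof of \Cref{lem:dynamics_Y} verbatim. Your additional bookkeeping on label uniqueness and the test-row argument is fine and slightly more explicit than the paper's version.
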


\begin{proof}
    According to Lemma 3 of \cite{tian2023scan}, for training sequence $(x_1, \rone, \rtwo, x_4) \in \dataset_{\train}$ at time $t$, only the $x_4$-th row of $W$ will be updated and the gradient
    \begin{align*}
        \dot \vw_{x_4}(t) = \eta_Y(\vx_4 - \valpha_{x_4}(t))
    \end{align*}
    where $\valpha(t) = \exp(\vw_{x_4}(t))/(\boldsymbol{1}^\top\exp(\vw_{x_4}(t)))$. Therefore, the dynamics of $W$ is nearly identical to the dynamics of $Y$ in \Cref{lem:dynamics_Y}, and we can use the proof of \Cref{lem:dynamics_Y} to conclude the results.
\end{proof}

With the dynamics of $W$, we can obtain the following result:

\begin{proposition}[Reversal curse for the four-token sequences] 
\label{thm:reversal_curse_four_token}
   Assume we perform gradient updates directly on $W$ instead of $Y$ with learning rate $\eta_Y$ and batch size 1, and assume $M \gg 100$ and $\frac{1}{M^{0.99}} \ll \eta_Y < 1$. 
Let $t \gtrsim  \frac{N \ln M}{\eta_Y}$ denote the time step which also satisfies $\ln t \gtrsim \ln (NM/\eta_Y)$.
For training sequence $(x_1, \rone, \rtwo, x_4) \in \dataset_{\train}$ at time $t$, we have
\begin{align*}
    p_{\theta(t)}(x_4 | x_1, \rone, \rtwo) \geq 1 -  \frac{M-1}{(M \eta_Y t / N)^{c}} \to 1, \quad \text{as} \ \ t \to \infty
\end{align*}
for some constant $c > 0$, and for any test sequence $(x_1, \rone, \rtwo, x_4) \in \dataset_{\test}$ that is not included in the training set $\mathcal{D}_\train$, we have 
\begin{align*}
    p_{\theta(t)}(x_4 | x_1, \rone, \rtwo) \leq 1/M.
\end{align*}
\end{proposition}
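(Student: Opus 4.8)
The plan is to follow the proof of \Cref{thm:reversal_curse} almost verbatim, with the role of $Y(t)$ (and of \Cref{prop:NTP_closed_form}, \Cref{lem:dynamics_Y}) taken over by the reparameterized matrix $W(t) = F^\top Y(t)$ (and by \Cref{lem:dynamics_W}).

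First I would record the four-token analog of \Cref{prop:NTP_closed_form}. For an input sequence $(x_1, \rone, \rtwo)$, write $\vf_{x_4} = \text{LN}(X^\top \vb_T)$ for its associated feature vector, where $x_4$ is the entity paired with $x_1$ in $\dataset_\train \cup \dataset_\test$ (so $\vf_{x_4}$ is a function of the input alone). Since $W = F^\top Y$ with $F = [\vf_1,\dots,\vf_K]$, we have $\vw_{x_4}(t) = Y(t)^\top \vf_{x_4}$, hence the logit of a token $x$ is $\vx^\top Y(t)^\top \vf_{x_4} = \vx^\top \vw_{x_4}(t) = W(t)_{x_4, x}$, giving
\[
p_{\theta(t)}(x \mid x_1, \rone, \rtwo) = \frac{\exp\!\big(W(t)_{x_4, x}\big)}{\sum_{x' \in [M]} \exp\!\big(W(t)_{x_4, x'}\big)}.
\]
Thus the entire next-token distribution for this input is governed by the single row $W(t)_{x_4, \cdot}$, which is precisely what \Cref{lem:dynamics_W} controls. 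No extra attention-score analysis is needed at this stage: the proposition assumes gradient updates on $W$ directly and treats $X^\top\vb_T$ as fixed via $\eta_Y \gg \eta_Z$; the fact (Theorem 2 of \cite{tian2023scan}) that the query $\rtwo$ concentrates its attention on the distinct entity token $x_1$ rather than the common relationship token $\rone$ is what makes the $\vf$'s approximately one-hot/orthonormal and thereby underlies \Cref{lem:dynamics_W}, which here we simply invoke.

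Next, for a training sequence $(x_1, \rone, \rtwo, x_4) \in \dataset_\train$, \Cref{lem:dynamics_W} yields a constant $c > 0$ with $W(t)_{x_4, x_4} \geq c\ln(M\eta_Y t/N)$ and $W(t)_{x_4, x} \leq -\tfrac{c}{M}\ln(M\eta_Y t/N) \leq 0$ for all $x \neq x_4$. Substituting into the display and bounding each of the $M-1$ off-target exponentials by $1$,
\[
p_{\theta(t)}(x_4 \mid x_1, \rone, \rtwo) \;\geq\; \frac{(M\eta_Y t/N)^c}{(M\eta_Y t/N)^c + (M-1)} \;=\; 1 - \frac{M-1}{(M\eta_Y t/N)^c + (M-1)} \;\geq\; 1 - \frac{M-1}{(M\eta_Y t/N)^c},
\]
which $\to 1$ as $t \to \infty$. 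For a test sequence $(x_1, \rone, \rtwo, x_4) \in \dataset_\test$, \Cref{lem:dynamics_W} gives $W(t)_{x_4, x} = 0$ for every $x$, so the next-token distribution is uniform on $[M]$ and $p_{\theta(t)}(x_4 \mid x_1, \rone, \rtwo) = 1/M$, which is $\le 1/M$ as claimed.

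The hypotheses $t \gtrsim \tfrac{N\ln M}{\eta_Y}$ and $\ln t \gtrsim \ln(NM/\eta_Y)$ are exactly those needed to invoke \Cref{lem:dynamics_W} (after dividing the update count by the dataset size $N$ to enter the logarithmic-growth regime) and, as in \Cref{thm:reversal_curse}, to ensure $(M\eta_Y t/N)^c$ dominates $M-1$. I do not anticipate a genuine obstacle: the only non-routine ingredient is the reparameterization identity $W = F^\top Y$ together with the row-decoupling of $W$ already established in \Cref{lem:dynamics_W}; once those are in hand the rest is the same elementary softmax estimate as in the three-token case. The step that would become delicate in a fully self-contained treatment updating $Y$ directly — rather than $W$ — is controlling the error $F^\top \vf_{x_4} \approx \ve_{x_4}$ and the slow drift of $F$ under the $Z$-dynamics; the proposition deliberately sidesteps this via its update assumption.
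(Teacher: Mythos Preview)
Your proposal is correct and follows essentially the same approach as the paper: derive the four-token analog of \Cref{prop:NTP_closed_form} via the reparameterization $W=F^\top Y$ so that the next-token distribution is governed by the row $W(t)_{x_4,\cdot}$, then invoke \Cref{lem:dynamics_W} and repeat the softmax estimate from the proof of \Cref{thm:reversal_curse}. The paper's proof is terser (it simply observes the formula and says ``use the proof of \Cref{thm:reversal_curse}''), but the content is identical.
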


\begin{proof}
    For any sequence $(x_1, x_2=\rone, x_3=\rtwo, x_4)$ where $T = 3$, 
    \begin{align*}
         p_{\theta(t)}(x | x_1, \rone, \rtwo) =& \frac{\exp\left(\vx^\top Y(t)^\top \text{LN}(X^\top \vb_T)\right)}{\sum_{x' \in [M]}\exp\left({\vx'}^\top Y(t)^\top \text{LN}(X^\top \vb_T)\right)}
         \\ =& \frac{\exp\left(\vx^\top Y(t)^\top \vf_{x_4}\right)}{\sum_{x' \in [M]}\exp\left({\vx'}^\top Y(t)^\top \vf_{x_4}\right)}
         \\ =& \frac{\exp\left(\vx^\top \vw_{x_4}(t)\right)}{\sum_{x' \in [M]}\exp\left({\vx'}^\top \vw_{x_4}(t)\right)}
          \\ =& \frac{\exp\left(W(t)_{x_4,x}\right)}{\sum_{x' \in [M]}\exp\left( W(t)_{x_4,x'}\right)}.
    \end{align*}
    The above next token probability formulation is almost identical to \Cref{prop:NTP_closed_form} after replacing $Y$ with $W$. Combining the dynamics of $W$ as shown in \Cref{lem:dynamics_W}, we can use the proof of \Cref{thm:reversal_curse} to conclude the result.
\end{proof}

Finally, we discuss the role of $Z$ in the four-token-sequence settings. Note that \Cref{thm:reversal_curse_four_token} assumes gradient update on $W$ instead of $Y$. While we are not able to perform gradient updates on $W$ directly, it is equivalent to modifying the gradient of $Y$ to be 
\begin{align*}
    \dot Y = \eta_Y(\vf_n - FE'\ve_n)(\ve_n-\valpha_n)^\top
\end{align*}
according to Lemma 3 of \cite{tian2023scan}, where the next token $x_{T+1}=n$, $E' = (I+E)^{-1} - I$, $E = F^\top F - I$, and $F = [\vf_1, \ldots, \vf_N] \in \mathbb{R}^{M \times N}$ which only contains the next token that appears in the training set. Compared to the original gradient of $Y$
\begin{align*}
    \dot Y = \eta_Y\vf_n(\ve_n-\valpha_n)^\top,
\end{align*}
one can obtain that the modification of the gradient of $Y$ is small if $\lambda_1(E)$ is small.

Note that $E_{ii} = \|\vf_i\|_2^2 - 1 = 0$ and $E_{ij} = \vf_i^\top \vf_j$. Also, for any sequence $(x_1=n', x_2 = \rone, x_3 = \rtwo, x_4 = n)$ in training set, 
\begin{align*}
    \vf_n = \text{LN}(X^\top\vb_T) = \frac{b_{13} \ve_{n'} + b_{23} \ve_{\rone}}{\sqrt{b_{13}^2 + b_{23}^2}}, 
\end{align*}
where 
\begin{align*}
    b_{13} = \exp(Z_{\rtwo, n'}),  b_{13} = \exp(Z_{\rtwo, \rone}).
\end{align*}
Note that $\rone$ is a common token that appears in each training sentence, and $n'$ is a distinct token that only appears in one training sentence as a contextual token. By Theorem 2 of \cite{tian2023scan}, under certain technical assumptions, $\dot Z_{\rtwo,n'} > 0$ and $\dot Z_{\rone,n} < 0$ and one can expect $Z_{\rtwo,n'} - Z_{\rone,n}$ to be sufficiently large after sufficient time of training. Therefore, 
\begin{align*}
    \vf_n = \Tilde{b}_{13}\ve_{n'} + \Tilde{b}_{23}\ve_{\rone}
\end{align*}
with $\Tilde{b}_{23}$ close to 0. Consider a simple case where for each $n \in [N]$, $\vf_n = \sqrt{1-c^2}\ve_{n'} + c\ve_{\rone}$ for $c$ sufficiently small. Then 
\begin{align*}
    E_{ij} = \vf_i^\top\vf_j = (\sqrt{1-c^2}\ve_{i'} + c\ve_{\rone})^\top (\sqrt{1-c^2}\ve_{j'} + c\ve_{\rone}) = c^2.
\end{align*} 
Therefore, one can calculate that $\lambda_1(E) = c^2(N-1)$. When $c \ll \frac{1}{\sqrt{N}}$, we have $\lambda_1(E) \ll 1$, and thus the gradient update of $W$ and gradient update of $Y$ are almost the same.

\section{Experiments for chain-of-thought}
\label{subsec:exp_cot}

In this section, we conduct experiments for COT on multi-layer transformers to validate theoretical results in \Cref{subsec:cot_and_ac}.

\begin{figure}[ht]
  \centering
  \includegraphics[width=13cm]{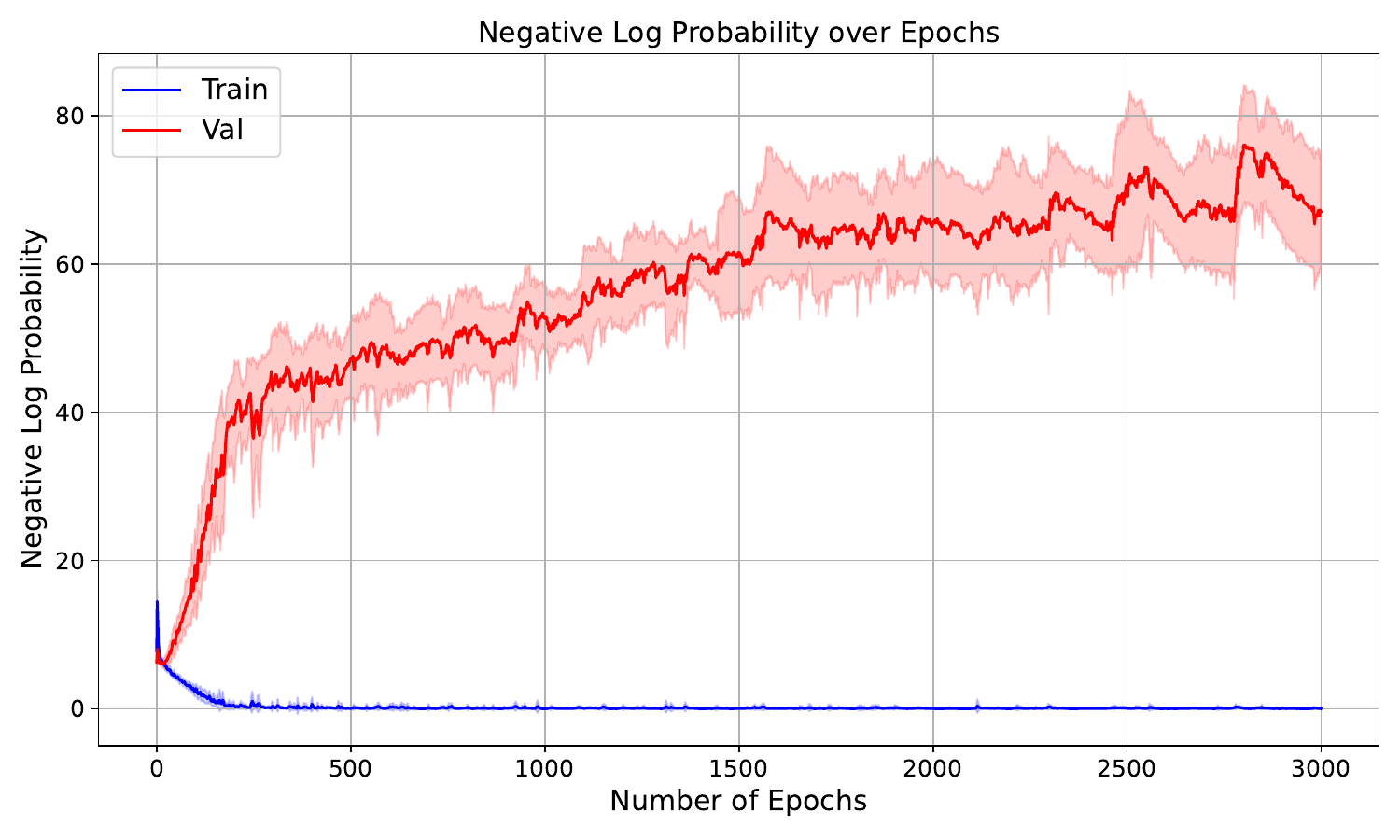}
  \caption{ Experiment results of COT under default configuration (see \Cref{app:model_data_configs_table}). The curves represent the (average) negative log probability of the model predicting the next token to be: (1) $\bi$ given the input ``$\ai \dto$'', (2) $\ci$ given the input ``$\bi \dto$'', or (3) $\ci$ given the input ``$\ai \ito$''. Similar to the reversal curse experiment, while the sentences in the training set can be learned nearly perfectly, the model is not able to predict the correct next token in the validation set better than a uniformly random guess.  Both curves are averaged over 10 random seeds.} 
  \label{fig:cot_main}
\end{figure}

\begin{figure}[ht]
  \centering
  \includegraphics[width=12cm]{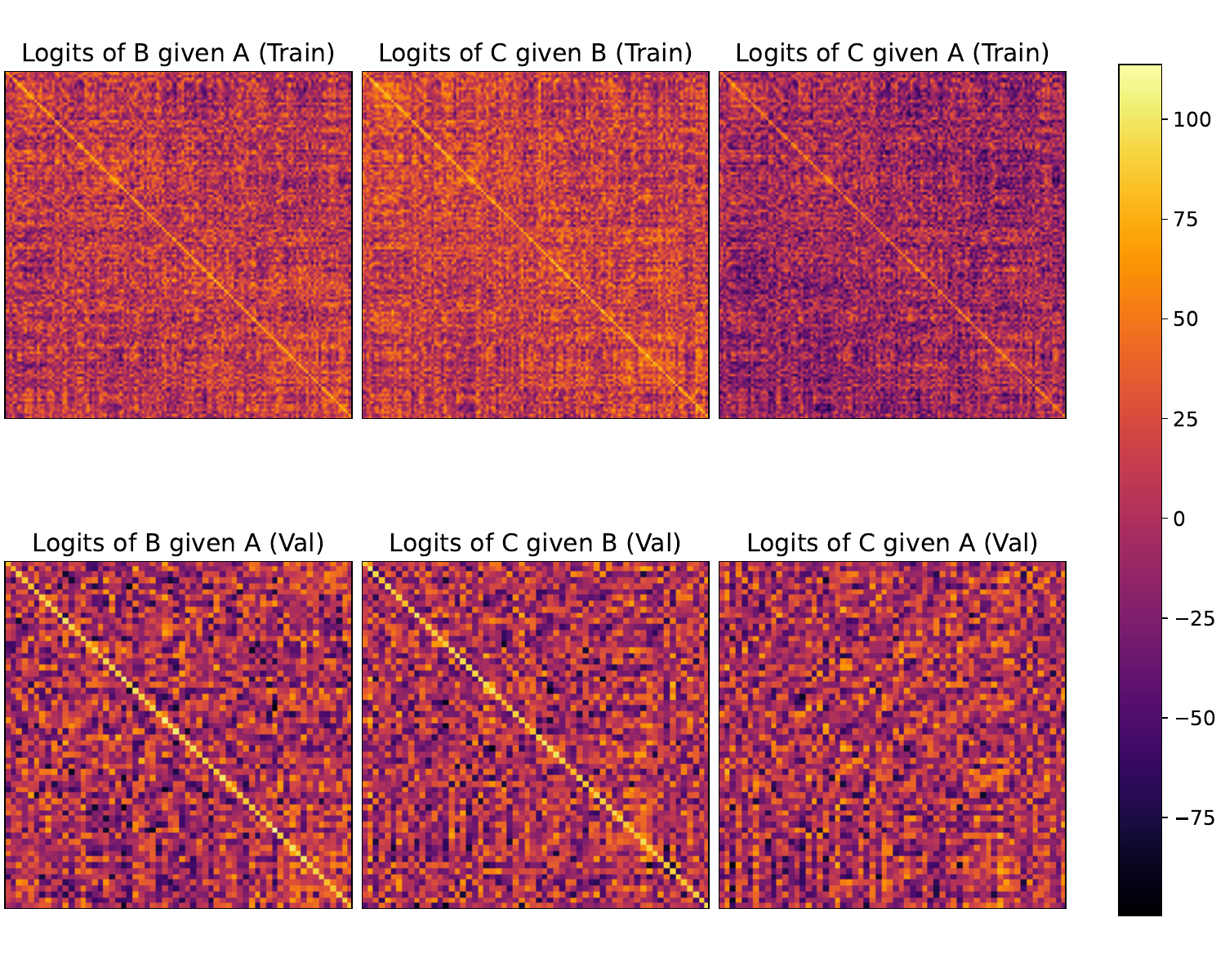}
  \caption{ Visualization of the weights (logits) of the model with default configurations trained after 3000 epochs for COT experiment. The matrices are similar to \Cref{fig:reverse_logits}. The row tokens for the top matrices are $\ai$, $\bi$, $\ai$ and column tokens are $\bi$, $\ci$, $\ci$ for training triples respectively. Similarly, the bottom matrices correspond to validation triples. For validation triples ($\ai, \bi, \ci$), the weights from $\ai$ to $\ci$ get hardly trained as indicated by the diagonals of the last matrix. } 
  \label{fig:cot_logits}
\end{figure}

\paragraph{Dataset construction.} Similar to \Cref{sec:exp}, we randomly sample three disjoint sets of entities $\mathcal{A}, \mathcal{B}, \mathcal{C} \subset \mathcal{V}$, and reverse two additional tokens for $\dto$ and $\ito$, respectively. Next, we specify a bijection from $\mathcal{A}$ to $\mathcal{B}$, and a bijection from $\mathcal{B}$ to $\mathcal{C}$ randomly.
For each $\ai \in \mathcal{A}$ and its corresponding $\bi \in \mathcal{B}$ and $\ci \in \mathcal{C}$, we can obtain a triple of sequences $(\ai \dto \bi, \bi \dto \ci, \ai \ito \ci)$, and split the set of all triples into training triples and validation triples. All three sequences of a training triple will be added to the training set, while for a validation triple, we add $\ai \dto \bi$ and $\bi \dto \ci$ to the training set and add $\ai \ito \ci$ to the validation set. Therefore, the model will learn both direct and indirect implications for the training triples and only learn the direct implications for each validation triple while being tested on the indirect implication.

\paragraph{Results.} \Cref{fig:cot_main} shows the experiment results for COT using the same model architecture and configurations as in \Cref{fig:reverse_main} (the training set size is 540, and the validation set size is 60 resulting from 140 training triples and 60 validation triples), which is consistent with \Cref{prop:cot}.  One can refer to \Cref{app:subsec_exp_cot} for additional experiments with various model configurations and vocabulary sizes. We also empirically validate the intransitivity of model weights (i.e., logits) for multi-layer transformers in \Cref{fig:cot_logits}, which shows that for a validation triple $(\ai,\bi,\ci)$ of which only the direct implication ``$\ai \dto \bi$'' and ``$\bi \dto \ci$'' appears in the training set, although the weights from $\ai$ to $\bi$ and from $\bi$ to $\ci$ are trained large as indicated by the diagonals of the first two bottom matrices, the weights from $\ai$ to $\ci$ gets hardly trained as indicated by the diagonals of the last matrix. We also emphasize that another reason that COT is necessary is that all tokens $\ai$, $\bi$, and $\ci$ are different tokens with randomly initialized embedding and thus irrelevant. When these tokens are relevant and show specific patterns, the validation loss can also get better. See more details in \Cref{app:exp_COT_relevant}.

\section{Additional Experimental Results}
\label{app:sec_add_exp}

In this section, we show additional experimental results for \Cref{sec:exp}.

\subsection{Details of model architectures and  hyperparameters}
\label{app:subsec_exp_details}

For both the reversal curse and COT experiments, we used the GPT2 model architecture~\cite{wolf-etal-2020-transformers}\footnote{Apache License 2.0} and trained the model with the AdamW optimizer for 3000 epochs of batch size 64. See \Cref{app:training_hyper_params_table} for a full list of hyperparameters. We also conducted experiments under various model configurations and vocabulary sizes to show that the results in \Cref{sec:exp,subsec:exp_cot} are consistent under different settings. See \Cref{app:model_data_configs_table} for a complete list of different configurations, where the default choices are boldened. For each curve in all figures, the results are averaged over 10 trials, and the error bar is calculated using standard deviation. We run each trial on an Nvidia A100 GPU and it typically takes 0.5-1.5 hours for each trial.

\begin{table}[htbp]
    \centering
    \begin{tabular}{ | l | l | }
        \hline
        \textbf{Parameters} & \textbf{Values} \\
        \hline
        Learning Rate & 0.01 \\
        Weight Decay $\lambda$ & 0.9 \\
        $(\beta_1, \beta_2)$ & (0.9, 0.999) \\
        Batch Size & 64 \\
        Number of Epochs & 3000 \\
        \hline
    \end{tabular}
    \caption{Full list of hyperparameters for AdamW optimizer and training.}
    \label{app:training_hyper_params_table}
\end{table}

\begin{table}[htbp]
    \centering
    \begin{tabular}{ | l | l | }
        \hline
        \textbf{Parameters} & \textbf{Values} \\
        \hline
        Number of Layers & 12, \textbf{24}, 48 \\
        Number of Heads & \textbf{12} \\
        Vocabulary Size & 20, 50, 200, \textbf{800}, 2000 \\
        Entity Length & \textbf{1}, 2, 3 \\
        Positional Encoding Type & None, \textbf{Absolute}, Relative \\
        Token, Positional Embedding & \textbf{Learnable}, Frozen \\
        \hline
    \end{tabular}
    \caption{The list of different configurations for experiments in \Cref{app:subsec_exp_reversal_curse,app:subsec_exp_cot}. Default choices are boldened for each row.}
    \label{app:model_data_configs_table}
\end{table}

\subsection{Additional experimental results for the reversal curse}
\label{app:subsec_exp_reversal_curse}
In this section, we show additional experimental results for the reversal curse under different configurations, including different vocabulary sizes (\Cref{fig:reverse_vocab}), different number of layers (\Cref{fig:reverse_layer}), different positional encoding (\Cref{fig:reverse_pe}), different entity lengths (\Cref{fig:reverse_word}) and whether token and positional embeddings are trainable or fixed (\Cref{fig:reverse_freeze_embed}). Our experimental results consistently show that the reversal curse happens under different settings. 

\begin{figure}[htbp]
  \centering
  \begin{subfigure}[b]{0.45\textwidth}
    \centering
    \includegraphics[width=\textwidth]{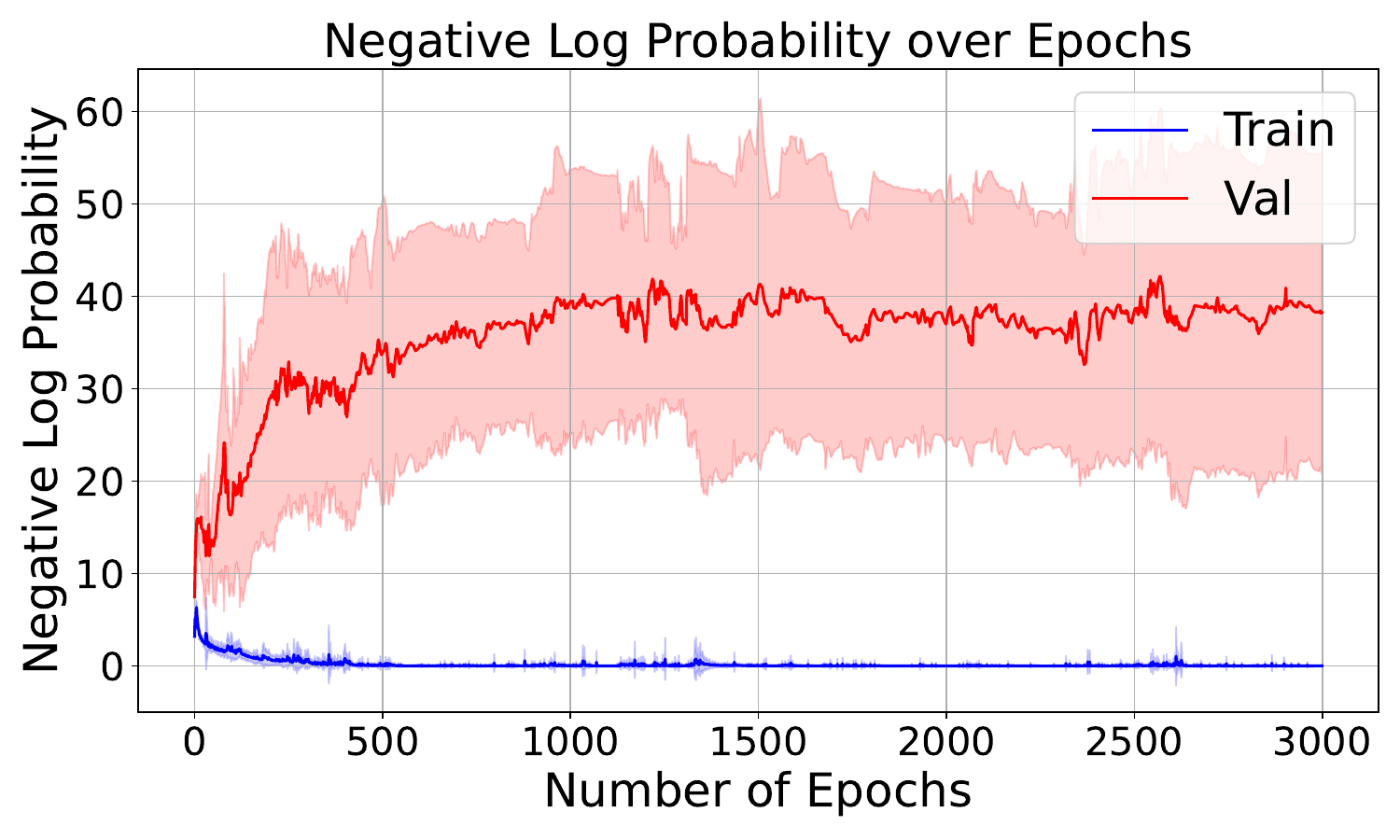}
    \caption{Vocabulary size 20}
  \end{subfigure}
  \hfill
  \begin{subfigure}[b]{0.45\textwidth}
    \centering
    \includegraphics[width=\textwidth]{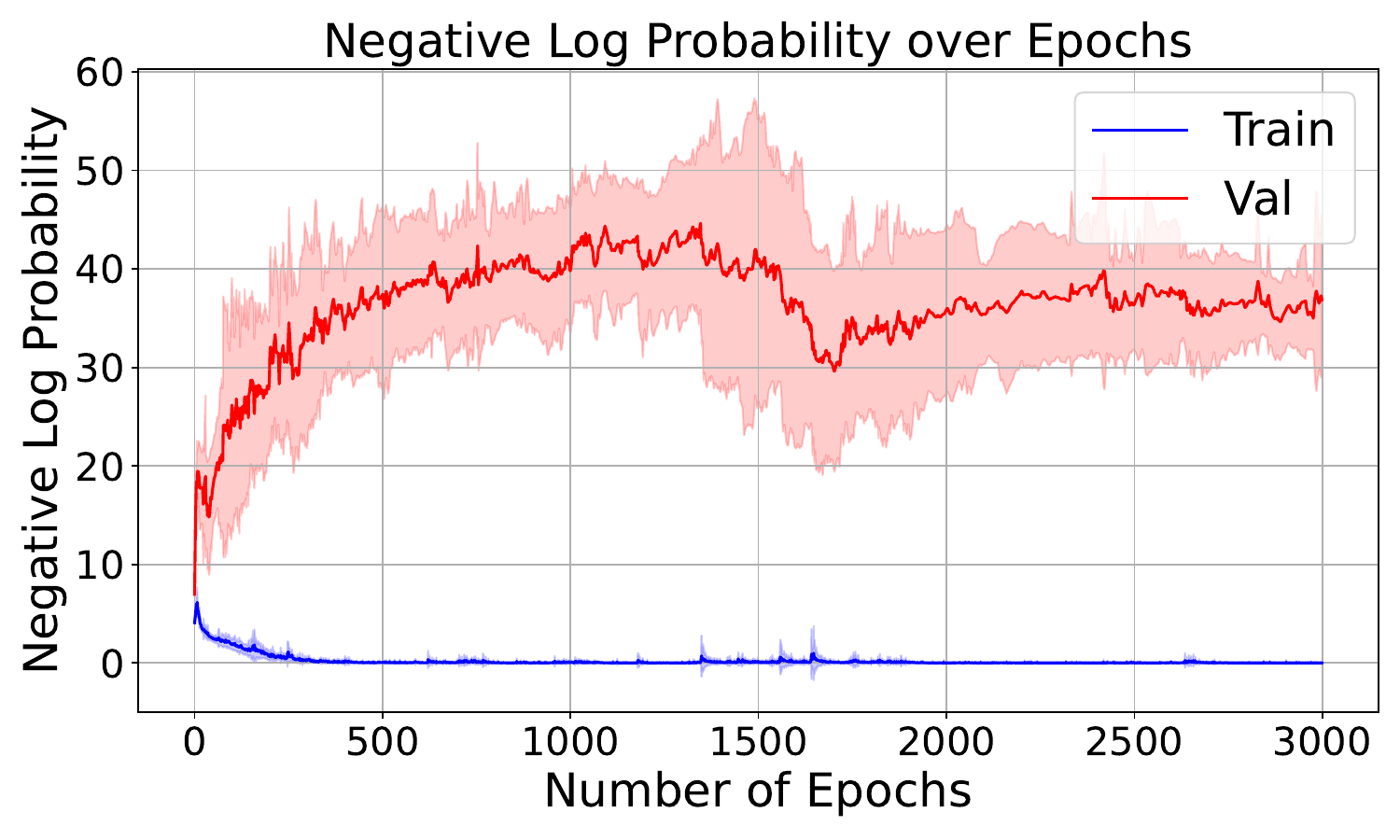}
    \caption{Vocabulary size 50}
  \end{subfigure}
  \\
  \begin{subfigure}[b]{0.45\textwidth}
    \centering
    \includegraphics[width=\textwidth]{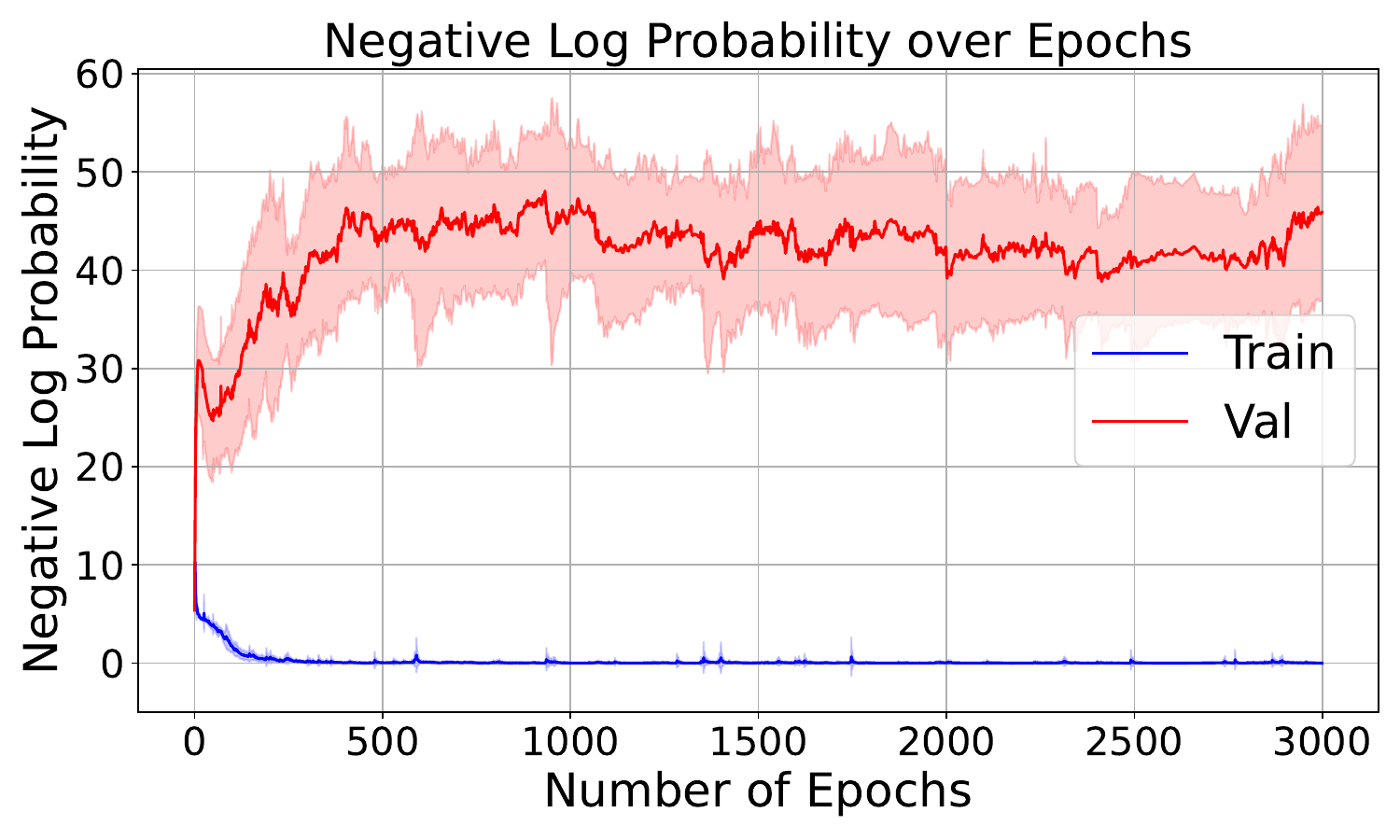}
    \caption{Vocabulary size 200}
  \end{subfigure}
  \hfill
  \begin{subfigure}[b]{0.45\textwidth}
    \centering
    \includegraphics[width=\textwidth]{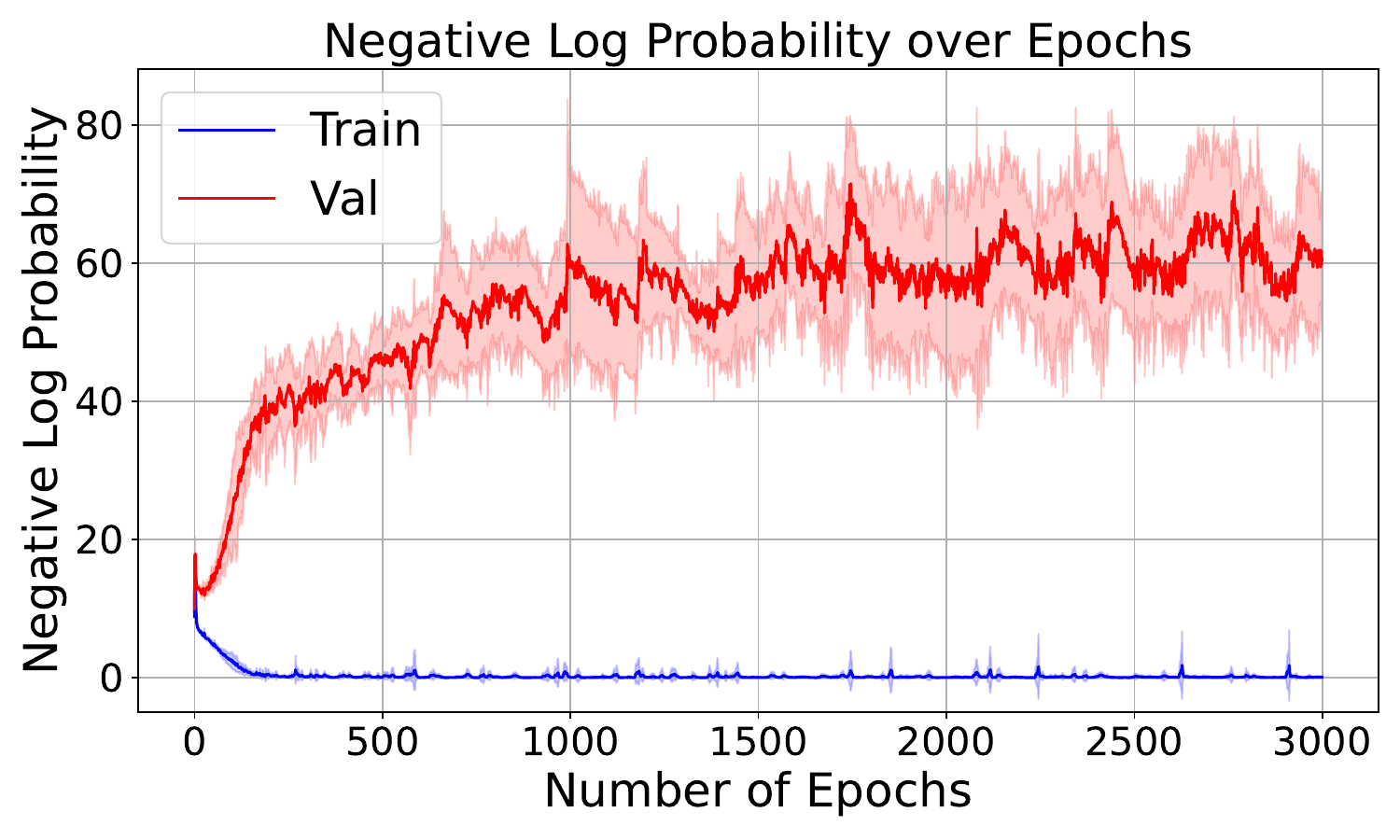}
    \caption{Vocabulary size 2000}
  \end{subfigure}
  \caption{Results for reversal curse for different vocabulary sizes. All other configurations are set as default values as in \Cref{app:model_data_configs_table}. The training set sizes for the above four experiments are  $9$, $20$, $85$, $850$ respectively, and the validation set sizes are $1$, $4$, $15$, $150$ respectively.}
  \label{fig:reverse_vocab}
\end{figure}

\begin{figure}[htbp]
  \centering
  \begin{subfigure}[b]{0.45\textwidth}
    \centering
    \includegraphics[width=\textwidth]{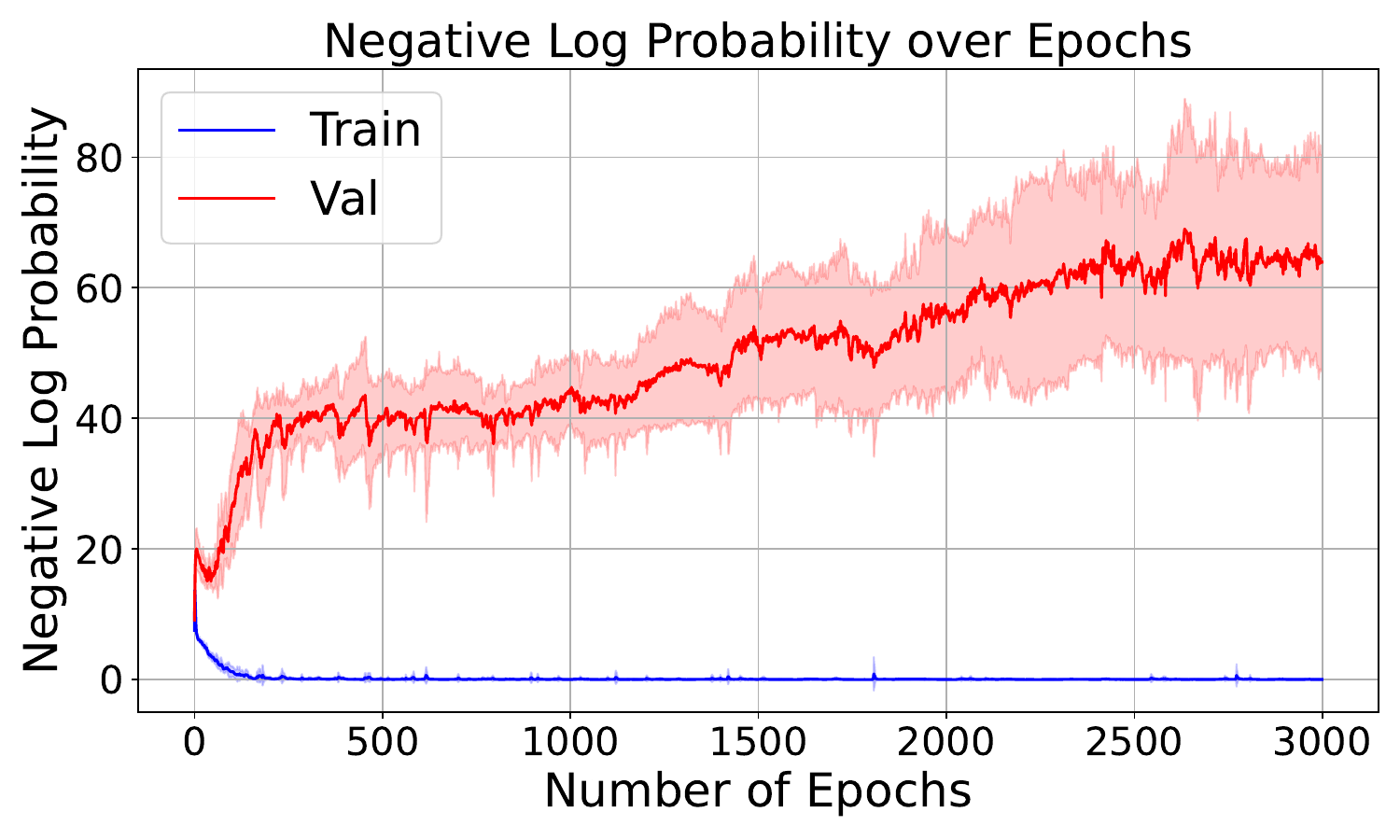}
    \caption{12 layers}
  \end{subfigure}
  \hfill
  \begin{subfigure}[b]{0.45\textwidth}
    \centering
    \includegraphics[width=\textwidth]{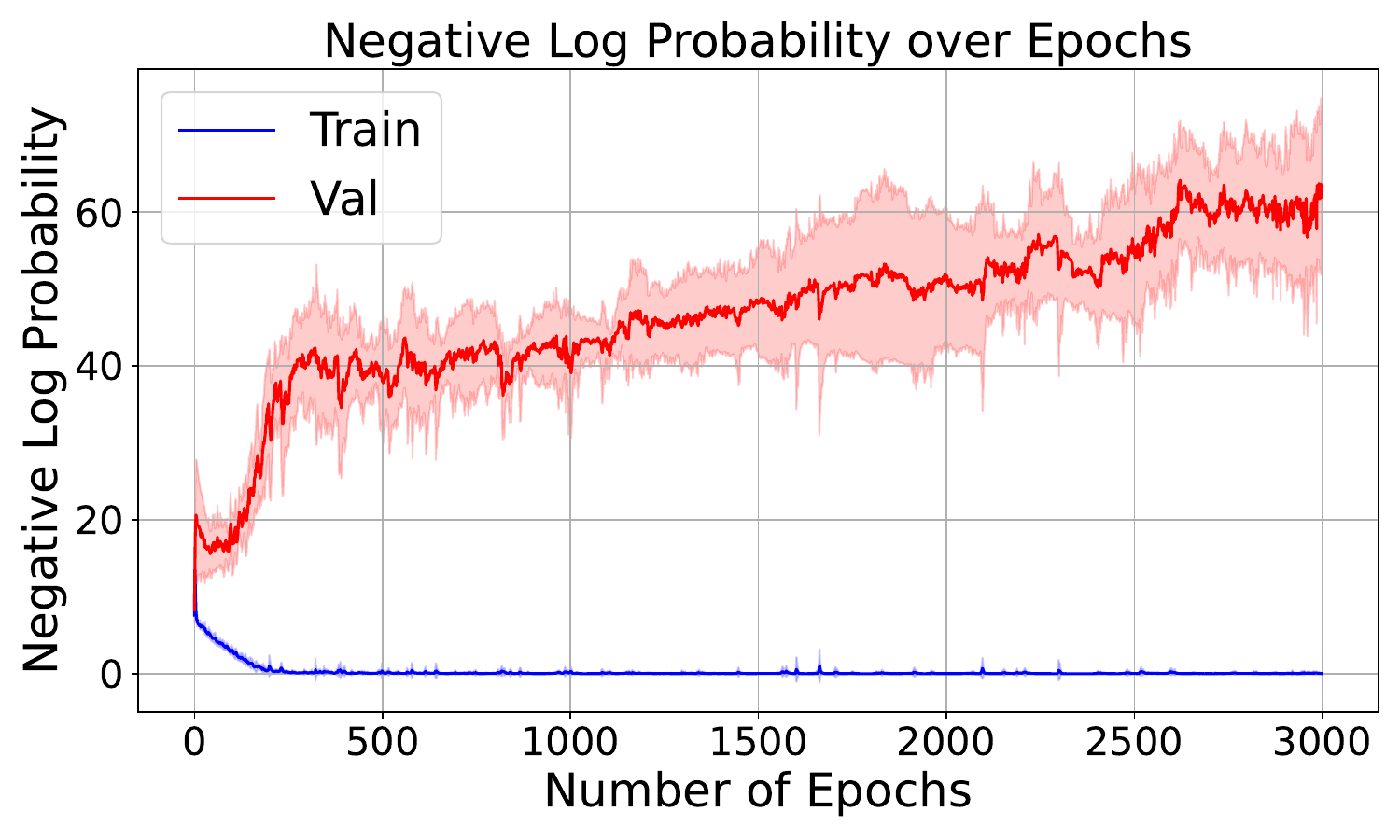}
    \caption{48 layers}
  \end{subfigure}
  \caption{Results for reversal curse for different numbers of layers of the transformer. All other configurations are set as default values as in \Cref{app:model_data_configs_table}.}
  \label{fig:reverse_layer}
\end{figure}

\begin{figure}[htbp]
  \centering
  \begin{subfigure}[b]{0.45\textwidth}
    \centering
    \includegraphics[width=\textwidth]{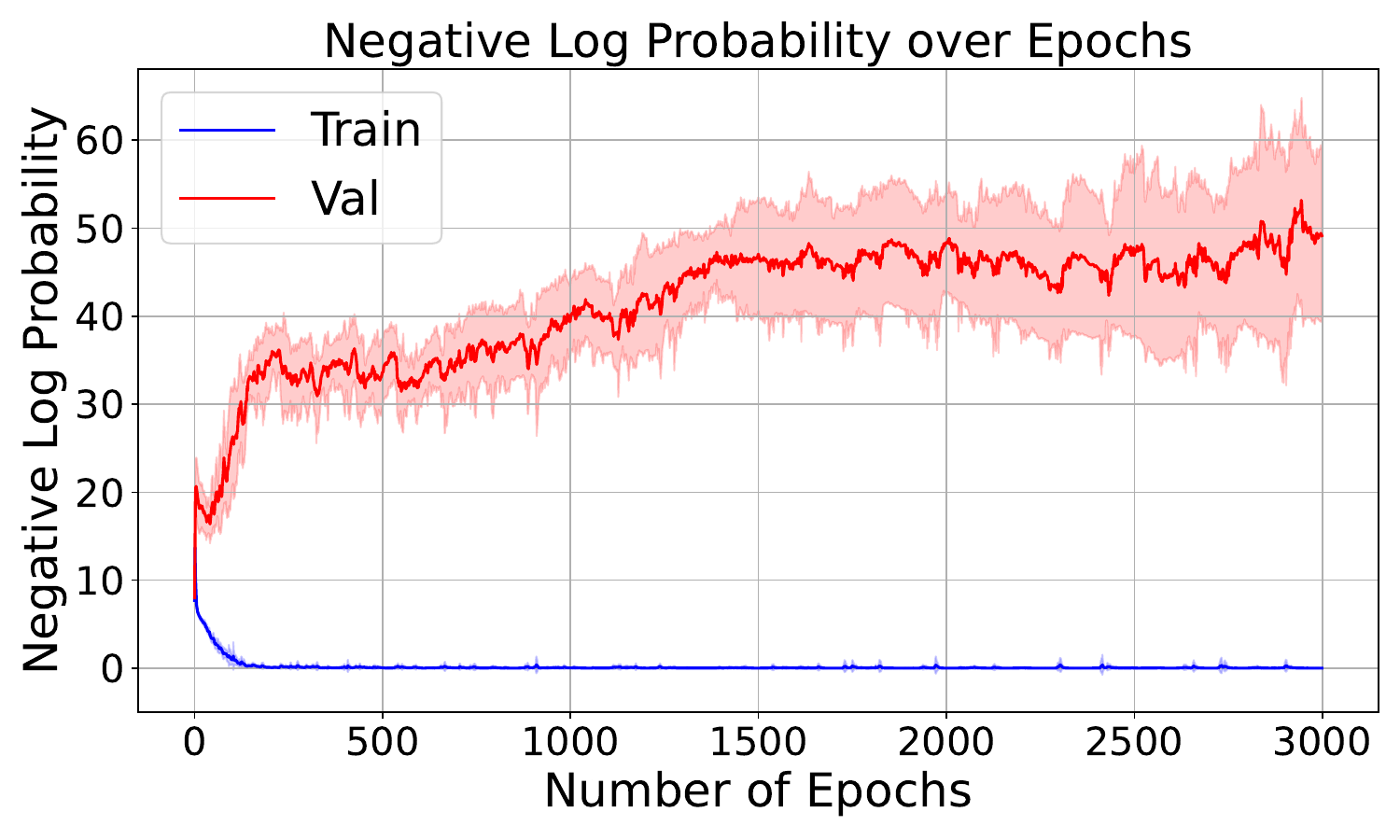}
    \caption{No positional encoding}
  \end{subfigure}
  \hfill
  \begin{subfigure}[b]{0.45\textwidth}
    \centering
    \includegraphics[width=\textwidth]{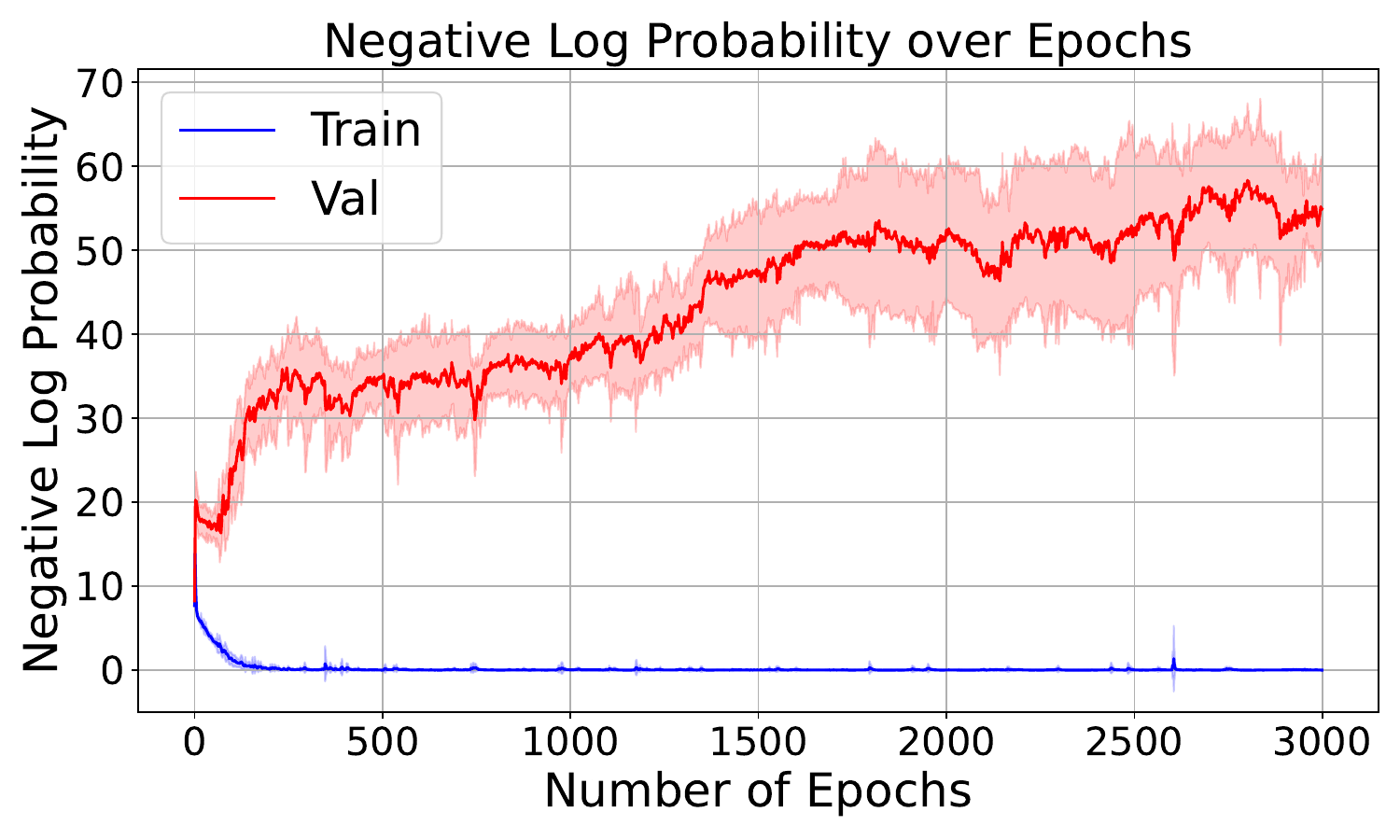}
    \caption{Relative positional encoding}
  \end{subfigure}
  \caption{Results for reversal curse with no positional encoding or relative positional encoding. For relative positional encoding, we follow the Rotary Position Embedding (RoPE) method proposed by \cite{su2024roformer}. We use the implementation of \href{https://github.com/lucidrains/rotary-embedding-torch}{this repo}, MIT license. All other configurations are set as default values as in \Cref{app:model_data_configs_table}.}
  \label{fig:reverse_pe}
\end{figure}

\begin{figure}[htbp]
  \centering
  \begin{subfigure}[b]{0.45\textwidth}
    \centering
    \includegraphics[width=\textwidth]{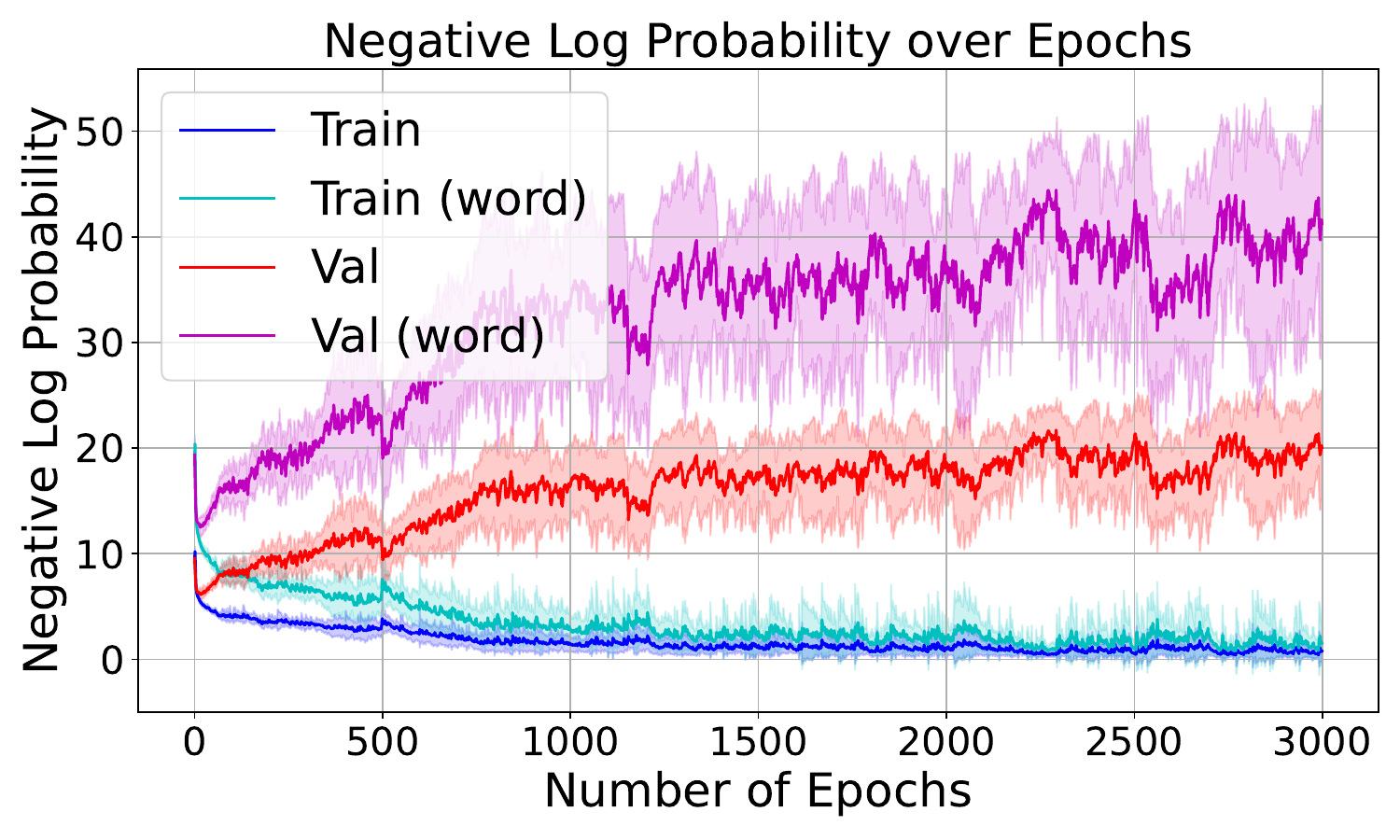}
    \caption{Entity length 2}
  \end{subfigure}
  \hfill
  \begin{subfigure}[b]{0.45\textwidth}
    \centering
    \includegraphics[width=\textwidth]{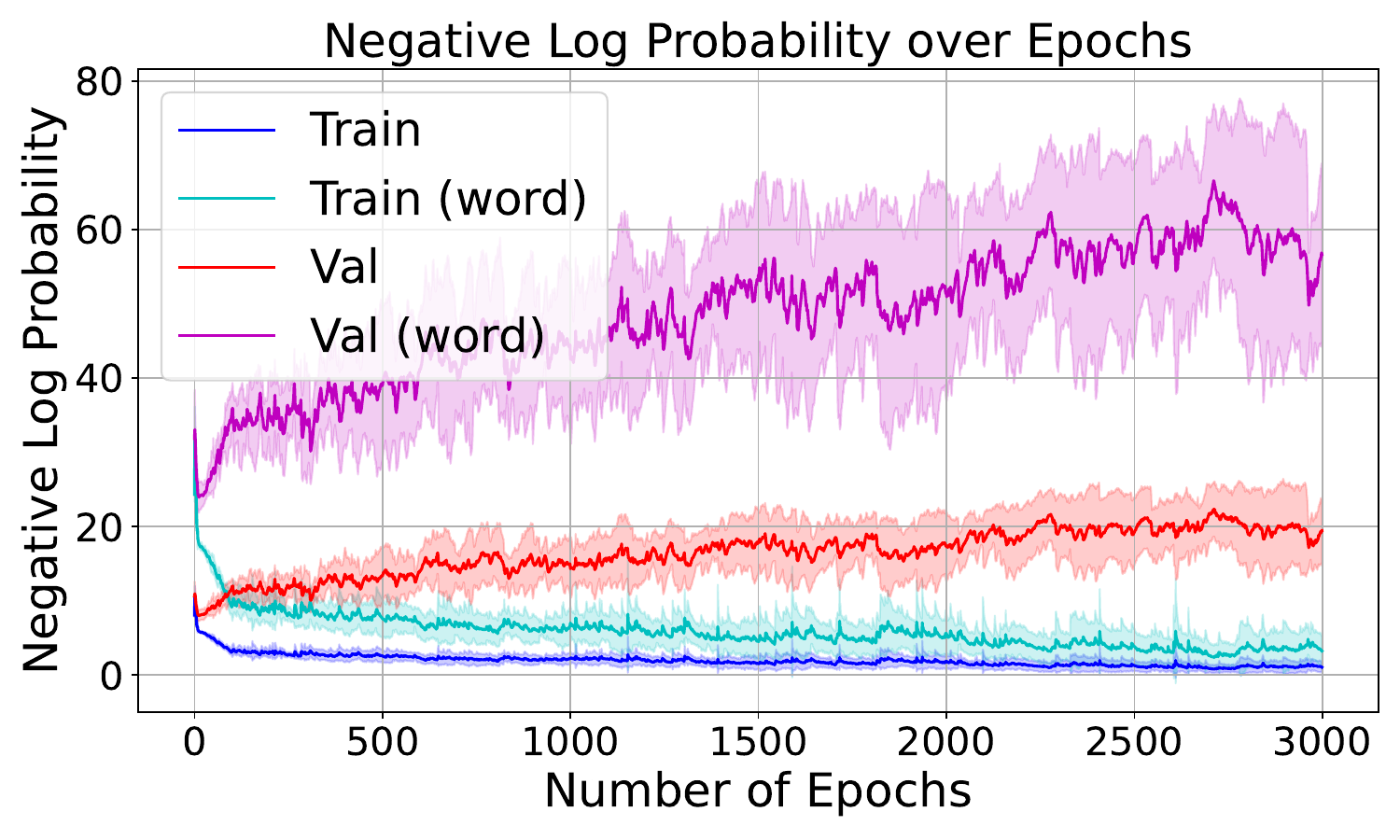}
    \caption{Entity length 3}
  \end{subfigure}
  \caption{Results for reversal curse with different entity lengths. Each entity $\ai$ or $\bi$ consists of multiple tokens and different entities may have overlapped tokens. The ``Train'' curve represents the negative log probability of predicting the first token of the output entity, and ``Train (word)'' represents the negative log probability of predicting all tokens one by one of the output entity. All other configurations are set as default values as in \Cref{app:model_data_configs_table}. The training set sizes for the above two experiments are  $680$ and $250$, respectively, and the validation set sizes are $120$ and $50$, respectively.}
  \label{fig:reverse_word}
\end{figure}

\begin{figure}[ht]
  \centering
  \includegraphics[width=10cm]{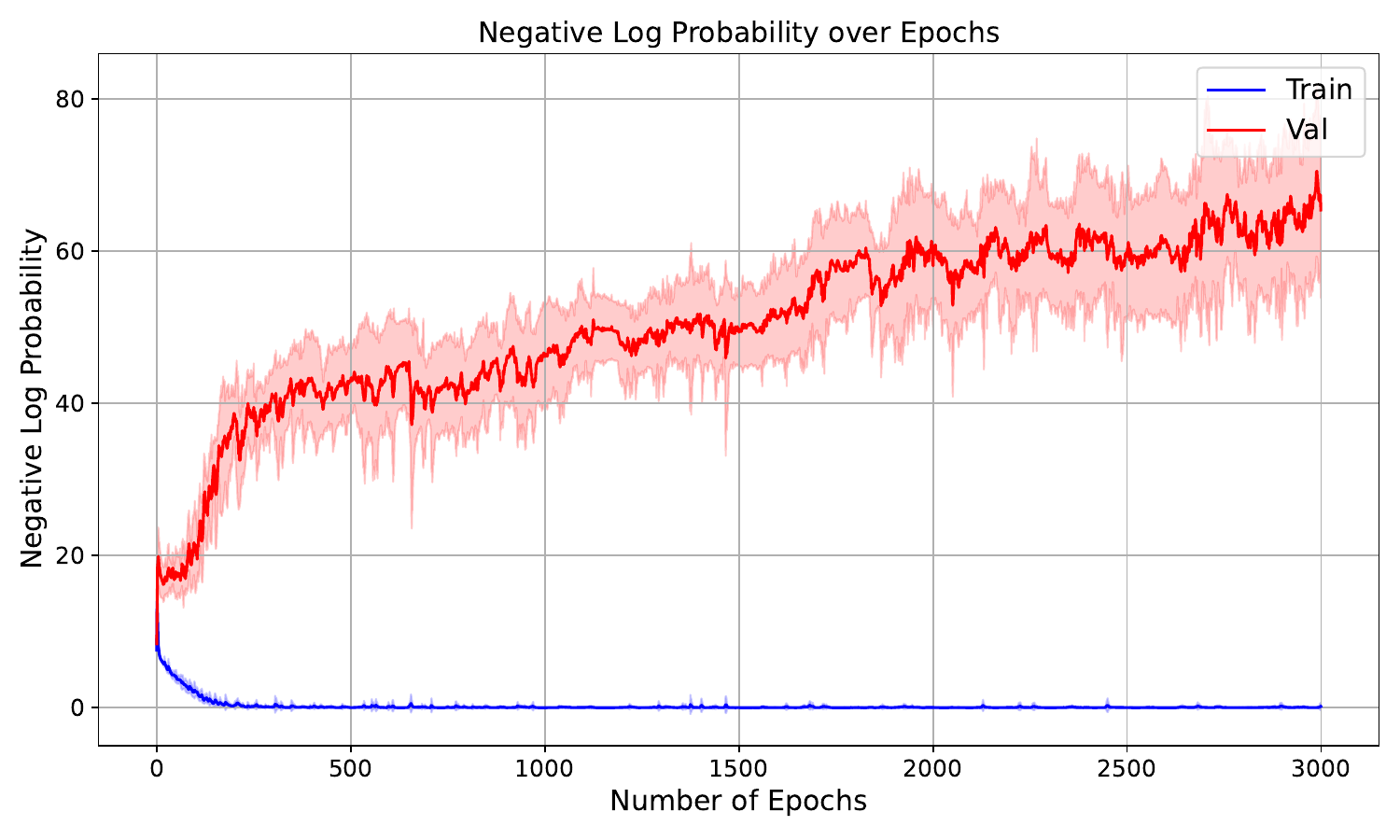}
  \caption{Results for reversal curse with fixed token embedding and fixed positional embedding. All other configurations are set as default values as in \Cref{app:model_data_configs_table}.} 
  \label{fig:reverse_freeze_embed}
\end{figure}

We provide additional experimental results that (1) the reversal curse still happens even if the embedding dimension is much smaller than the vocabulary size in \Cref{sub:add_reversal_small_embedding}; (2) the embeddings of different tokens are nearly orthogonal; (3) the reversal curse does not happen under the in-context learning settings.

\subsubsection{The reversal curse under small embedding dimensions}
\label{sub:add_reversal_small_embedding}

Although for theoretical analysis in \Cref{sec:training_dynamics_bilinear}, we assumed the embedding dimension is polynomial in the vocabulary size, in practice, the embedding dimension only needs to be the order of logarithm of the vocabulary size.  \Cref{fig:rebuttal_small_embedding_probs} shows that for a much smaller embedding size, the reversal curse still happens.

\begin{figure}[htbp]
    \centering
    \includegraphics[width=1.0\textwidth]{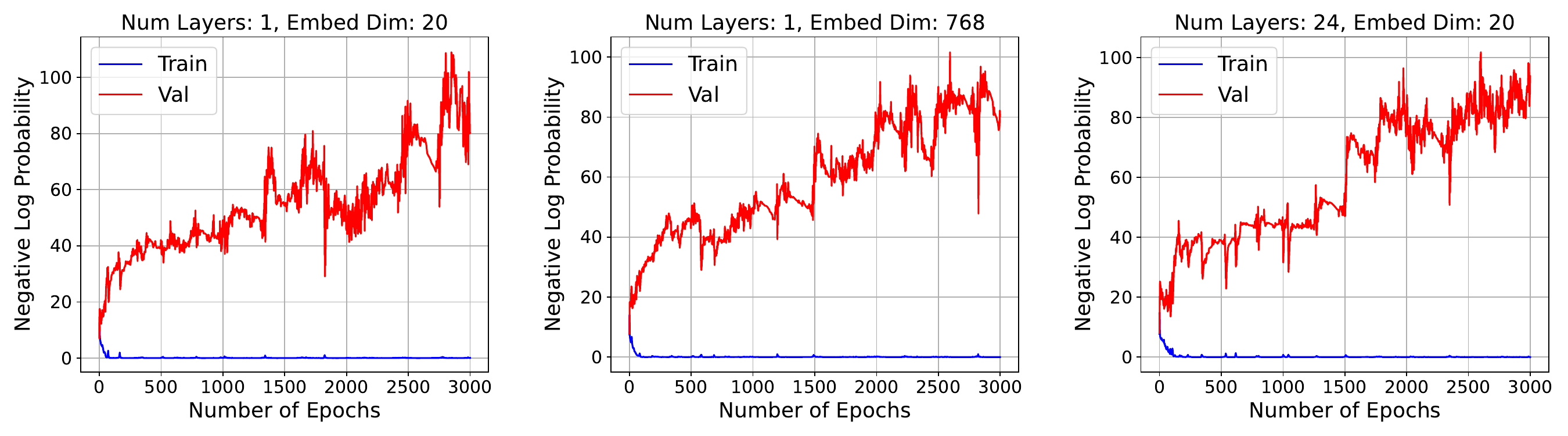}
    \caption{Results for reversal curse for different numbers of layers and different embedding dimensions. All other configurations are set as default values as in \Cref{app:model_data_configs_table}.}
    \label{fig:rebuttal_small_embedding_probs}
\end{figure}

\subsubsection{Near orthogonal embeddings}
\label{sub:add_near_orthogonal}

Note that in \Cref{sec:training_dynamics_bilinear}, our analysis relies on the fact that embeddings of different tokens are nearly orthogonal, and in \Cref{sec:one_layer_transformer}, the embeddings are effectively one-hot. In \Cref{fig:rebuttal_heatmaps}, We show that in practice, even if the embedding dimension is much smaller than the vocabulary size, the near orthogonality condition still holds.

\begin{figure}[htbp]
    \centering
    \includegraphics[width=1.0\textwidth]{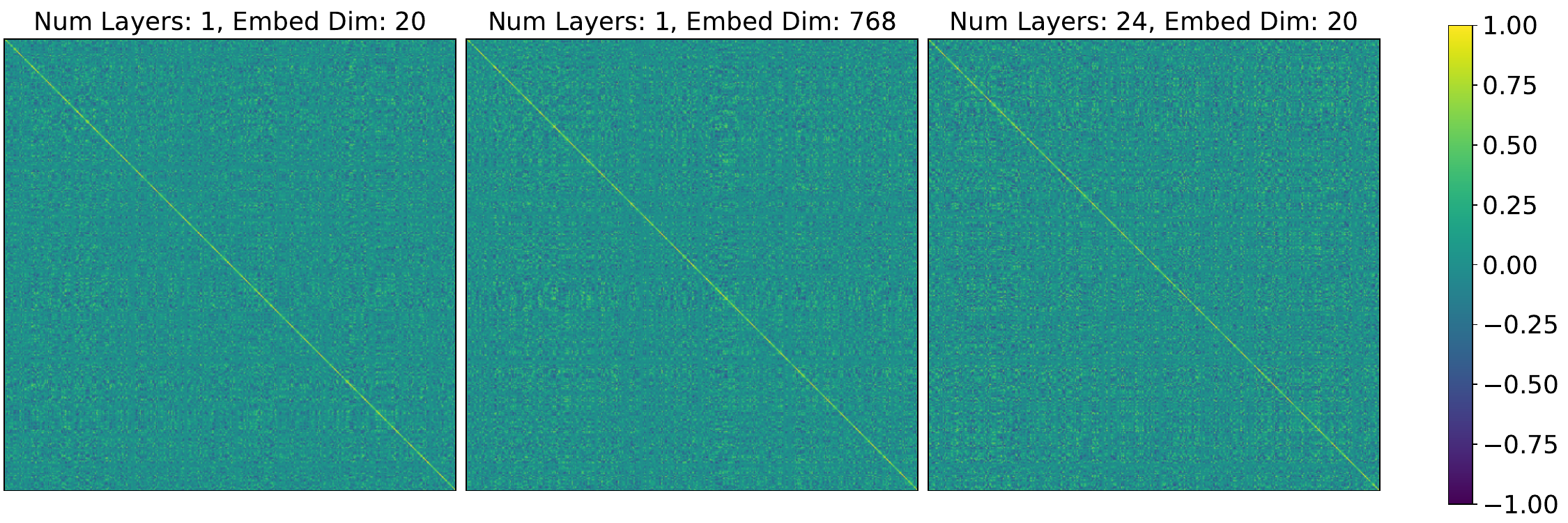}
    \caption{A heat map of cosine similarity between token embeddings (all $\tokenA_i$ and $\tokenB_i$) after 3000 epochs training under different settings. The settings are the same as \Cref{fig:rebuttal_small_embedding_probs}. Under different numbers of layers or embedding dimensions, most of the non-diagonal entries are close to 0, which shows that the embeddings of different tokens are nearly orthogonal.}
    \label{fig:rebuttal_heatmaps}
\end{figure}

\subsubsection{The reversal curse does not happen in ICL settings}
\label{sub:add_ICL}

We also emphasize that the reversal curse does not happen in ICL settings, which means if ``$\tokenA\to \tokenB$'' is provided as part of the prompt, then the model is able to answer ``$\tokenB \gets \tokenA$''. \Cref{fig:rebuttal_ICL_probs} shows preliminary results of ICL. All the sentences in the dataset have the format of ``$\tokenA_i \rforward \tokenB_j \Leftrightarrow \tokenB_j \rbackward \tokenA_i$'', which is a seven-token sentence where $\rforward$ and $\rbackward$ is a pair of relationships inverse to each other, and ``$\Leftrightarrow$'' is another reserved token representing equivalence. There are ten different 
$\tokenB_j$ and 
$n$ different $\tokenA_i$
 where $n = 100$
 for the left figure in \Cref{fig:rebuttal_ICL_probs} and $n = 200$ 
 for the right figure. For each $A_i$, we construct ten sentences using different $B_j$
, and we randomly chose three of them to be included in the validation set and seven other sentences in the training set. The result of \Cref{fig:rebuttal_ICL_probs} shows that the reversal curse does not happen during the ICL setting.

\begin{figure}[ht]
  \centering
  \includegraphics[width=1.0\textwidth]{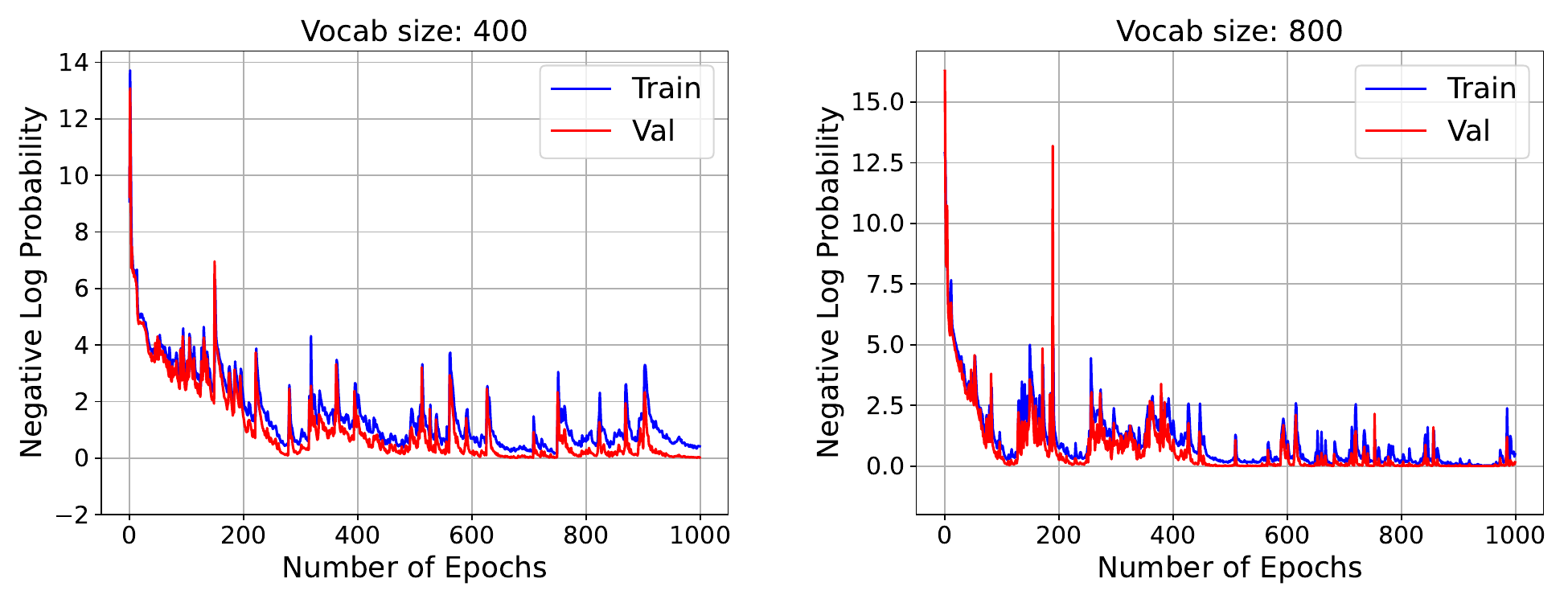}
  \caption{Training and validation loss under in-context learning (ICL) settings. All sentences consist of seven tokens and have the form of ``$\tokenA_i\rforward\tokenB_j\Leftrightarrow\tokenB_j\rbackward\tokenA_i$''. The loss is calculated on the last token. For the left figure, the training set size is 700, the validation set size is 300, and the vocabulary size is 400; for the right figure, the training set size is 1400, the validation set size is 600, and the vocabulary size is 800. All other configurations are set as default values as in \Cref{app:model_data_configs_table}. The result shows that the reversal curse does not happen in ICL settings, i.e., if ``$\tokenA_i\rforward \tokenB_j$'' is provided as part of the prompt, then the model is able to recognize ``$\tokenB_j \rbackward \tokenA_i$''. } 
  \label{fig:rebuttal_ICL_probs}
\end{figure}

\subsection{Additional experimental results for chain-of-thought}
\label{app:subsec_exp_cot}

In this section, we show additional experimental results for COT under different configurations, including different vocabulary sizes (\Cref{fig:cot_vocab}), different number of layers (\Cref{fig:cot_layer}), different positional encoding (\Cref{fig:cot_pe}), different entity lengths (\Cref{fig:cot_word}) and whether token and positional embeddings are trainable or fixed (\Cref{fig:cot_freeze_embed}). Note that our experimental results consistently show the necessity of COT under different settings.

\subsection{Chain-of-thought with relevant tokens}
\label{app:exp_COT_relevant}

In \Cref{subsec:exp_cot}, we briefly mentioned that the irrelevance of different entity tokens is one of the reasons that COT is necessary. Now, we show that if the entity tokens are correlated and show specific patterns, it is possible for a model to deduce indirect implications automatically.

Instead of using single tokens $\ai$, $\bi$, $\ci$ to represent each entity, now we use two tokens $\tokenA\tokeni$, $\tokenB\tokeni$, $\tokenC\tokeni$ to represent entities, where $\tokenA$, $\tokenB$ and $\tokenC$ are three common tokens shared by each triple, and token $\tokeni$ are distinct for each triple. \Cref{fig:chain_v2} shows that for the above version of COT where tokens in the same chain are correlated, the model is able to ``deduce'' $\tokenA\tokeni\ito\tokenC\tokeni$ after training on $\tokenA\tokeni\dto\tokenB\tokeni$, $\tokenB\tokeni\dto\tokenC\tokeni$ and other training samples.

\begin{figure}[htbp]
  \centering
  \begin{subfigure}[b]{0.45\textwidth}
    \centering
    \includegraphics[width=\textwidth]{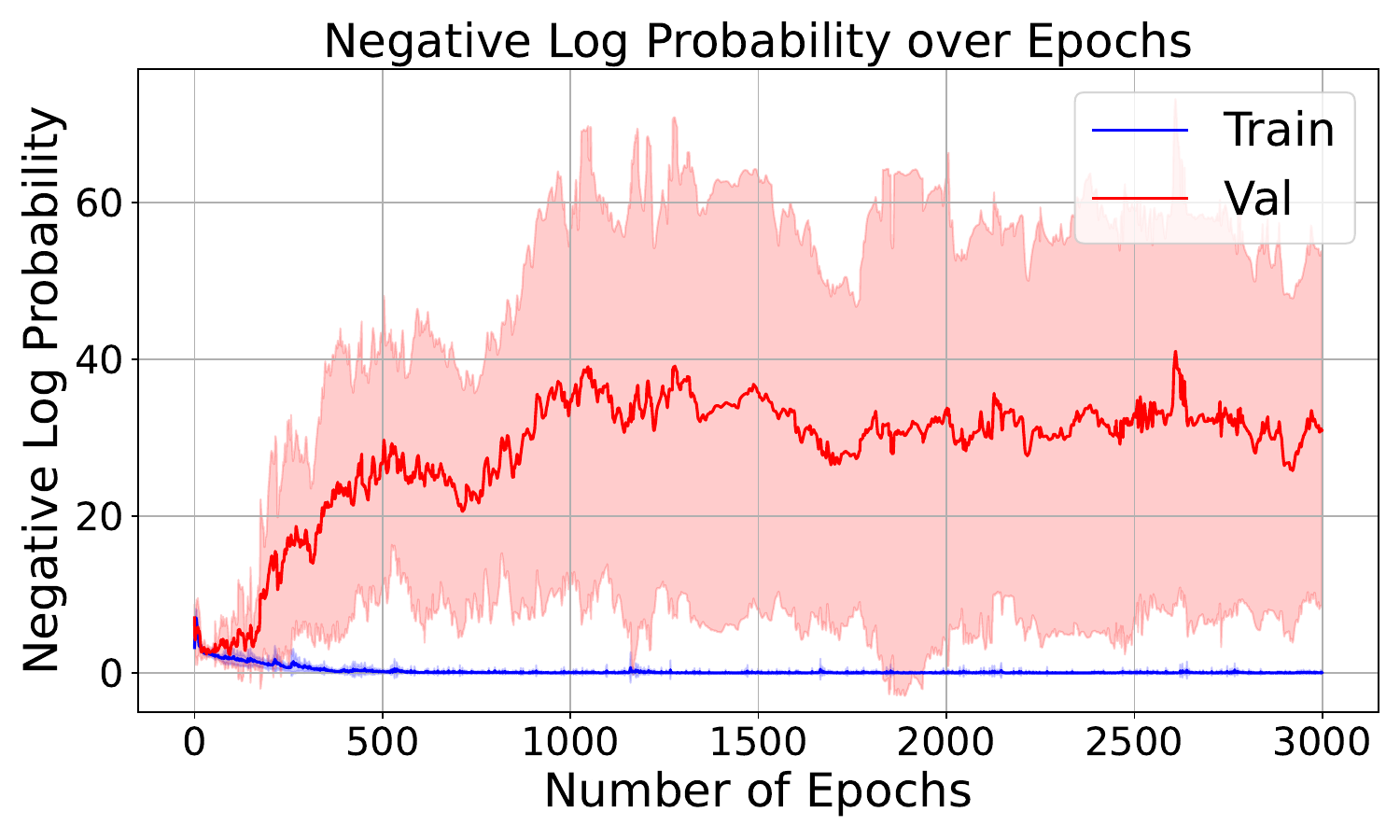}
    \caption{Vocabulary size 20}
  \end{subfigure}
  \hfill
  \begin{subfigure}[b]{0.45\textwidth}
    \centering
    \includegraphics[width=\textwidth]{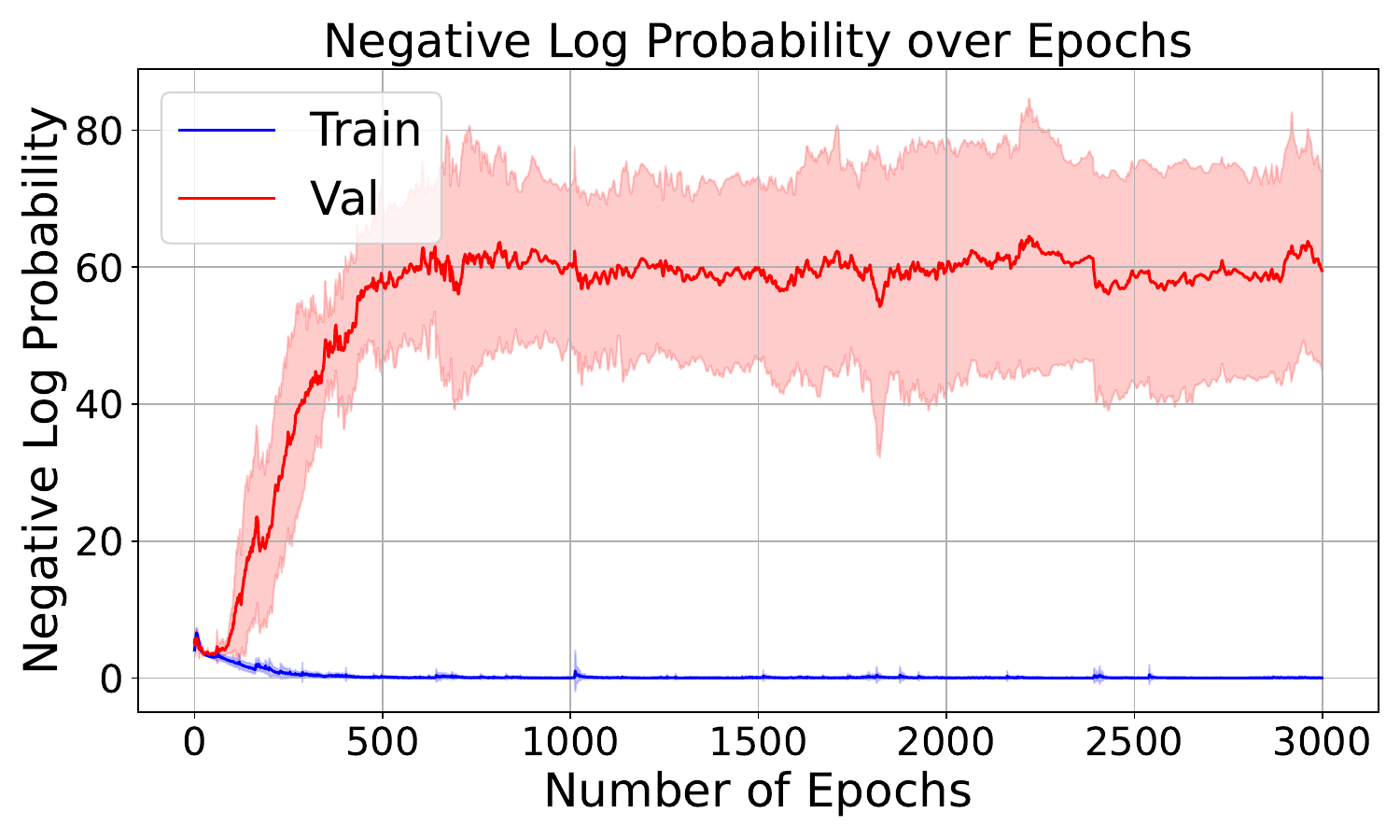}
    \caption{Vocabulary size 50}
  \end{subfigure}
  \\
  \begin{subfigure}[b]{0.45\textwidth}
    \centering
    \includegraphics[width=\textwidth]{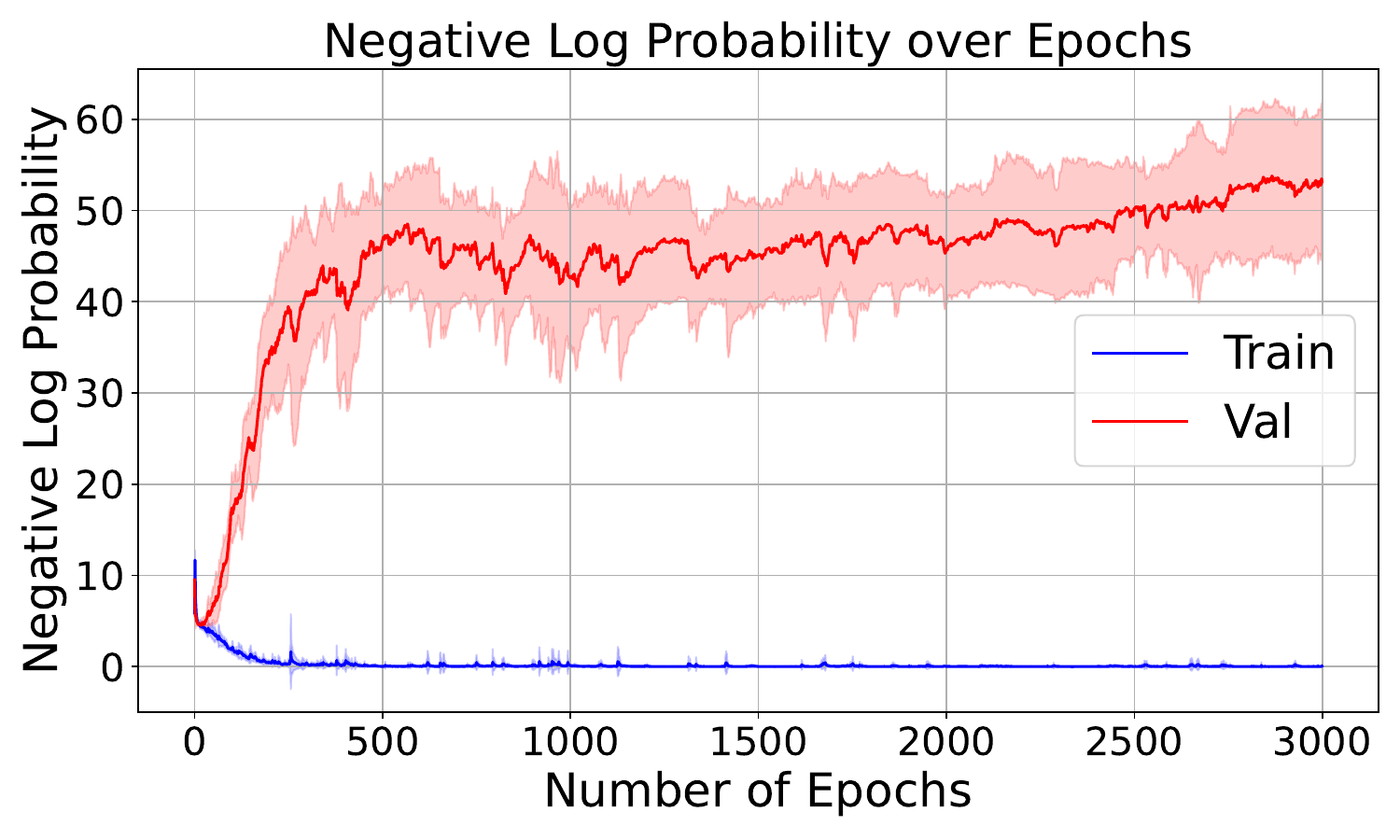}
    \caption{Vocabulary size 200}
  \end{subfigure}
  \hfill
  \begin{subfigure}[b]{0.45\textwidth}
    \centering
    \includegraphics[width=\textwidth]{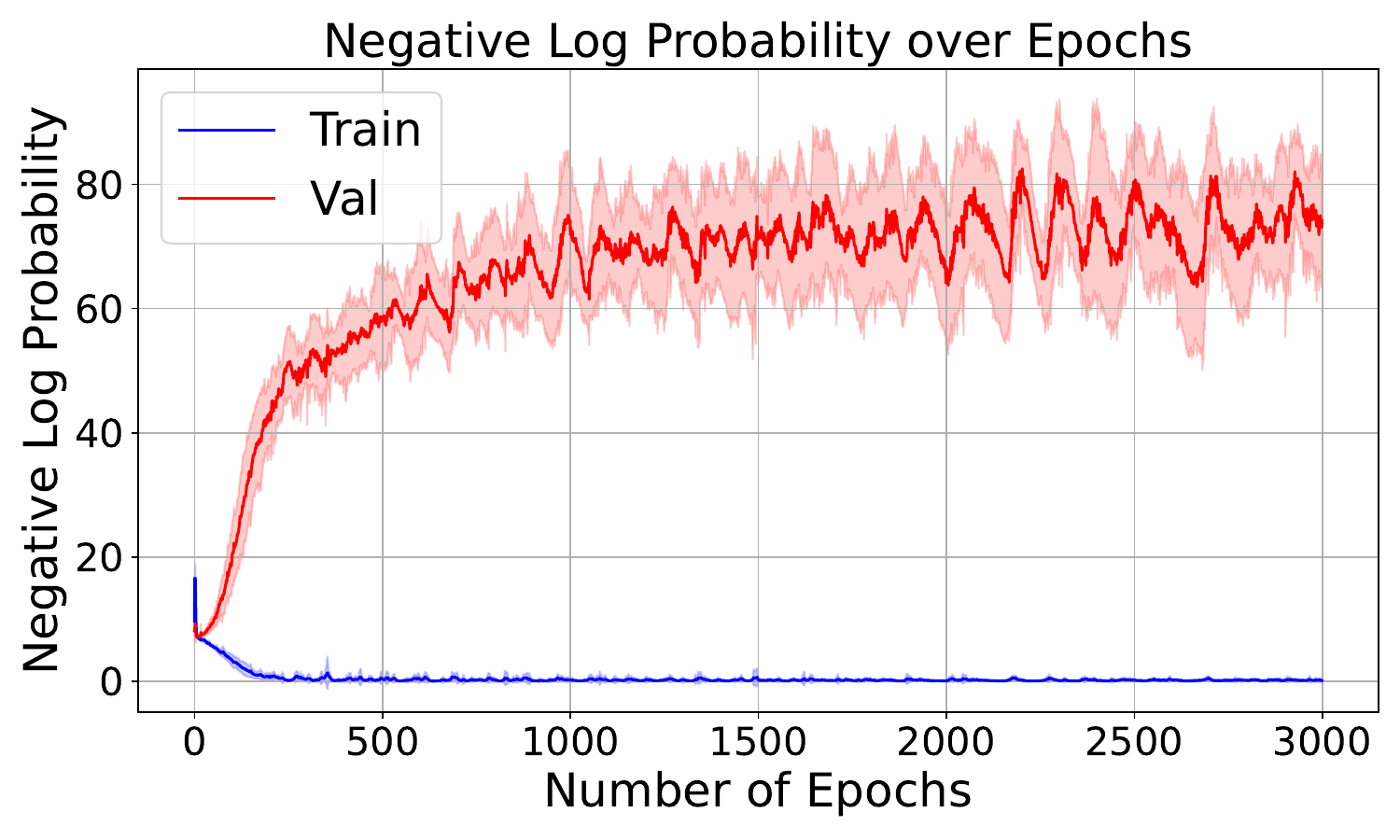}
    \caption{Vocabulary size 2000}
  \end{subfigure}
  \caption{Results for COT for different vocabulary sizes. All other configurations are set as default values as in \Cref{app:model_data_configs_table}. The training set sizes for the above four experiments are $14$, $32$, $135$, $1350$ respectively, and the validation set sizes are $1$, $4$, $15$, $150$ respectively.}
  \label{fig:cot_vocab}
\end{figure}

\begin{figure}[htbp]
  \centering
  \begin{subfigure}[b]{0.45\textwidth}
    \centering
    \includegraphics[width=\textwidth]{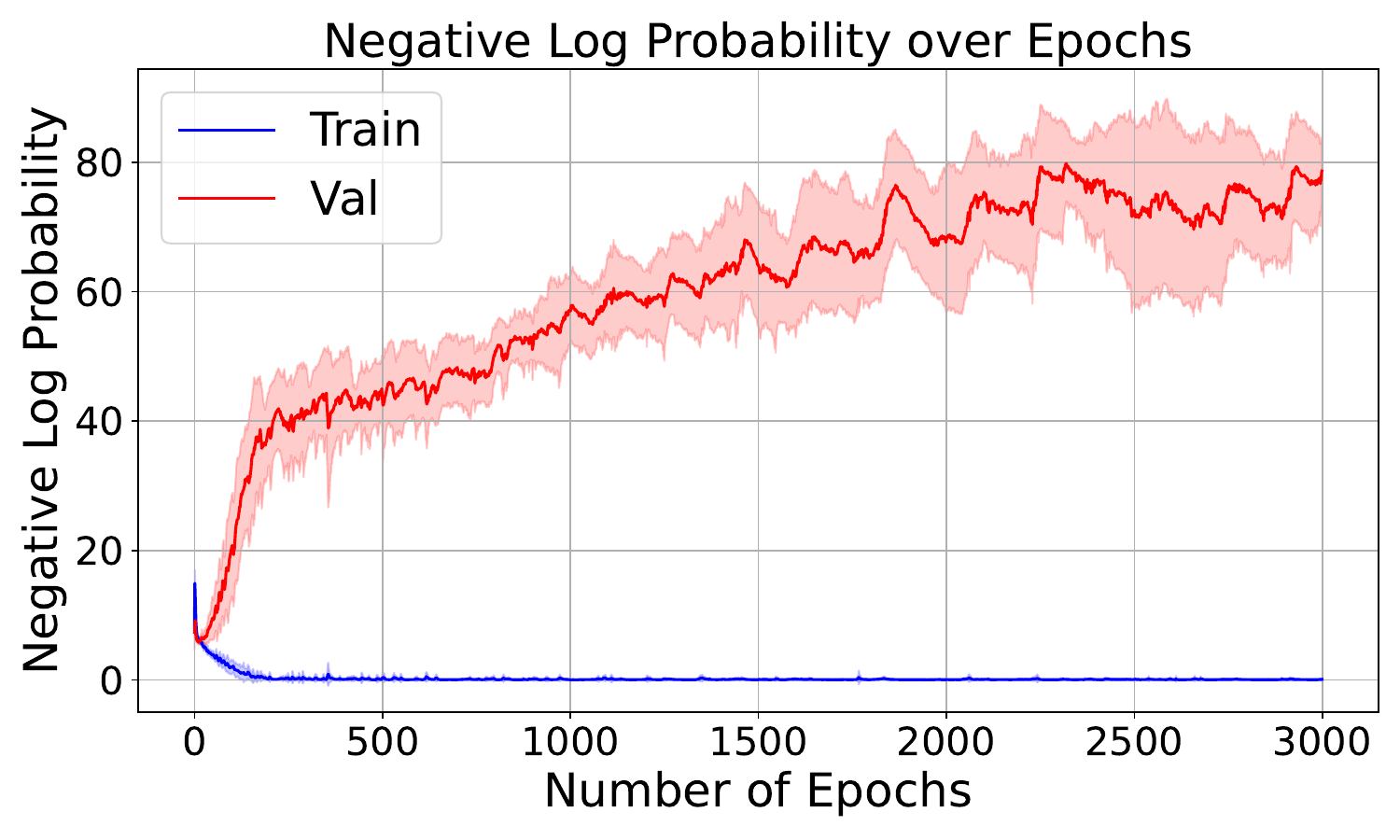}
    \caption{12 layers}
  \end{subfigure}
  \hfill
  \begin{subfigure}[b]{0.45\textwidth}
    \centering
    \includegraphics[width=\textwidth]{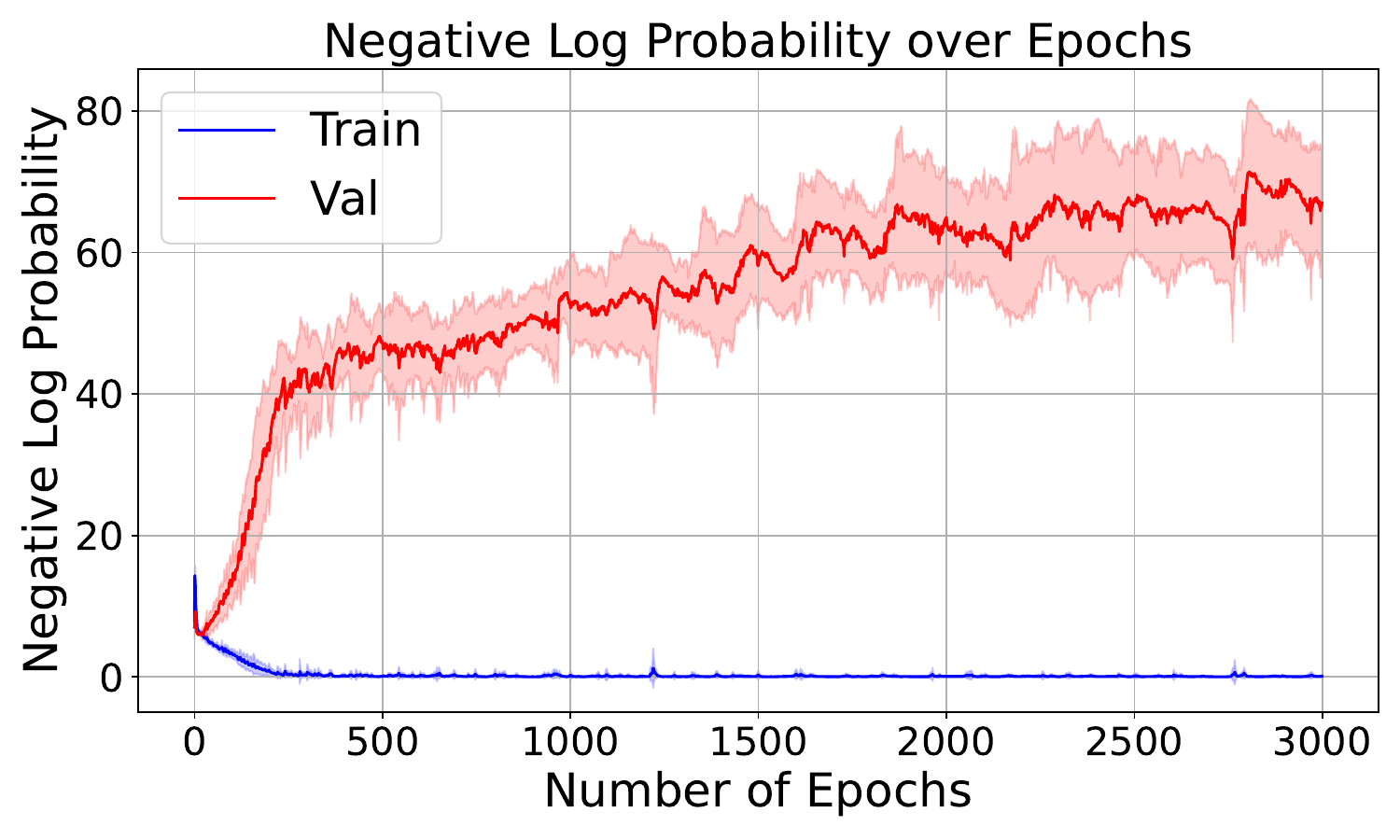}
    \caption{48 layers}
  \end{subfigure}
  \caption{Results for COT for different number of layers of the transformer. All other configurations are set as default values as in \Cref{app:model_data_configs_table}.}
  \label{fig:cot_layer}
\end{figure}

\begin{figure}[htbp]
  \centering
  \begin{subfigure}[b]{0.45\textwidth}
    \centering
    \includegraphics[width=\textwidth]{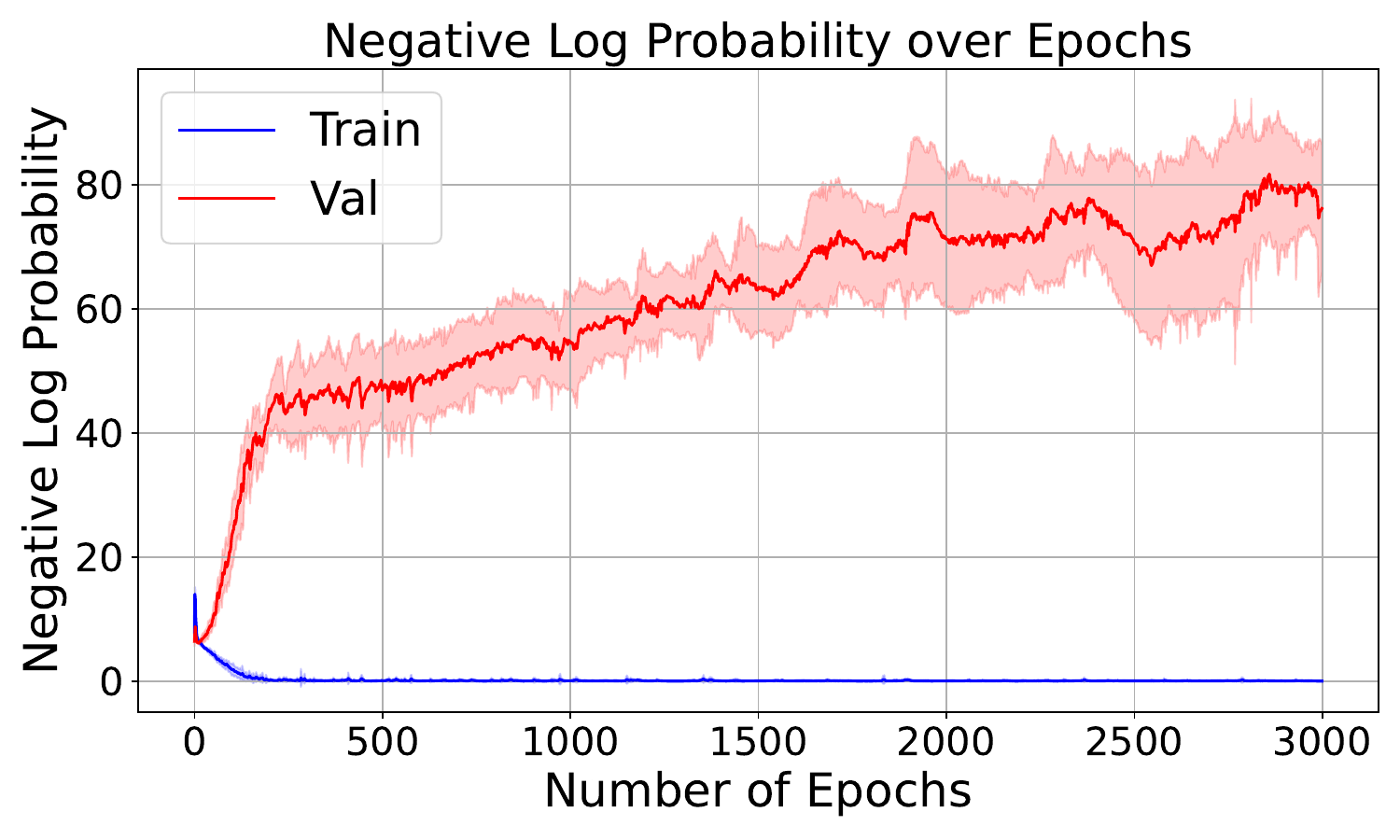}
    \caption{No positional encoding}
  \end{subfigure}
  \hfill
  \begin{subfigure}[b]{0.45\textwidth}
    \centering
    \includegraphics[width=\textwidth]{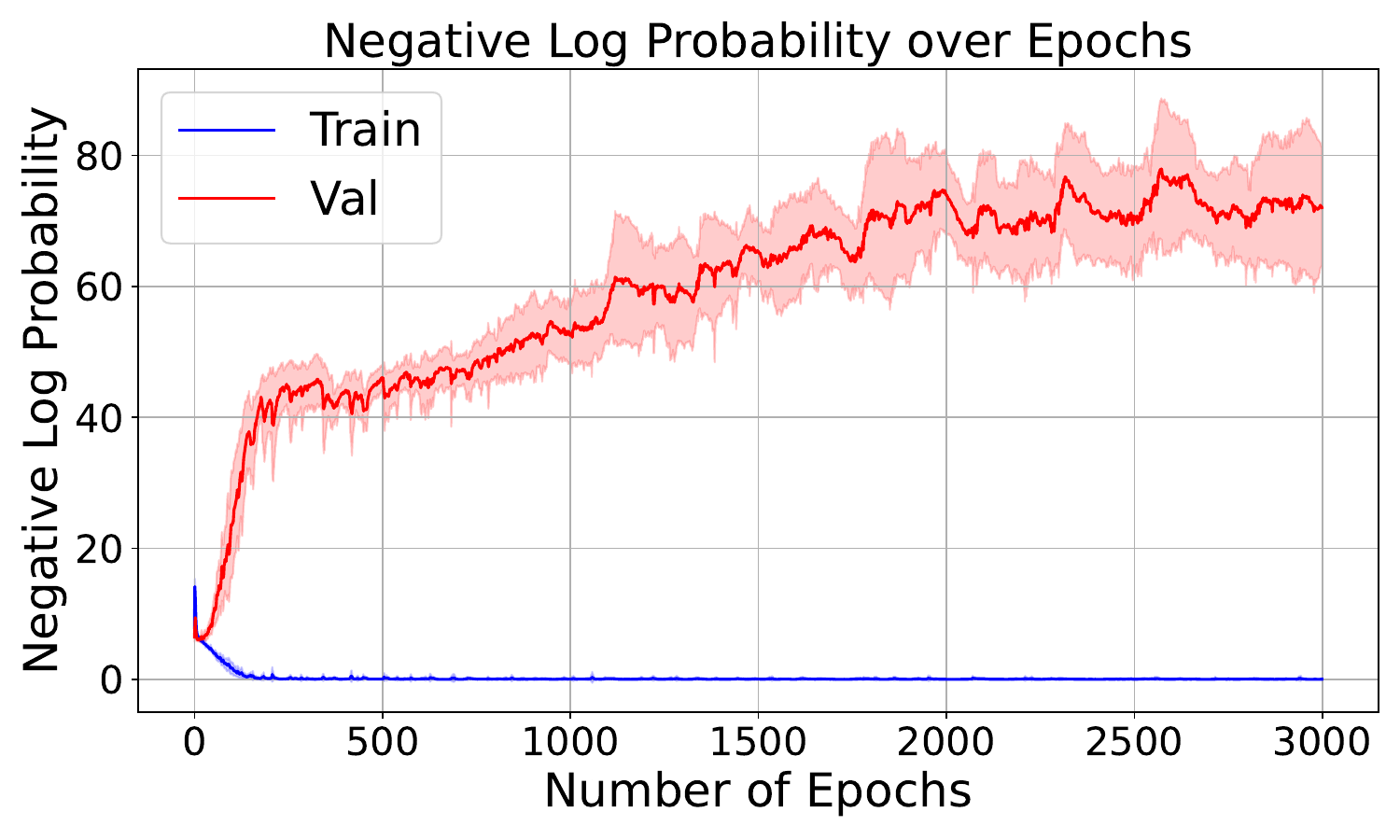}
    \caption{Relative positional encoding}
  \end{subfigure}
  \caption{Results for COT with no positional encoding or relative positional encoding. For relative positional encoding, we follow the Rotary Position Embedding (RoPE) method proposed by \cite{su2024roformer}. All other configurations are set as default values as in \Cref{app:model_data_configs_table}.}
  \label{fig:cot_pe}
\end{figure}

\begin{figure}[htbp]
  \centering
  \begin{subfigure}[b]{0.4\textwidth}
    \centering
    \includegraphics[width=\textwidth]{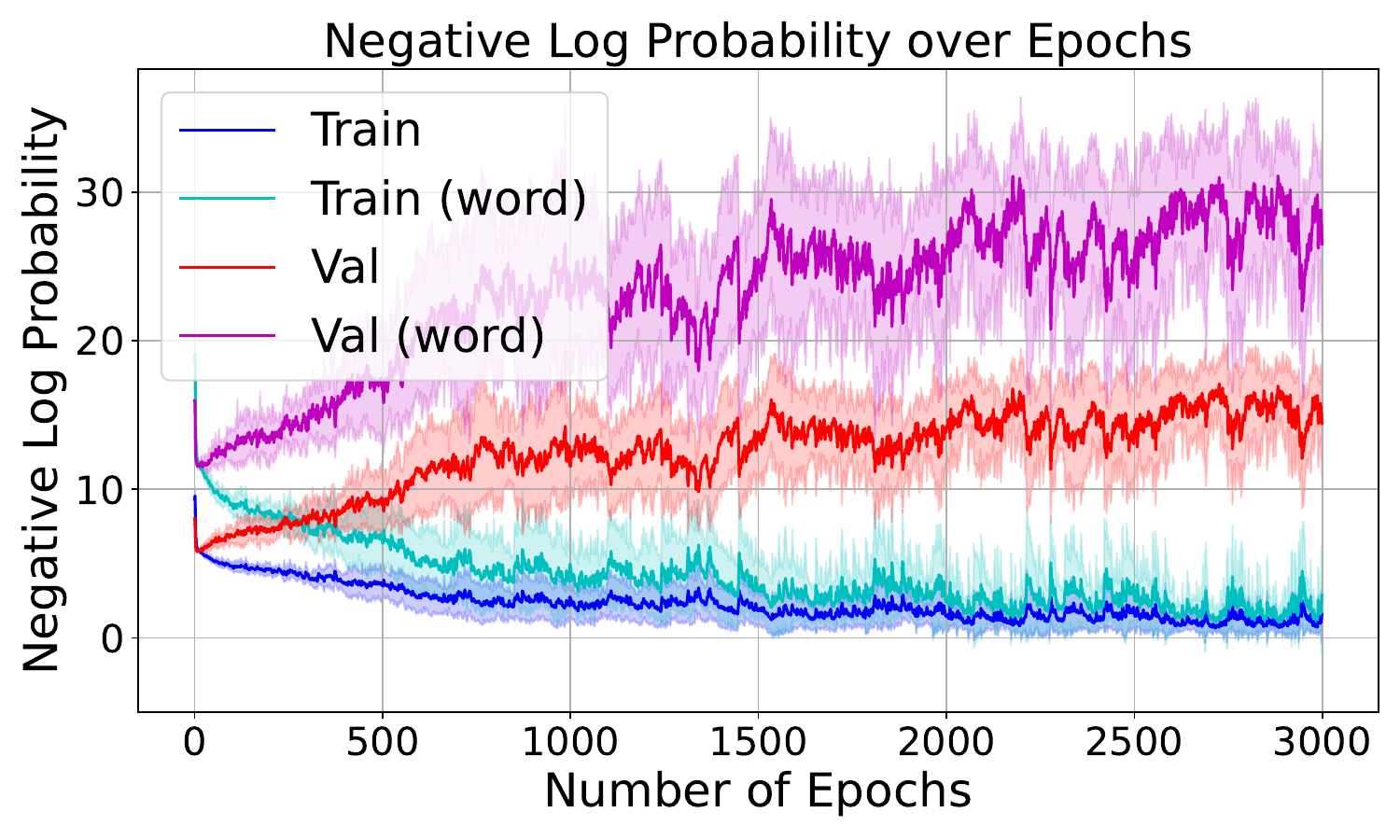}
    \caption{Entity length 2}
  \end{subfigure}
  \hfill
  \begin{subfigure}[b]{0.4\textwidth}
    \centering
    \includegraphics[width=\textwidth]{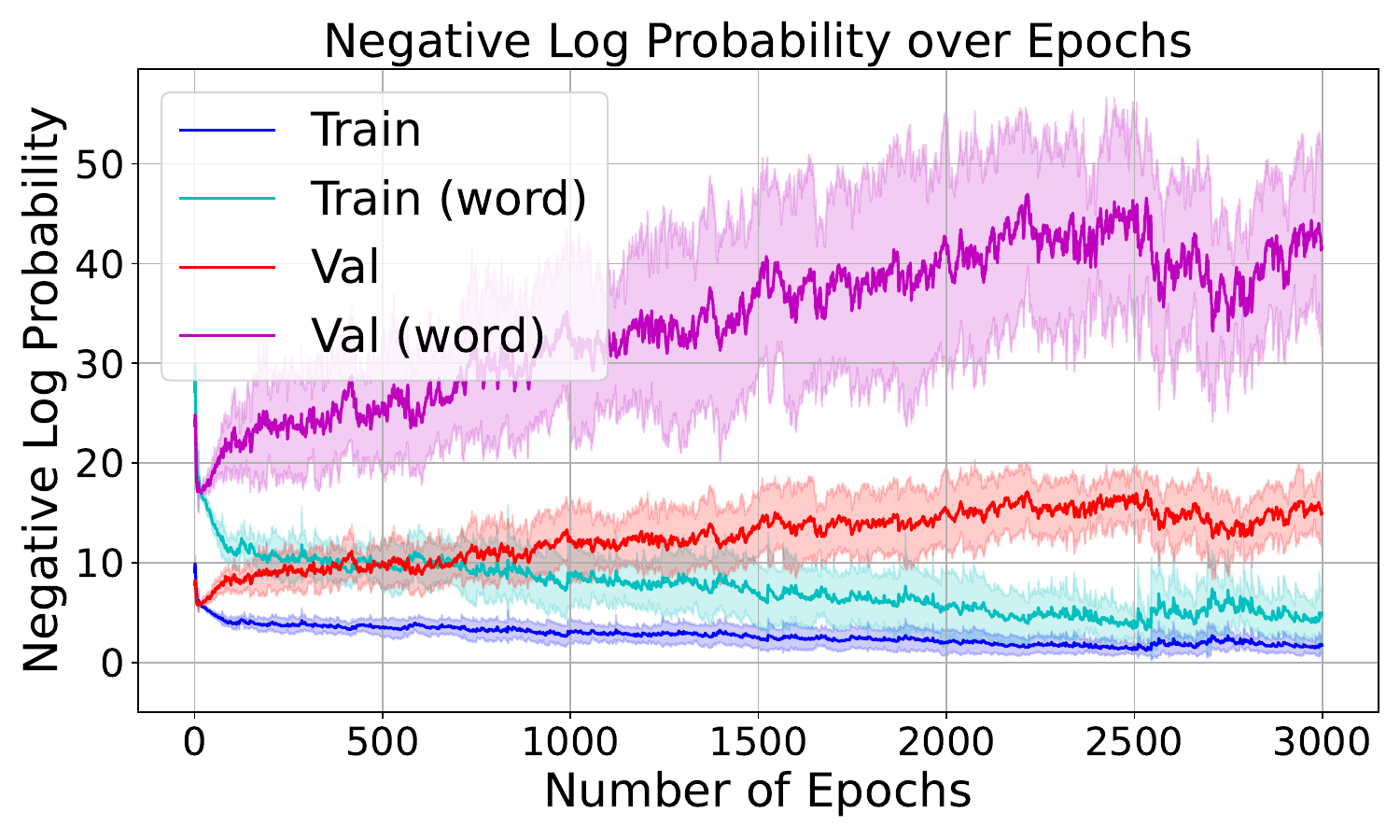}
    \caption{Entity length 3}
  \end{subfigure}
  \caption{Results for COT with different entity lengths. The setting and curves are similar to \Cref{fig:reverse_word}. All other configurations are set as default values as in \Cref{app:model_data_configs_table}. The training set sizes for the above two experiments are $1080$ and $400$, respectively, and the validation set sizes are $120$ and $50$.}
  \label{fig:cot_word}
\end{figure}

\begin{figure}[ht]
  \centering
  \includegraphics[width=10cm]{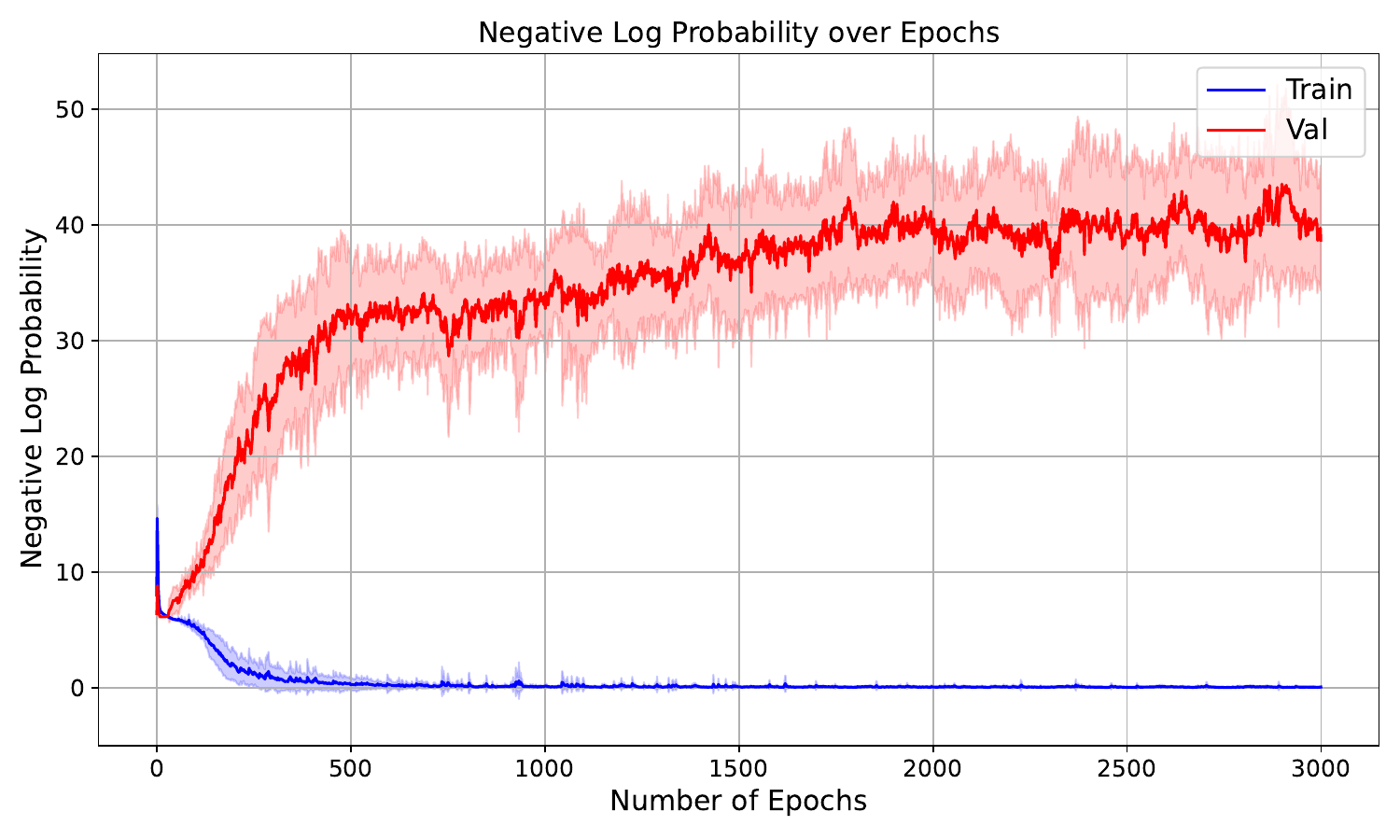}
  \caption{Results for COT with fixed token embedding and fixed positional embedding. All other configurations are set as default values as in \Cref{app:model_data_configs_table}.} 
  \label{fig:cot_freeze_embed}
\end{figure}

\begin{figure}[htbp]
  \centering
  \begin{subfigure}[b]{0.4\textwidth}
    \centering
    \includegraphics[width=\textwidth]{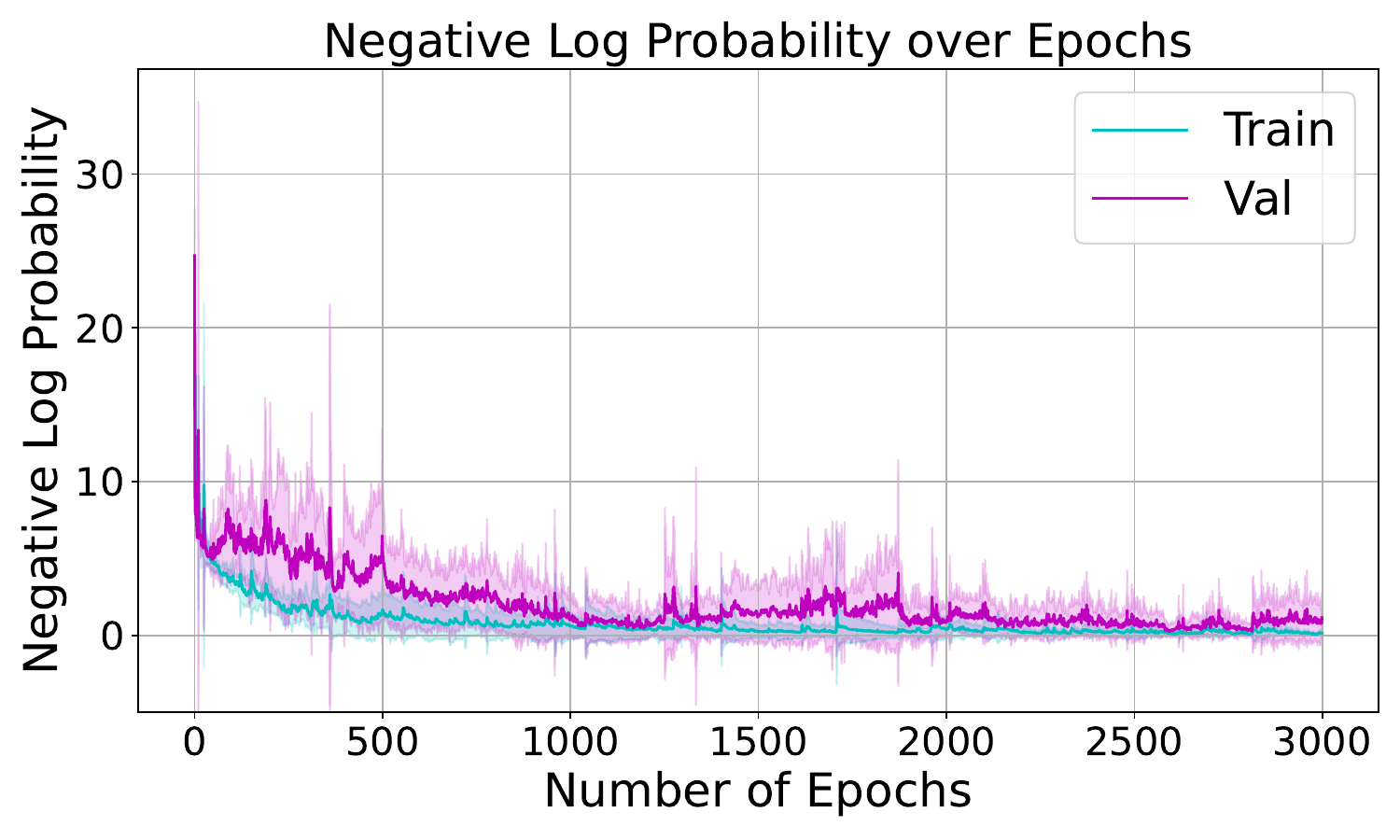}
    \caption{Training set size 160}
  \end{subfigure}
  \hfill
  \begin{subfigure}[b]{0.4\textwidth}
    \centering
    \includegraphics[width=\textwidth]{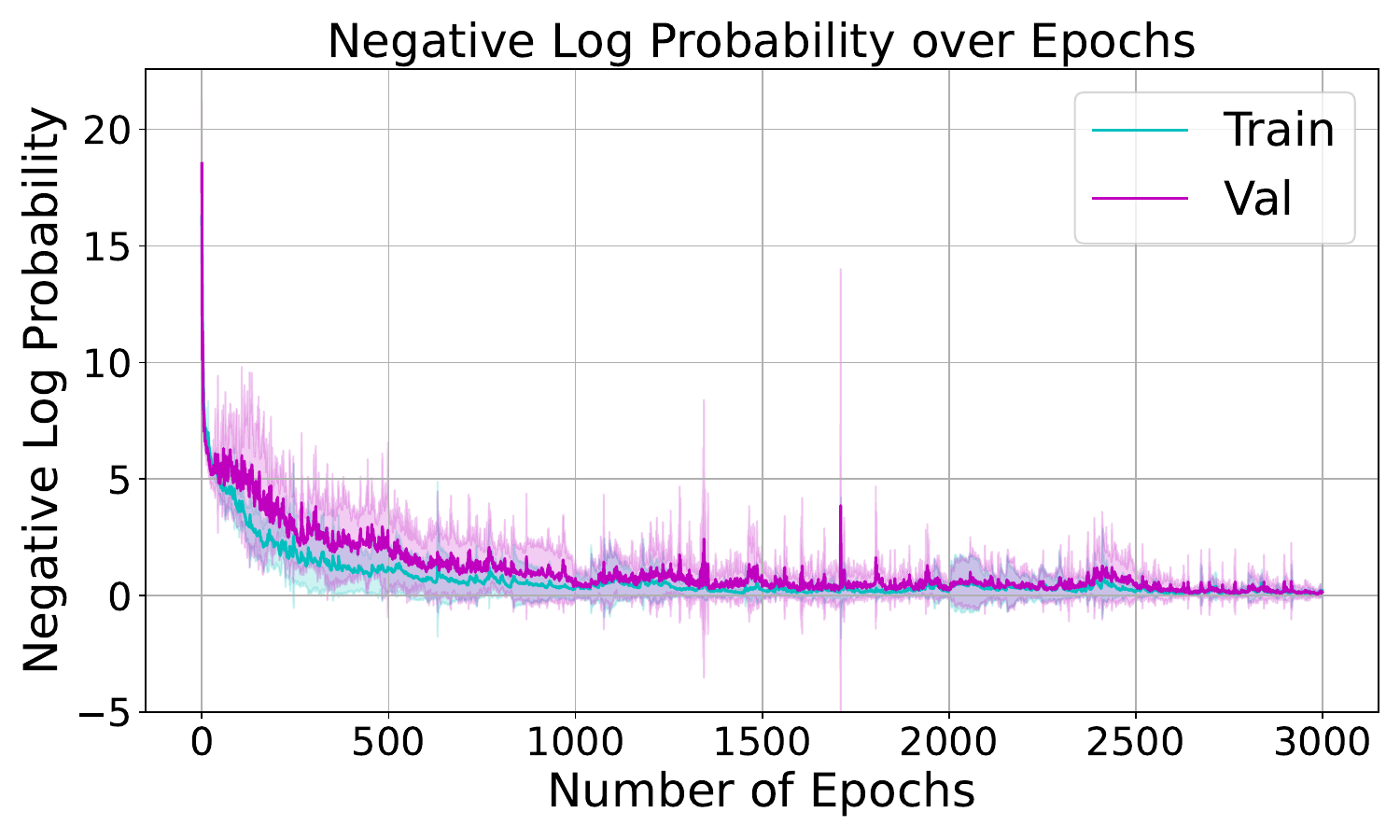}
    \caption{Training set size 250}
  \end{subfigure}
  \\
  \begin{subfigure}[b]{0.4\textwidth}
    \centering
    \includegraphics[width=\textwidth]{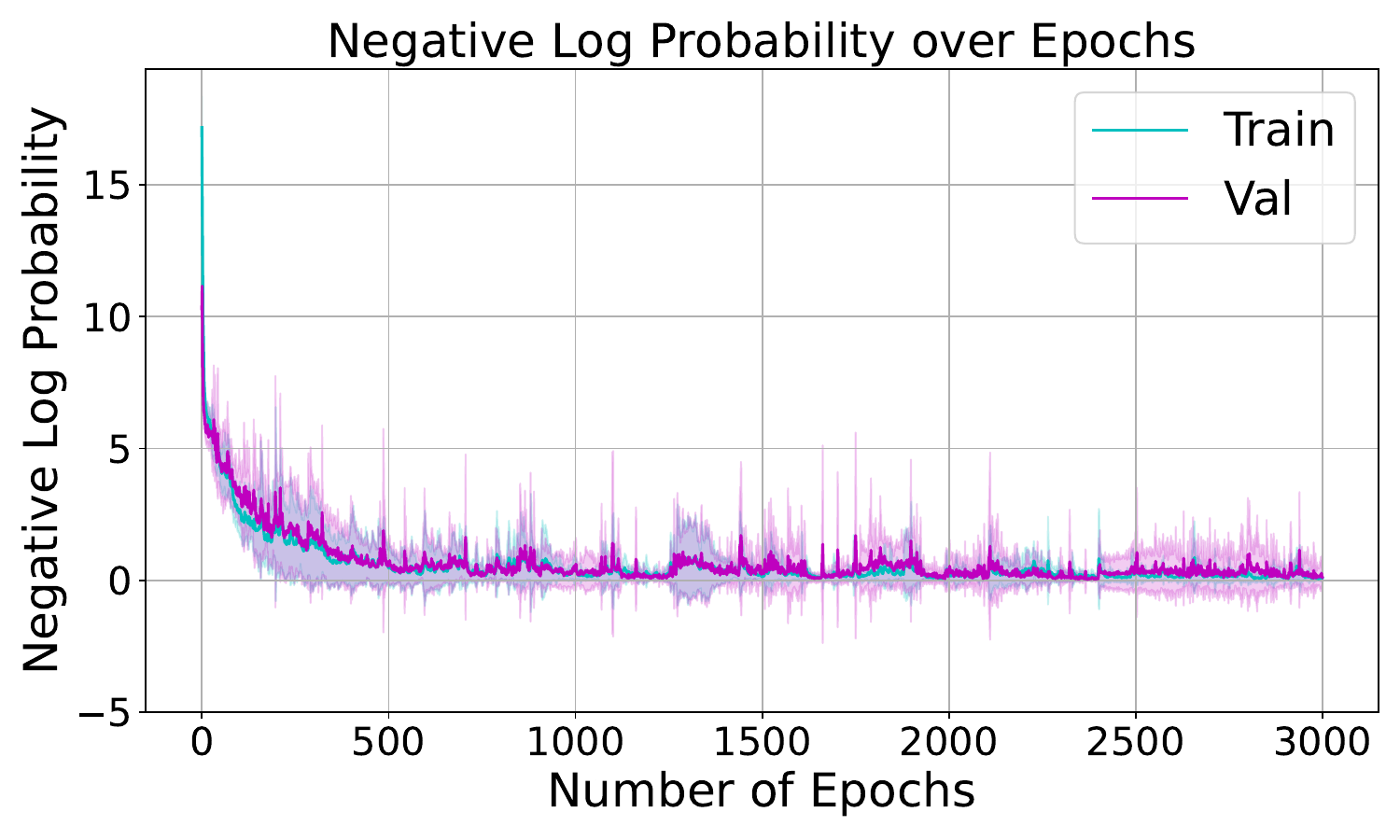}
    \caption{Training set size 400}
  \end{subfigure}
  \hfill
  \begin{subfigure}[b]{0.4\textwidth}
    \centering
    \includegraphics[width=\textwidth]{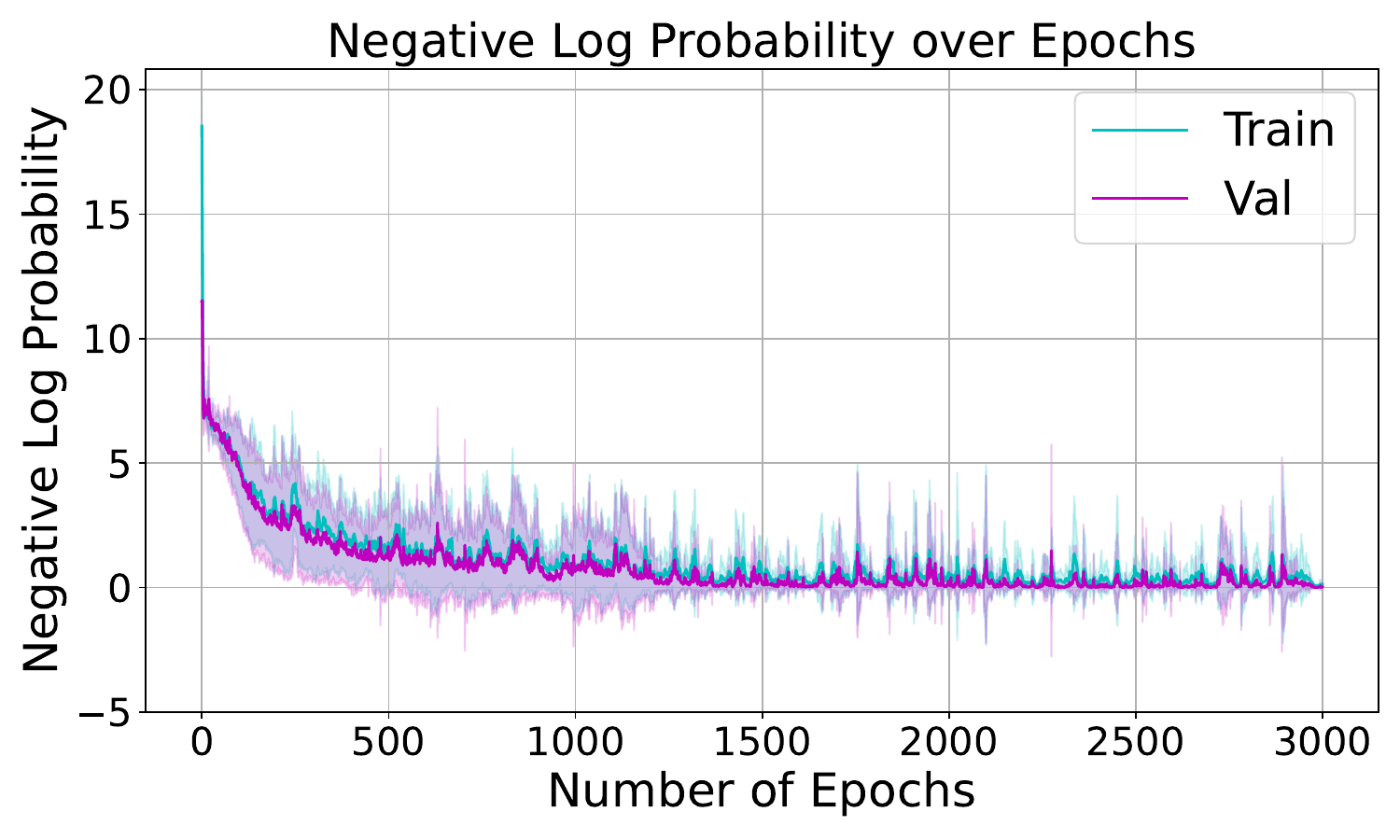}
    \caption{Training set size 1600}
  \end{subfigure}
  \caption{Results for COT where each entity is represented by two tokens, i.e., $\tokenA\tokeni$, $\tokenB\tokeni$, or $\tokenC\tokeni$. The validation set sizes are 50. The model is able to ``deduce'' unseen $\tokenA\tokeni\ito\tokenC\tokeni$ by learning underlying patterns.}
  \label{fig:chain_v2}
\end{figure}

\end{document}